\newtheorem{proposition}{Proposition}
\newtheorem{theorem}{Theorem}
\newtheorem{lemma}{Lemma}
\newtheorem{assumption}{Assumption}
\theoremstyle{definition}
\theoremstyle{remark}
\newtheorem{remark}{Remark}
\newcommand{\tr}{\operatorname{Tr}}
\newcommand{\modified}[1]{{\color{black}{#1}}}
\DeclareMathOperator{\median}{median}
\begin{document}
	\title{Distributed Information-based Source Seeking}
	
	\author{Tianpeng Zhang$^{1}$,
	    Victor Qin$^{2}$,
        Yujie Tang$^{3}$,
         Na Li$^{1}$

\thanks{The work is supported by AFOSR YIP FA9550-18-1-0150, ONR YIP N00014-19-1-2217, and NSF CNS: 2003111.}
\thanks{$^{1}$ T. Zhang and N. Li are with Harvard School of Engineering and Applied Sciences, Cambridge, MA, USA. Emails: {\tt\small tzhang@g.harvard.edu, nali@seas.harvard.edu.} }
\thanks{$^2$ V. Qin is with MIT School of Engineering,
Cambridge, MA, USA. Email: {\tt\small victorqi@mit.edu.}}
\thanks{$^3$ Y. Tang is with the Department of Industrial Engineering and Management, Peking University, Beijing, China. Email: {\tt\small yujietang@pku.edu.cn.}}
}

	\maketitle
	
\begin{abstract}
	 In this paper, we design an information-based multi-robot source seeking algorithm where a group of mobile sensors localizes and moves close to a single source using only local range-based measurements. In the algorithm, the mobile sensors perform source identification/localization to estimate the source location; meanwhile, they move to new locations to maximize the Fisher information about the source contained in the sensor measurements. In doing so, they improve the source location estimate and move closer to the source. Our algorithm is superior in convergence speed compared with traditional field climbing algorithms, is flexible in the measurement model and the choice of information metric, and is robust to measurement model errors. Moreover, we provide a fully distributed version of our algorithm, where each sensor decides its own actions and only shares information with its neighbors through a sparse communication network. We perform extensive simulation experiments to test our algorithms on large-scale systems and implement physical experiments on small ground vehicles with light sensors, demonstrating success in seeking a light source.
\end{abstract}
	
	\IEEEpeerreviewmaketitle

\section{Introduction}

Multi-agent source seeking is a robotics task that uses autonomous vehicles with sensors to locate a source of interest whose position is unknown. The source of interest can be a light source\cite{duisterhof2021learning}, a radio signal transmitter\cite{xu_optimal_2019}, or a chemical leakage point \cite{khodayi-mehr_model-based_2019}. The source seeking vehicles, or mobile sensors, can measure the source's influence on the environment and use this information to locate the source. 

A large body of source seeking research investigates field climbing methods \cite{angelico_source_2019,bachmayer_vehicle_2002,ogren_cooperative_2004,moore_source_2010,li_cooperative_2014,brinon-arranz_distributed_2016}. Assuming the source signal gets stronger as the sensor-source distance shortens, the mobile sensors can ``climb" the source signal field to physically approach the source. These methods do not require explicit knowledge of the measurement model, making them easy to implement for different applications. However, field climbing methods are not necessarily the most effective source seeking methods. Firstly, they only exploit local information of the source field, with the typical requirement that sensors must maintain a tight formation to make a reasonable ascent direction estimate, as is the case in \cite{ogren_cooperative_2004,moore_source_2010}. Furthermore, the sensors cannot move
too fast as a group for the measurement value to increase stably

\begin{figure}
		\centering
		\includegraphics[width=0.65\linewidth]{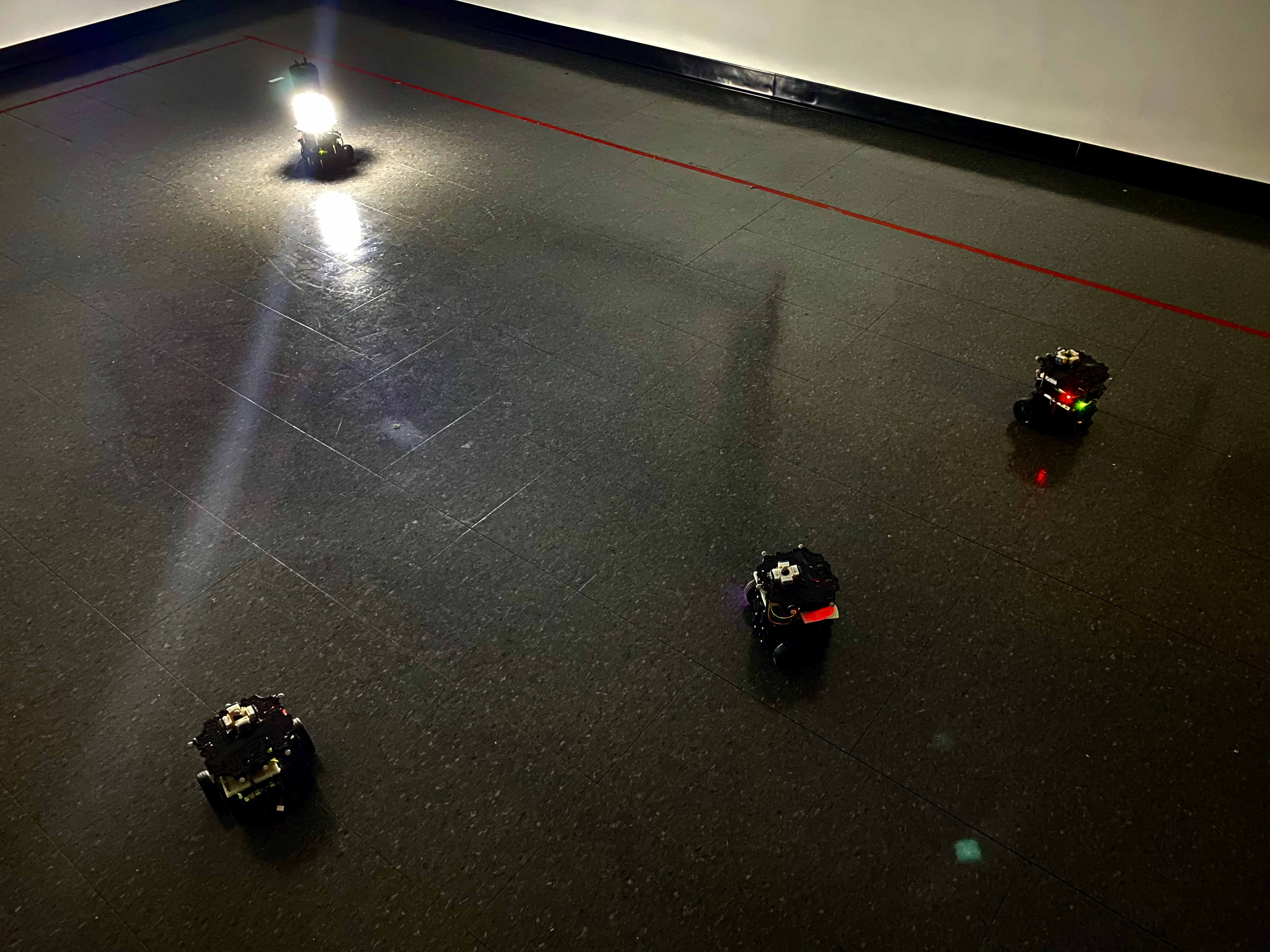}
		\caption{A snapshot of seeking a light source in a dark room with three mobile light sensors.}
	\label{fig:teaser}
	\vspace{-15pt}
\end{figure}

An alternative approach is to perform source identification/localization using various estimation methods, such as (Extended) Kalman Filter (EKF)\cite{morbidi_active_2013,martinez_optimal_2006}, Particle Filter (PF)\cite{bayat_optimal_2016,hoffmann_mobile_2010}, and so on, to estimate the source location over time. 
These methods enable the fusion of measurements from multiple sensors to identify a global view of the measurement field. A central theme in this line of work is to improve the estimation through sensor movements. Many have proposed to move the sensors to optimize specific information metrics, in particular, variants of Fisher information measures\cite{martinez_optimal_2006,khodayi-mehr_model-based_2019,moreno-salinas_optimal_2013,xu_optimal_2019,bayat_optimal_2016,bishop_optimal_2008,lee_active_2018,ponda_trajectory_2009}, which relate closely to the famous Cram\'{e}r-Rao lower bound\cite{cramir1946mathematical,rao1945information}. 
However, these studies typically focus on deriving closed-form solutions of optimal sensor placement for particular types of measurement models \cite{xu_optimal_2019, bayat_optimal_2016,lee_active_2018}. Their results are usually not generalizable or robust to modeling error, and closed-form solutions might not exist for many general measurement models.

This paper draws advantages from the abovementioned methods (field climbing and source localization) to develop multi-robot source-seeking algorithms. In particular, we assume the measurement model is available and propose an algorithm using range-based measurements. Each iteration of the algorithm consists of three steps: 
\begin{enumerate}
    \item Collect range-based measurements.
    \item Perform source location estimation.
    \item Move along the negative gradient direction of information-based loss function $L$, defined as the trace of the inverse of Fisher information. 
\end{enumerate}
In particular, step 3) seeks to increase the information about the source location in the measurement rather than looking for stronger source signals as in field climbing. Therefore, we name our method \textbf{information-based source seeking}.

\textit{Our contributions:} We first introduce our information-based method in the centralized setting (Section \ref{sec:info-src-seeking}), where we formally define the loss function $L$ and the three-step outline of our method. We also provide theoretical justifications for the choice of loss function $L$, as minimizing $L$ improves the estimation and gets the sensors close to the source location. Then we extend our method to the distributed setting (Section \ref{sec:distr-src-seeking}), which makes our method scalable in the number of sensors. We show how distributed estimation and gradient calculation can be effectively done using simple consensus schemes. In Section \ref{sec:experiment} we conduct extensive numerical experiments to study the performance of our method in both centralized and distributed settings. The results show that our method outperforms field climbing methods and is flexible and robust in multiple aspects. Lastly, 
we implement our algorithm on small ground vehicles carrying light sensors to seek a light source in a dark room (Section  \ref{sec:real-lab-demo}), as shown in Fig. \ref{fig:teaser}. The hardware implementation further demonstrates the effectiveness of our algorithm.

The advantages of our methods can be summarized as
\begin{itemize}
    \item Compared with field climbing algorithms, numerical studies quantitatively demonstrate that our algorithm converges much faster to the source and performs more consistently over repeated trials. See \ref{section:sensor-swarms}. The algorithm takes advantage of the sensing capacity of multi-sensors in the sense that the performance improves as the number of sensors increases.
    \item Our algorithm, especially its gradient-guided movement, provides flexibility in handling various measurement models and picking different information metrics as loss functions. Section \ref{sec:Metric-Compare} confirms at least three applicable metrics.
    \item The algorithm is more robust to modeling error than the source localization with stationary sensors, see \ref{sec:robustness_modeling_error}. 
    \item Our distributed algorithm achieves comparable performance as the centralized algorithm. See \ref{sec:distr-experiment}. Moreover, results show that the distributed algorithm is more robust than the centralized implementation to the error in initial guesses and communication delay. See \ref{sec:sensitive-initial-guess}, \ref{sec:delay}.
\end{itemize}
This paper is the journal extension to our previous conference work \cite{Zhang2021SourceSeeking}. Compared to \cite{Zhang2021SourceSeeking}, besides including more technical details for the performance of the methods, this paper extends the centralized algorithm in \cite{Zhang2021SourceSeeking} to the distributed setting, where a central controller is absent, and the sensors decide their actions individually, as described in Section \ref{sec:distr-src-seeking}. Numerical results in Section \ref{sec:distr-experiment}$\sim$\ref{sec:delay} provide detailed comparisons between the centralized and distributed algorithm. Moreover, the hardware implementation migrates from the centralized robotics platform in the previous work to a distributed robotics platform in this paper, with the programming environment switching from ROS 1 to ROS 2. 
\modified{Lastly, although the algorithms are designed following a rigorous theoretical framework of information and optimization methods, the global theoretical convergence remains an open question because of the nonlinearity and nonconvexity associated with the problem. The proof of global convergence is, therefore, left for future work.
}

\subsection{Related Work}
\textit{Field climbing methods.} Field climbing originates from scientific studies of animal behavior in exploring nutrition or chemical concentration fields\cite{gazi_stability_2002, okubo_dynamical_1986}. The studies inspire source seeking methods that use field value measurements to estimate the field gradient and apply formation control to climb along the gradient\cite{ogren_cooperative_2004, bachmayer_vehicle_2002}. Gradient-free field climbing methods are also studied in the literature: \cite{moore_source_2010} maintains the sensors in a circular formation and uses measured field values as directional weights to guide the overall movement. Later works such as \cite{li_cooperative_2014, brinon-arranz_distributed_2016} extend the algorithms above to the distributed setting by applying consensus in the calculation of ascent direction. Single-agent field climbing is also possible if a gradient ascent control law is combined with proper zeroth-order gradient estimation algorithms, for example, using the control framework in \cite{baronov2011motion}.

The main differences between field climbing and our algorithm are that:
\begin{enumerate}
    \item Field climbing maximizes the source field value directly, while our algorithm exploits the Fisher information, an indicator of both estimation accuracy and source-sensor distance
    \item Field climbing does not assume a given measurement function, but ours does
    \item Field climbing algorithms often require a tight sensor formation for a stable field ascent, which only exploits local information; while our algorithm uses the sensors to collect global information for source localization. 
\end{enumerate}
We will show in Section \ref{sec:experiment} that, unlike field climbing, the sensors under our algorithm tend to spread out so that measurements contain more diverse information about the source location.

\textit{Source localization and optimal sensor placement.} If a measurement model is available, the source location can be estimated using methods including EKF\cite{morbidi_active_2013,martinez_optimal_2006,mavrommati2017real,abraham2018decentralized}, PF\cite{hoffmann_mobile_2010,bayat_optimal_2016}, and so on. It is even possible to reconstruct the entire source field  \cite{khodayi-mehr_distributed_2018,khodayi-mehr_model-based_2019}. Such estimation of source location is also known as source localization. 
An important problem in source localization is how to improve estimation via sensor movements. This problem is often investigated under the optimal sensor placement framework, in which sensors move to optimize various information metrics, including covariance \cite{yang_performance_2012,ke_zhou_multirobot_2011}, mutual information\cite{hoffmann_mobile_2010,charrow_information-theoretic_2015}, and Fisher information measures\cite{martinez_optimal_2006,khodayi-mehr_model-based_2019,moreno-salinas_optimal_2013,xu_optimal_2019,bayat_optimal_2016,bishop_optimal_2008,lee_active_2018,ponda_trajectory_2009,mavrommati2017real,abraham2018decentralized} such as the determinant of Fisher information, the largest eigenvalue of its inverse, and the trace of its inverse (D-, E-, and A-optimality criterion, respectively).

Our method also employs Fisher information but is different from the above works in 4 major ways:
\begin{enumerate}
    \item Most of the previous works focus on deriving closed-form solutions of optimal sensor placement for a particular type of measurement model, for example, RSS\cite{xu_optimal_2019}, pollutant diffusion\cite{bayat_optimal_2016}, and gamma camera\cite{lee_active_2018}. Our focus is not on providing closed-form solutions. Instead, we provide a gradient-based method applicable to a large class of range-based measurement models. This method also provides flexibility in choosing different information metrics as loss functions.
    \item Many previous works are about finding the optimal angular placement of the sensors at a fixed distance to the source or in a restricted area\cite{moreno-salinas_optimal_2013,xu_optimal_2019,ponda_trajectory_2009,martinez_optimal_2006}. In contrast, we allow the sensors to move freely and eventually reach the source.
    \item Some studies relax the restrictions on the sensor movement\cite{bishop_optimal_2008,lee_active_2018}. However, these methods produce a spiraling sensor movement which is inefficient for source seeking purposes. Our method does not produce such movement.
    \item The line of work by \cite{mavrommati2017real,abraham2018decentralized} also allows the sensors to move freely, but the control framework is very different from our work. The proposed ergodic control, which involves improving a control objective in integral form, is conceptually much more complex than our method--a gradient-based control. Ultimately, the ergodic control is designed for a very general set of mobile sensor applications, while our work is catered for the source seeking problem.
\end{enumerate}

\textit{Bayesian inference and optimization. }The recent advances in Bayesian learning have inspired many source seeking studies to adopt the Bayesian methods \cite{prabowo2020bayesian,sanchez2018probabilistic,benevento2020multi}. These studies view the environment as a field characterized by an (unknown) density function related to measurement and use a Gaussian Process or other likelihood models as a surrogate to guide the sensor movements for new measurement collections. 
The computation (for running the posterior update and Bayesian optimization) and memory (for storing historical measurements as in non-parametric Bayesian methods) demands of these methods are usually much higher than those of our method. Overall, Bayesian-based methods and our method are designed from different principles. A detailed comparison is left for future work.  

Lastly, it is worth mentioning that our work is significantly inspired by \cite{martinez_optimal_2006}, which studies the optimal sensor placement problem on a surveillance boundary. We leverage the ideas in \cite{martinez_optimal_2006} to introduce the Fisher information in our objective. However, we change the objective from maximizing the determinant of Fisher information to minimizing the trace of its inverse. We also generalize the measurement model to fit into our experiments and other real-world measurement settings. To handle such more general measurement models, we let the sensors only compute the gradient of the loss function rather than solving for its optimum at each time step.

\section{The Problem Statement}
Consider the problem of using a team of mobile sensors to find a source whose position is unknown. The source can be any object of interest, such as a lamp in a dark room. Mobile sensors are robotic vehicles that can measure the influence of the source on the environment, such as small ground vehicles carrying light sensors. 
	
Specifically, we use $q\in\mathbb{R}^k$ to denote the source position, and use $p_1,p_2,\ldots,p_m\in\mathbb{R}^k$ to denote the sensor positions, where $m$ is the number of sensors and $k$ is the spatial dimension. The measurement $y_i$ made by the $i$th sensor is modeled by
\begin{equation}\label{MeasurementFunctionDef}
	y_i = h_i(p_i,q) + \nu_i,
\end{equation}
where $h_i:\mathbb{R}^k\times\mathbb{R}^k\rightarrow \mathbb{R}$ is a known continuously differentiable function and $\nu_i$ is the measurement noise. The value of $h_i$ depends on the position of sensor $i$ but not on other sensors, and different sensors may have different $h_i$.  We let $\mathbf{y}=[y_1,y_2,\ldots,y_m]^\top$ denote the vector of all measurement values, let $\mathbf{\nu}=[\nu_1,\nu_2,\ldots,\nu_m]^\top$ be the noise vector,  and use $\mathbf{p}=
[p_1^\top, p_2^\top, \ldots, p_m^\top
]^\top$ to denote the joint location of all mobile sensors.  

We let $H:\mathbb{R}^{mk}\times\mathbb{R}^k\rightarrow\mathbb{R}^m$ be the mapping that describes the joint measurement made by all the sensors, i.e.,
\begin{equation}\label{eq:MeasurementFunc}
		H(\mathbf{p},q) = \begin{bmatrix}
			h_1(p_1,q)\\
			\vdots\\
			h_m(p_m,q)
		\end{bmatrix}.
\end{equation}
We define the global measurement model as
\begin{equation}\label{GlobalMeasurementModel}
\mathbf{y} = H(\mathbf{p},q)+\mathbf{\nu}.
\end{equation}
This model relates all measurements to the source location and all sensor locations.

The source seeking objective is to have at least one of the sensors get within $\epsilon_0$ distance to the source. In other words, we want to achieve
\begin{equation}\label{SourceSeekingGoal}
	\left\|p_i-q\right\|\leq \epsilon_0, ~ \exists i\in \{1,2,...,m\},
\end{equation} 
where $\epsilon_0$ is some small positive number.

\section{Information-based Source Seeking}\label{sec:info-src-seeking}
This section presents the key components of our information-based source seeking, including the three-step algorithmic flow and the loss function design. We assume a central server gathers all sensor locations $p$ and measurements $y$ and decides on new locations for all the sensors. We will remove this assumption in Section \ref{sec:distr-src-seeking}, where we introduce the distributed setting.

\subsection{The Algorithm}
Our information-based source seeking consists of three consecutive steps in one iteration: measurement, source location estimation, and sensor movement. The algorithm is illustrated in Figure~\ref{fig:algdiagram} and detailed in Algorithm \ref{alg:centralized-alg}. 

\begin{figure}[ht]
	\centering
	\vspace{5pt}
	\includegraphics[width=0.95\linewidth]{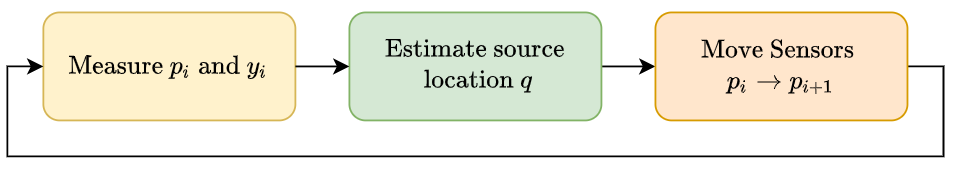}
	\vspace{3pt}
	\caption{The three consecutive steps in information-based source seeking.
	}
	\label{fig:algdiagram}
\end{figure}

\begin{algorithm}[b]
	\caption{Information-based Source Seeking}\label{alg:centralized-alg}
	\begin{algorithmic}[1]
		 \renewcommand{\algorithmicrequire}{\textbf{Input:}}
		\renewcommand{\algorithmicensure}{\textbf{Result:}}
		\REQUIRE
		Small constant $\epsilon_0>0$, the location estimator $E:(\mathbf{y},\mathbf{p})\mapsto \hat{q}$, motion planner $MP:(\tilde{p}_i(0),\tilde{p}_i(1),\ldots,\tilde{p}_i(T))\mapsto (u_i(1),\ldots,u_i(T))$.
		\REPEAT

		\STATE Get sensor locations $p_i$ from all mobile sensors $i$, forming $\mathbf{p}=\left[p_1^\top,p_2^\top,\ldots,p_m^\top\right]^\top$. 
		\STATE Get measurement $y_i$ from all mobile sensors $i$, forming $\mathbf{y}=[y_1,y_2,\ldots,y_m]^\top$.
		\STATE Estimate the location of the source by $\hat{q}\gets E(\mathbf{y},\mathbf{p})$.
		\STATE Set the initial waypoints  $\tilde{\mathbf{p}}(0)\gets \mathbf{p}$.
		\FOR{$t=1$ \TO $T$}
			\STATE $\tilde{\mathbf{p}}(t+1) \gets \tilde{\mathbf{p}}(t)-\alpha_t M_t \nabla_\mathbf{p} L|_{\tilde{\mathbf{p}}(t),\hat{q}}$.
		\ENDFOR
		\STATE Extract waypoints $(\tilde{p}_i(t))_{t=0}^T$ from $(\tilde{\mathbf{p}}(t))_{t=0}^T$ for mobile sensor $i$, and generate the control inputs	$(u_i(1),\ldots,,u_i(T))\gets MP(\tilde{p}_i(0),\ldots,\tilde{p}_i(T))$.
		\STATE Each mobile sensor $i$ executes control input $u_i(1)$.
		\UNTIL $\min_{i=1,2...m}\{||p_i-\hat{q}||\}\leq\epsilon_0$
	\end{algorithmic}
\end{algorithm}

\noindent\textbf{1) Measurement (Line 2--3)
}: The mobile sensors report their locations $\{p_i\}_{i=1}^m$ and latest measurements $\{y_i\}_{i=1}^m$ to the central server, forming the location vector $\mathbf{p}$ and the measurement vector $\mathbf{y}$.

\noindent\textbf{2) Source Location Estimation (Line 4)}: Using the information available (i.e., $\mathbf{p}$ and $\mathbf{y}$), the location estimation algorithm $E$ generates the estimated source location $\hat{q}$. 

The Extended Kalman Filter (EKF) is employed as the estimation algorithm in our implementations. We define $z:=[q^\top,v^\top]^\top$ as the source state, where $v$ is the velocity of the source. We do not assume any prior knowledge about the motion of the source, except that it evolves according to the second-order dynamics defined by
\begin{equation}\label{eq:zfz}
	\begin{aligned}
		z &:= \begin{bmatrix}
			q\\
			v
		\end{bmatrix} 	\\
		z^{+}&=f(z):= \begin{bmatrix}
			q+v\\
			v
		\end{bmatrix}. 	
	\end{aligned}
\end{equation}
Equation \eqref{eq:zfz} models the source moving at some unknown but constant velocity $v$. The applicability of this motion model is backed by extensive practical evidence in GPS problems \cite{brown1997introduction,wickert2018exploring} and moving target tracking\cite{shi2020sequential}.
In general, it is sensible to make a minimal assumption as above on the motion model in source localization.
While the actual dynamics of the source might deviate from the model in Equation \ref{eq:zfz} - for example, in the experiments shown in Figure \ref{fig:moving_source_seeking} - the localization performance typically remains satisfactory.
\modified{Analogous to Equation \ref{eq:zfz}, another prevalent assumption is to define the state $z:=q$ so that the transition model is $z^+ = f(z):=z$(see the third paragraph of section IV.B in \cite{mavrommati2017real}), but we observe that the estimation from this model often fails to converge to the true source location. In comparison, estimation with the model in \eqref{eq:zfz} converges to the source location much more consistently.} 

At each time step, the EKF takes in the sensor measurements and sensor locations and returns an estimate $\hat{z}$ of the true source state $z$. Please see \cite[Definition 3.1]{reif1999stochastic} for details about the EKF update.

\vspace{2pt}
\noindent\textbf{3) Sensor Movement (Line 5-10):}
The sensors move along the gradient descent directions of the following loss function
\begin{equation}\label{FIMLossFunction_ALG}
	L(\mathbf{p},q)=\tr\left[\big(\nabla_q H(\mathbf{p},q)\cdot\nabla_q H(\mathbf{p},q)^\top\big)^{-1}\right].
\end{equation}
We will explain the motivation of this loss function later in Section~\ref{sec:info-objective}, by relating it to the Fisher information matrix and the Cram\'{e}r-Rao lower bound \cite{cramir1946mathematical,rao1945information}. Sensor movement includes two parts: waypoint planning and waypoint tracking. 

\paragraph{Waypoint planning} In Lines 5--8, with the source location estimate $\hat{q}$ obtained in Line 4, we generate a set of waypoints $\tilde{\mathbf{p}}(0),\ldots, \tilde{\mathbf{p}}(T)$ by applying gradient descent on $L$ with respect to $\mathbf{p}$, with $q=\hat{q}$ fixed. In Line 7, $\alpha_t>0$ is the step size and $M_t \succ 0$ is a directional regularization matrix. 

\paragraph{Waypoint tracking} 
In Lines 9--10, after the joint waypoints $\tilde{\mathbf{p}}(0),\ldots,\tilde{\mathbf{p}}(T)$ have been calculated by the gradient descent on $L$, we extract the waypoints $\tilde{p}_i(0),\ldots,\tilde{p}_i(T)$ for each mobile sensor $i$, and use the motion planner $MP$ to calculate a sequence of control inputs $u_i(1),\ldots,u_i(T)$ and apply the control at the first instance $u_i(1)$ to the corresponding mobile sensor.\footnote{This is inspired by the framework of receding horizon control/model predictive control\cite{mayne2014model} which works well when the tracking trajectory is time-varying. For more information about motion planning, please refer to Appendix \ref{append:motion-planning}.}  
\begin{remark}\label{remark:complement}
	Note the algorithm aims to minimize $L(\mathbf{p},\hat{q})$ rather than $L(\mathbf{p}, q)$ ($\mathbf{p}$ being the decision variable), which raises the question of whether improving the former leads to a decrease in the value of the latter. Intuitively, it is expected that if $\hat{q}$ and $q$ are sufficiently close to each other, i.e., the estimation error is small, then changing $\mathbf{p}$ to reduce the value of $L(\mathbf{p},\hat{q})$ will make $L(\mathbf{p},q)$ decrease. These two algorithm components rely on each other to function as a whole.
\end{remark}

\subsection{Information-based Loss Function for Sensor Movement}\label{sec:info-objective}
One naive approach to determining sensor movement after Step 2) could be heading directly toward the estimated location. However, we observed that such movement often makes sensors cluster tightly, which reduces the diversity in measurements and is undesirable. As we later show in Fig. \ref{fig:gostraightline_II} and \ref{fig:gostraightline_III}, the resulting estimation and seeking are unstable. 
	
Creating a good condition for estimation through sensor movements is crucial for successful source seeking. Therefore, instead of following the haphazard approach above, our algorithm takes the gradient steps in Step 3). The idea is to let the sensors maximize some information metrics related to the source location. Specifically, we consider the Fisher information about the source location defined as follows: let $\mathbf{y},\mathbf{p}$ be the joint measurement and joint position of \textbf{all} the sensors, and $H$ be the corresponding global measurement function, as in \eqref{GlobalMeasurementModel}. We treat $\mathbf{y}$ as a random vector, with joint probability density $f(\mathbf{y};q)$ parameterized by the true source location $q$. The Fisher information($\textup{FIM}$) about $q$ is considered to measure the amount of information about the unknown $q$ contained in measurement $\mathbf{y}$. It is defined as 
	
\begin{equation}\label{eq:Fisher-info}
		\textup{FIM}_{\mathbf{y}}(q):= \text{Cov}[\nabla_{q}\log f(\mathbf{y};q)|q].
\end{equation}

Fisher information is closely connected with estimation quality, as it is a lower bound on the covariance of unbiased estimators. This result is formally known as the Cram\'er-Rao Lower Bound.
	
\begin{theorem}[Cram\'{e}r--Rao Lower Bound (CRLB)\cite{cramir1946mathematical,rao1945information}]\label{theorem:CRLB}
		For any unbiased estimator $\hat{q}$ of $q$, the following matrix inequality holds
		\begin{equation}\label{eq:crlb}
			\mathbb{E}[(\hat{q}-q)(\hat{q}-q)^\top]\succeq \textup{FIM}^{-1}.
		\end{equation}
	\end{theorem}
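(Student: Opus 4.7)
The plan is to use the classical score function argument combined with a matrix Cauchy--Schwarz inequality. First I would introduce the score function $s(\mathbf{y};q) := \nabla_q \log f(\mathbf{y};q)$ and verify that, under standard regularity conditions permitting the interchange of differentiation and integration, it is mean-zero:
\[
\mathbb{E}[s] \;=\; \int \nabla_q f(\mathbf{y};q)\, d\mathbf{y} \;=\; \nabla_q \!\int f(\mathbf{y};q)\, d\mathbf{y} \;=\; \nabla_q 1 \;=\; 0.
\]
Together with \eqref{eq:Fisher-info} this yields the useful reformulation $\textup{FIM}(q) = \mathbb{E}[s\,s^\top]$.

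Next I would exploit unbiasedness. Since $\mathbb{E}[\hat{q}] = q$, differentiating both sides entrywise with respect to $q$ (and again swapping differentiation with integration) gives
\[
\mathbb{E}\!\left[(\hat{q}-q)\,s^\top\right] \;=\; I_k,
\]
where $I_k$ is the $k\times k$ identity. This ``cross-moment equals identity'' identity is the algebraic bridge between the estimator and the score.

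The last step is to invoke the matrix Cauchy--Schwarz inequality: for random vectors $X,Y$ with finite second moments, the joint second-moment block matrix is positive semidefinite, which via the Schur complement (assuming $\textup{FIM}$ is invertible) yields $\mathbb{E}[XX^\top] \succeq \mathbb{E}[XY^\top]\,(\mathbb{E}[YY^\top])^{-1}\,\mathbb{E}[YX^\top]$. Applying this to $X=\hat{q}-q$ and $Y=s$, the two previous identities plug in to give
\[
\mathbb{E}\!\left[(\hat{q}-q)(\hat{q}-q)^\top\right] \;\succeq\; I_k \cdot \textup{FIM}^{-1} \cdot I_k \;=\; \textup{FIM}^{-1},
\]
which is exactly \eqref{eq:crlb}.

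The main obstacle is not algebraic but analytic: the two differentiation-under-the-integral steps require regularity of the family $\{f(\cdot;q)\}$ (a common dominating measure, pointwise differentiability of $f$ in $q$, and an integrable envelope for $\nabla_q f$). These conditions are standard and are implicitly satisfied by the smooth measurement model in \eqref{GlobalMeasurementModel} with an absolutely continuous noise density; once they are stated, the rest of the proof is a two-line computation. A minor additional point is the positive-definiteness (hence invertibility) of $\textup{FIM}$, which should be assumed or deduced from the assumption that $\nabla_q H(\mathbf{p},q)$ has full row rank, matching the invertibility implicit in the loss function \eqref{FIMLossFunction_ALG}.
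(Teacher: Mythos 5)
Your proof is correct: it is the standard score-function argument (mean-zero score, the cross-moment identity $\mathbb{E}[(\hat q - q)s^\top]=I_k$ from unbiasedness, and the Schur-complement form of the matrix Cauchy--Schwarz inequality), and you correctly flag the two differentiation-under-the-integral steps and the invertibility of $\textup{FIM}$ as the only hypotheses needing care. The paper itself does not prove this theorem --- it states it as a classical result and cites Cram\'{e}r and Rao --- so there is no in-paper argument to compare against; your derivation is precisely the textbook proof those references contain.
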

	
As a special case, if we assume $\mathbf{y}$ follows multi-variate Normal distribution described by
\begin{equation}
    \mathbf{y} = H(q,\mathbf{p})+\mathbf{\nu},~\mathbf{\nu}\sim \mathcal{N}(0,R),
\end{equation}
	then the Fisher information takes the form of
\begin{equation}
    \textup{FIM}_{\mathbf{y}}(q) = 4\nabla_{q} H(q,\mathbf{p})R^{-1}\nabla_{q} H(q,\mathbf{p})^\top.
\end{equation}

Under the further assumption that $\mathbf{\nu}$ is i.i.d. Normal($R\propto I$), there is 
	 \begin{equation}\label{eq:FIM_prop}
	     \textup{FIM} \propto\nabla_q H(\mathbf{p},q)\cdot\nabla_q H(\mathbf{p},q)^\top.
	 \end{equation}
	
Motivated by the form of $\textup{FIM}$ in \eqref{eq:FIM_prop}, we use 
\begin{equation}\label{FIMLossFunction_Motivation}
		L(\mathbf{p},q)=\tr\left[\big(\nabla_q H(\mathbf{p},q)\cdot\nabla_q H(\mathbf{p},q)^\top\big)^{-1}\right]
	\end{equation}
as the loss function. Note that by minimizing \eqref{FIMLossFunction_Motivation}, the CRLB is driven closer to the zero matrix. Furthermore, if $H$ satisfies Assumption \ref{assumption:measurement} below, then minimizing $L$ also results in approaching the source as stated in Propositions \ref{prop:reach-the-source} and \ref{prop:reach-the-source-2} afterward.

\begin{assumption}\label{assumption:measurement}
We make the following assumptions on the measurement functions $h_i$:
\begin{enumerate}
	\item {\bf Isotropic measurement}: The measurement values depend only on source-sensor distance, i.e.,
		\begin{equation}\label{eq:isotropic_measurement}
				h_i(p_i,q)=g_i(\|p_i-q\|)=g_i(r_i)
		\end{equation}
	for some function $g_i:(0,+\infty)\rightarrow\mathbb{R}$, where $r_i \coloneqq \|p_i-q\|$.
	\item {\bf Monotonicity}: The absolute value of the derivative of each function, $|g_i'(r)|$, is monotonically decreasing in $r$. \modified{Here, $g_i'(r)$ is the derivative of $g_i$ with respect to $r$.} 
	{\item{\bf Non-degeneracy}: 	Let $\hat{r}_i$ denote the unit direction vector from the source to the $i$-th mobile sensor, i.e.,
		$\hat{r}_i  = (p_i-q)/\|p_i-q\|$. We assume $\sum_{i=1}^m \hat{r}_i\hat{r}_i^\top\succ 0$.}
\end{enumerate}
		
\end{assumption}
	
	\begin{proposition}[Reaching the Source I]\label{prop:reach-the-source}

	Under Assumption~\ref{assumption:measurement}, we have
		\begin{equation}
		 \frac{1}{m\cdot L(\mathbf{p},q) }\leq \max_{i} \left|g_i'(r_i)\right|^2 ,
	    \end{equation}
		which means that if $L(\mathbf{p},q)$ decreases, then $\max_i|g_i'(r_i)|$ tends to increase and consequently $ \min_i r_i$ tends to decrease.
	\end{proposition}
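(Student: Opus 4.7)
The plan is to reduce the inequality to a short Loewner-order argument after making the structure of $\nabla_q H$ explicit. First I would use the chain rule together with the isotropy assumption $h_i(p_i,q)=g_i(\|p_i-q\|)$ to write $\nabla_q h_i = -g_i'(r_i)\,\hat{r}_i$, so that the $k\times k$ matrix appearing inside $L$ becomes
$$ A \;:=\; \nabla_q H(\mathbf{p},q)\,\nabla_q H(\mathbf{p},q)^\top \;=\; \sum_{i=1}^m g_i'(r_i)^2\,\hat{r}_i\hat{r}_i^\top, $$
an outer-product-form matrix whose invertibility is guaranteed by Assumption~\ref{assumption:measurement}(3), so that $L=\tr(A^{-1})$ is well defined.

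Next I would upper-bound $A$ in the Loewner sense. Writing $G := \max_i|g_i'(r_i)|^2$, each summand satisfies $g_i'(r_i)^2\,\hat{r}_i\hat{r}_i^\top \preceq G\,\hat{r}_i\hat{r}_i^\top$, and because each $\hat{r}_i\hat{r}_i^\top$ is a unit-norm rank-one projector we have $\sum_i \hat{r}_i\hat{r}_i^\top \preceq mI$. Chaining these yields $A \preceq mG\,I$, so $A^{-1} \succeq (mG)^{-1}I$, and taking the trace gives
$$ L(\mathbf{p},q) \;=\; \tr(A^{-1}) \;\geq\; \frac{k}{mG} \;\geq\; \frac{1}{mG}, $$
which after rearrangement is precisely $1/(mL) \leq \max_i|g_i'(r_i)|^2$.

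For the qualitative ``consequently'' statement, I would invoke the monotonicity assumption. Since $|g_i'(\cdot)|$ is decreasing in its argument, any rise in $\max_i|g_i'(r_i)|$---which the first inequality forces whenever $L$ drops---can only come from the argmax sensor $i^\ast$ shortening its distance $r_{i^\ast}$ to the source; in particular, since $\min_i r_i \leq r_{i^\ast}$, the minimum distance also tends to shrink.

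The argument is essentially linear algebra, so I do not anticipate any genuinely hard step. The only pieces requiring care are (i) the sign in the chain-rule computation of $\nabla_q \|p_i-q\|$, which is harmless since only the outer product enters $A$, and (ii) explicitly invoking Assumption~\ref{assumption:measurement}(3) to ensure $A$ is invertible before passing to $A^{-1}$.
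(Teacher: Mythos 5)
Your proposal is correct and follows essentially the same route as the paper: both derive $\nabla_q h_i=-g_i'(r_i)\hat r_i$, write $\mathrm{FIM}=\sum_i|g_i'(r_i)|^2\hat r_i\hat r_i^\top$, and bound it by $\max_i|g_i'(r_i)|^2$ times the sum of rank-one projectors, whose relevant eigenvalue is at most $m$. The only cosmetic difference is that you invert the Loewner bound $A\preceq mG\,I$ directly (via $\lambda_{\max}$), whereas the paper passes through $1/L\leq\lambda_{\min}(\mathrm{FIM})$; both chains land on the same inequality, and your intermediate $L\geq k/(mG)$ is in fact marginally sharper.
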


 \begin{proposition}[Reaching the Source II]\label{prop:reach-the-source-2}
     Assume $|g_i'(r_i)|$ is \textbf{strictly} monotone in $r_i$ in Assumption \ref{assumption:measurement}. Then if the sequence of sensor locations $\{\mathbf{p}(t):t=1,2,3,...\}$ converges and satisfies $$\lim_{t\rightarrow\infty} L(\mathbf{p}(t),q) = \inf_{\mathbf{p}:p_{i}\neq q,\forall i} L(\mathbf{p},q),$$ then there must be $$\lim_{t\rightarrow \infty}\min_i ||p_i(t)-q|| = 0. $$
 \end{proposition}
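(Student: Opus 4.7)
I would argue by contradiction. Suppose $\lim_{t\to\infty}\min_i\|p_i(t)-q\|>0$. Since $\mathbf{p}(t)\to\mathbf{p}^*$ by hypothesis, continuity of the norm gives $p_i^*\neq q$ for every $i$, so $M(\mathbf{p}):=\nabla_q H\,\nabla_q H^\top$ and hence $L$ are well defined in a neighborhood of $\mathbf{p}^*$. Moreover, since $L(\mathbf{p}(t),q)\to\inf L$ must be finite (any configuration with linearly independent $\hat{r}_i$ and nonzero $g_i'(r_i)$ yields a finite $L$), $M(\mathbf{p}^*)$ must itself be invertible, and continuity of $L$ at $\mathbf{p}^*$ forces $L(\mathbf{p}^*,q)=\inf L$.

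The heart of the proof is a \emph{strict descent lemma}: at any admissible $\mathbf{p}$, moving just one sensor $p_j$ radially toward $q$ (keeping $\hat{r}_j=(p_j-q)/\|p_j-q\|$ fixed while strictly decreasing $r_j$) strictly decreases $L(\mathbf{p},q)$. To see this, observe that $\nabla_q h_i=-g_i'(r_i)\hat{r}_i$, so
\[
M(\mathbf{p})=\sum_{i=1}^m (g_i'(r_i))^2\,\hat{r}_i\hat{r}_i^\top.
\]
The radial perturbation leaves every $\hat{r}_i$ and every $r_i$ with $i\neq j$ unchanged, while strict monotonicity of $|g_j'|$ strictly increases $(g_j'(r_j))^2$ by some $c>0$. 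The Sherman--Morrison identity then gives
\[
\tr\bigl[(M+c\,\hat{r}_j\hat{r}_j^\top)^{-1}\bigr]=\tr(M^{-1})-\frac{c\,\|M^{-1}\hat{r}_j\|^2}{1+c\,\hat{r}_j^\top M^{-1}\hat{r}_j},
\]
and the subtracted term is strictly positive because $M\succ 0$ forces $M^{-1}\hat{r}_j\neq 0$.

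Applying the descent lemma at $\mathbf{p}^*$, and choosing the perturbation small enough that $\tilde p_j$ stays strictly between $p_j^*$ and $q$ and $M(\tilde{\mathbf{p}})$ remains invertible, I produce $\tilde{\mathbf{p}}$ with $L(\tilde{\mathbf{p}},q)<L(\mathbf{p}^*,q)=\inf L$, contradicting the definition of infimum. The main obstacle is the strict descent lemma, which is handled cleanly by the Sherman--Morrison identity together with strict monotonicity of $|g_j'|$; a secondary technical point is preserving non-degeneracy along the perturbation, which follows from continuity of $M$ near the non-degenerate $\mathbf{p}^*$.
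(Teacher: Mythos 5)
Your proof is correct, and it shares the paper's overall skeleton -- argue by contradiction, use convergence of $\mathbf{p}(t)$ and continuity of $L$ to conclude the infimum is attained at the limit point, then exhibit a strictly better configuration obtained by moving sensors radially toward $q$ -- but the key descent step is carried out differently. The paper moves \emph{all} sensors to a common radius $\epsilon/2$ along their rays, observes that $\textup{FIM}(\mathbf{b},q)\succeq\alpha\,\textup{FIM}(\bar{\mathbf{p}},q)$ with $\alpha=\min_i |g_i'(\epsilon/2)|^2/|g_i'(\bar r_i)|^2>1$ by strict monotonicity, and concludes $L(\mathbf{b},q)\le L(\bar{\mathbf{p}},q)/\alpha$ via the eigenvalue-sum formula $L=\sum_i 1/\lambda_i(\textup{FIM})$. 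You instead move a \emph{single} sensor, which turns the change in $\textup{FIM}$ into a rank-one update $M+c\,\hat r_j\hat r_j^\top$ and lets Sherman--Morrison deliver a strict decrease of $\tr(M^{-1})$ in closed form. Your version is arguably sharper as a standalone lemma -- it shows $L$ strictly decreases under \emph{any} radial contraction of \emph{any} single sensor, which is a local monotonicity statement with independent interest (e.g., it rules out interior stationary points with respect to radial moves), whereas the paper's uniform-contraction bound gives a clean quantitative factor $1/\alpha$ with less matrix machinery. Two minor remarks: your caveat about choosing the perturbation small enough to keep $M(\tilde{\mathbf{p}})$ invertible is unnecessary, since $M(\tilde{\mathbf{p}})=M(\mathbf{p}^*)+c\,\hat r_j\hat r_j^\top\succeq M(\mathbf{p}^*)\succ 0$ automatically; and your inference that $M(\mathbf{p}^*)$ is invertible because $\inf L$ is finite is sound but could be stated more directly from Assumption 1.3 together with $|g_i'(r_i^*)|>0$ for finite $r_i^*$, which is how the paper's Lemma 1 obtains $\textup{FIM}\succ 0$.
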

	Please see Appendix \ref{append:reach-the-source} for the proof\modified{ of Propositions \ref{prop:reach-the-source} and \ref{prop:reach-the-source-2}. }
 
 In Proposition \ref{prop:reach-the-source-2}, we assume \modified{that} the $\mathbf{p}(t)$ sequence converges for technical convenience, but it is also reasonable to assume the sensors will reach a stationary formation when they have minimized $L$. Also, the infimum of $L$ is taken over the set $\{\mathbf{p}:p_{i}\neq q,\forall i\}$ because the $\textup{FIM}$ could be undefined if $p_i=q$ for some $i$.

	
\begin{remark}Assumption 1.3 is a regularity condition that ensures $L$ is well-defined, which is essential for analyzing the relationship between $L$ and the source-sensor distance. We enforce this assumption in the implementations by adding a positive definition matrix $\delta I$, with $\delta>0$ being a small constant, to the estimated $\textup{FIM}$ when calculating $L$ and its gradient. This technique proves to eliminate most of the pathological numerical behaviors while maintaining the soundness of source-seeking performance.\end{remark}

\modified{\begin{remark}
Despite the properties demonstrated in Propositions \ref{prop:reach-the-source} and \ref{prop:reach-the-source-2}, showing the full convergence of Algorithm \ref{alg:centralized-alg} remains an open theoretical question. We leave this question to future work, but a heuristic justification for the convergence behavior is that climbing the information field and getting more accurate estimations of the source location complement each other (see Remark \ref{remark:complement}), so the $L$ value decreases over time. Therefore, the sensors get close to the source by the propositions above.
\end{remark}}

\section{Distributed Information-based Source Seeking}\label{sec:distr-src-seeking}
In this section, we present our distributed source seeking algorithm. We no longer assume there is a central controller. Instead, the mobile sensors decide on their actions individually. Meanwhile, the sensors can communicate through a network to exchange information about measurements, sensor positions, and estimations. We model the communication network by a directed graph, denoted by $G=(\mathcal{N},\mathcal{E})$, where $\mathcal{N}$ is the set of nodes(sensors) and $\mathcal{E}$ is the set of edges. If an edge $(i,j)\in \mathcal{E}$, then sensor $j$ can directly receive information from sensor $i$. Let $\mathcal{N}_j=\{i|(i,j)\in \mathcal{E}\}\cup \{j\}=\{j,i_1,i_2,...,i_{m_j}\}$ be the in-coming neighborhood of $j$, including $j$ itself. Here $m_j$ denotes the number of its incoming neighbors except itself. For each $(i,j)\in\mathcal{E}$, there is a consensus weight $w_{ji}>0$ satisfying $\sum_{i\in\mathcal{N}_i} w_{ji}=1$ for all $j$ (see \cite[Assumption 2.1-2.3]{olshevsky2009convergence} for conditions on the consensus weights).

We assume the information available to sensor $j$ includes those from its measurements and direct communication. In particular, mobile sensor $j$ knows the following information:

\begin{enumerate}
	\item The measurement functions in its neighborhood, or equivalently the local joint measurement function $H_j$
	
	\begin{equation}\label{eq:Hj}
	    	H_j(q,p_{j},p_{i_1},...,p_{i_{m_j}}):=\begin{bmatrix}
		h_{j}(q,p_{j})\\
		h_{i_1}(q,p_{i_1})\\
		\vdots\\
		h_{i_{m_j}}(q,p_{i_{m_j}})\\
	\end{bmatrix}.
	\end{equation}
	\item The joint position and joint measurement of its neighborhood(in bold font). \begin{equation}
	    \mathbf{p}_j:=[p_{j}^\top,p_{i_1}^\top,...,p_{i_{m_j}}^\top]^\top,\mathbf{y}_j:=[y_{j}^\top,y_{i_1}^\top,...,y_{i_{m_j}}^\top]^\top.
	\end{equation}
\end{enumerate}

 We assume the local joint measurement model for $j$ is 
\begin{equation}
	\mathbf{y}_j = H_j(q,\mathbf{p}_j)+\mathbf{\nu}_j,
\end{equation}
where $\mathbf{\nu}_j=[\nu_{j}^\top,\nu_{j_2}^\top,...,\nu_{j_w}^\top]^\top$ is a random vector following multi-variate Normal distribution $\mathcal{N}(0,R_j)$.

We assume the sensors act cooperatively, that is their objective is still to have some members in the team to reach the source, as defined in \eqref{SourceSeekingGoal}.

\subsection{The Distributed Algorithm}

		\begin{algorithm}[ht]
			\caption{Distributed Information-based Source Seeking(for sensor $j$) }\label{alg:distri-alg}
			\begin{algorithmic}[1]
				\renewcommand{\algorithmicrequire}{\textbf{Input:}}
				\renewcommand{\algorithmicensure}{\textbf{Result:}}
				\REQUIRE
				Consensus weights $\{w_{jk}\}_{k\in \mathcal{N}_j}$, consensus-based location estimator $E$, motion planner $MP$, termination threshold $\epsilon_0$.

				\STATE Initialize $\hat{z}:=[\hat{q}_j,\hat{v}_j]^\top$, $\hat{F}_j$. 
				\REPEAT

				\STATE Take measurements and communicate with neighbors to obtain $\mathbf{p}_j,\mathbf{y}_j, \{\hat{{z}}_k, \hat{F}_{k}\}_{k\in \mathcal{N}_j}$.

				\STATE Estimate the source location using consensus-based Kalman Filter as shown in (\ref{eq:consensus_KF}), $$\hat{q}\gets E\left(\mathbf{y}_j,\mathbf{p}_j,\{\hat{{z}}_k, w_{jk}\}_{k\in \mathcal{N}_j}\right).$$
				\STATE Record $F_j$ in the previous step $F_j^-\gets F_j$
				\STATE Compute the new partial $\textup{FIM}$ $$F_j\gets\nabla_{q}h_j(p_j,\hat{q})\nabla_{q}h_j(p_j,\hat{q})^\top$$
				\STATE Estimate the global $\textup{FIM}$ $$\hat{F}_j\gets [\sum_{k\in \mathcal{N}_j} w_{jk}\hat{F}_k] + F_j-F_j^-$$
				\STATE Set the initial waypoint  $\tilde{p}_j(0)\gets p_j$.
				\STATE Calculate waypoints $\tilde{p}_j(1),...,\tilde{p}_j(T)$ by gradient descent
				$$
				\begin{aligned}
					&A_{j,t}=\nabla_{q}h_j(\tilde{p}_j(t),\hat{q})\\
					&\mathbf{d}_t = -2 \nabla_{p} A_{j,t}~\hat{F}_j^{-2} A_{j,t}\\
				&\tilde{p}_j(t+1) \gets \tilde{p}_j(t)-\alpha_t M_t \mathbf{d}_t\\
				\end{aligned}
				$$
				with $M_t\succ 0$ being regularization matrices.
				\STATE Generate the control actions	$(u_j(1),\ldots,,u_j(T))\gets MP(\tilde{p}_j(0),\ldots,\tilde{p}_j(T))$.
				\STATE Execute control action $u_j(1)$.
				
				\UNTIL $||p_j-\hat{q}||\leq\epsilon_0$
			\end{algorithmic}
		\end{algorithm}
		
In the distributed algorithm, each sensor $j$ maintains an estimate for the source location and velocity, denoted by $\hat{z}_j:=[\hat{q}_j^\top,\hat{v}_j^\top]^\top$. The sensor also maintains an estimate for the global Fisher information, denoted by $\hat{F}_j$. Our source seeking algorithm requires sensor $j$ to repeat three consecutive steps: information gathering, source location estimation, and movement. Algorithm \ref{alg:distri-alg} contains the detailed operations.

\textbf{1) Information Gathering (Line 3):} Sensor $j$ takes measurements and communicates with its neighbors to obtain the sensor positions $\mathbf{p}_j$ and measurements $\mathbf{y}_j$. Sensors also communicate their source location estimates $\hat{z}_j$ and their Fisher Information estimates $\hat{F}_j$, as required in the subsequent steps.
	
\textbf{2) Source Location Estimation (Line 4):} The gathered information is used to form an updated source location estimate $\hat{q}_j$, through \textbf{consensus-based estimation algorithms}, elaborated in Section~\ref{sec:distributed-est}. {For ease of exposition, we assume the source location estimator to be the consensus EKF (\ref{eq:consensus_KF}), though other estimators could also be adapted to the distributed setting following a similar procedure.} 
	
	\textbf{3) 
	Sensor Movement (Line 6-11):}
	{
	Sensor $j$ calculates the partial gradient of the loss function $L$ (\ref{FIMLossFunction_ALG}) through \textbf{Distributed Information Gradient Update} (Line 6-9), elaborated in Section ~\ref{sec:digu} .  }
	Then it calls a motion planner \textit{MP} to compute the corresponding control actions for the gradient steps and executes the first action (Line 10-11).
	
	The key differences between the distributed Algorithm \ref{alg:distri-alg} and Algorithm \ref{alg:centralized-alg} are:
	\begin{itemize}
	    \item Algorithm \ref{alg:centralized-alg} runs on the central controller. The distributed algorithm runs on individual sensors.
	    \item Comparing Step 1), the Algorithm \ref{alg:centralized-alg} knows the measurements and positions of all sensors. The distributed algorithm only knows those from the neighborhood. As a result, the Algorithm \ref{alg:distri-alg} modifies Steps 2) and 3) to accommodate the distributed setting.
\end{itemize}

\modified{\begin{remark}
Showing the full convergence of Algorithm \ref{alg:distri-alg} also remains an open question. We leave it to future work.
\end{remark}}
	
Next, we elaborate on how agents perform distributed source location estimation and distributed information gradient update in Algorithm \ref{alg:distri-alg}.
	
	\subsection{Distributed Source Location Estimation}\label{sec:distributed-est}
	In Step 2), each sensor must \textit{locally} form a source location estimate based on the information available to it, and one simple way is to run \textit{local} EKF based on its local information. For each sensor $j$, the \textit{local} EKF maintains two running variables, $\hat{z}_j$ and $P_j$, where  $$ \hat{z}_{j} := \begin{bmatrix}
		\hat{q}_{j}\\
		\hat{v}_{j}
	\end{bmatrix}$$ is sensor $j$'s estimate of the position-velocity vector of the source, and $P_j$ is a matrix quantifying the uncertainty in the estimate. These two variables are updated according to
	\begin{equation}\label{eq:local_KF}
	\begin{aligned}
		\hat{z}_{j} &\gets f(\hat{z}_{j})+K_j(\mathbf{y}_{j}-H_j(\hat{q}_{j},\mathbf{p}_{j}))\\
		P_j &\gets A_jP_jA_j^\top + Q_j-K_j(C_jP_jC_j^\top +R_j)K_j^\top
			\end{aligned}
	\end{equation}
where $Q_j,R_j$ are pre-defined constant positive matrices, $f$ is given in \eqref{eq:zfz}, and $H_j$ is given in \eqref{eq:Hj}. $Q_j$ and $R_j$ represent our prior belief of the covariances of process and measurement noises, therefore are usually the same among all $j$. The quantities $A_j,~C_j$, and $K_j$ are defined by
\begin{equation}\label{eq:ACK}
\begin{aligned}
	A_j&:= \nabla_z f(\hat{z}_j),		C_j:= \nabla_z H_j(\hat{q}_j)=\begin{bmatrix}
		\nabla_q H_j(\hat{q}_j,\mathbf{p}_{j})\\
		0
	\end{bmatrix}\\
	K_j &:= A_jP_jC_j^\top(C_jP_jC_j^\top+R_j)^{-1}
\end{aligned}
\end{equation}

The \textit{local} EKF may be one of the simplest estimation algorithms based on sensors' local information from neighboring sensors. However, its estimation is not great in our source seeking experiments, as we later show in \ref{sec:distr-experiment}. One of the main reasons is that the communication network is typically sparse, and each sensor is connected to only a few neighbors. Therefore, the measurements available to each sensor are few, and the resulting estimation is poor.

The \textbf{consensus EKF}, an extension to the Kalman Consensus Filter II algorithm \cite{kalman_consensus_filter}, addresses the limitations of \textit{local} EKF by introducing a consensus procedure among the estimates.
 The estimation $\hat{z}_j$ is updated with the following formula
	
		\begin{equation}\label{eq:consensus_KF}
			\begin{aligned}
				\hat{z}_{j}&\gets\sum_{i\in \mathcal{N}_j}  w_{ji} \hat{z}_{i}	\\	+&\left[f(\hat{z}_{j})+K_j(\mathbf{y}_{j}-H_j(\hat{z}_{j},\mathbf{p}_{j}))-\hat{z}_{j}\right]\\
			\end{aligned}
		\end{equation}
	with $w_{ji}\geq 0,~\sum_{i\in \mathcal{N}_j}w_{ji} = 1$. The update of $P_j$ remains the same as the \textit{local} EKF. 
	
	The update in \eqref{eq:consensus_KF} can be viewed as adding a consensus term $\sum  w_{ji} \hat{z}_{i}$ to the difference between \textit{local} EKF update and current $\hat{z}_j$. Meanwhile, the \textit{local} EKF can be viewed as a special case of \eqref{eq:consensus_KF} with $w_{jj}=1$ and $w_{ji}=0$ for $i\ne j$. We highlight here the advantage of the consensus EKF over the \textit{local} EKF:
	
	\begin{itemize}
		\item Since the neighbors' estimates use the measurements from possibly outside $\mathcal{N}_j$, sensor $j$ is indirectly exposed to the data outside of its immediate neighborhood through consensus. 
		\item With properly chosen consensus weights as specified in \cite[Assumption 2.1-2.3]{olshevsky2009convergence}, the sensors can reach consensus on estimates much quicker than the rate of change in \textit{local} EKF update. Effectively, the limiting estimate at consensus is an estimator that uses the measurement from all the sensors in the network. It serves as a target location for the sensor to coordinate their movement, and we observe much better convergence to the source using the consensus EKF. 
		
	\end{itemize}
Our consensus EKF algorithm is an extension of the Kalman-Consensus Filter II algorithm\cite[Section IV, eq. (20)]{kalman_consensus_filter} to the non-linear measurement setting. Some related studies also employ consensus in extended
Kalman-like filters\cite{long2012distributed,ding_collaborative_2012,katragadda_consensus_2014}, but they take consensus over both the mean estimate and covariance estimate, while our algorithm only takes consensus over the mean estimate to lower the communication load, which improves the responsiveness of source seeking in our real-time environment. This scheme is also backed by the theoretical result in \cite[Theorem 2]{kalman_consensus_filter}, that performing consensus on the mean estimate but not on the covariance estimate can still lead to convergence in estimation. It is also worth pointing out that although our consensus EKF shares similarities with the existing literature,
our consensus-based distributed gradient update (described later in
Section \ref{sec:digu}) is a novel contribution.


		


\subsection{Distributed Information Gradient Update}\label{sec:digu}

Similar to the centralized setting, our distributed algorithm requires the sensors to move to new locations along the negative gradient of the Fisher-Information-based loss function $L$ (\ref{FIMLossFunction_ALG}).
However, since $L$ depends on the global measurement function $H$ and all agents' positions $\mathbf{p}$, the challenge is for individual sensors to compute the correct partial gradients in a distributed fashion. Specifically, the partial gradient of $L$ can be shown to be
	\begin{equation}\label{eq:FIM_grad}
		\nabla_{p_j}L(\mathbf{p},q)=-2 \nabla_{p_j} A_j~\text{FIM}^{-2} A_j,
	\end{equation}
where $A_j = \nabla_{q}h_j(p_j,q)$. Note that both $A_j$ and $ \nabla_{p_j} A_j$ can be computed using the local information of sensor $j$ already. Therefore, the remaining problem is to estimate the value of $\text{FIM}$ that depends on global information $H$ and $\mathbf{p}$. We note that 
	\begin{equation}\label{eq:FIM_sum}
		\begin{aligned}
				\textup{FIM}\propto& \sum_{i=1}^m \nabla_q h_j(q,p_j) \nabla_q h_j(q,p_j)^\top,
		\end{aligned}
	\end{equation}
meaning that the Fisher information is proportional to the equally weighted sum of the rank-one matrices 
$$F_j:=\nabla_q h_j(q,p_j) \nabla_q h_j(q,p_j)^\top,$$ 
which we name as \textbf{partial $\textup{FIM}$}. Note each $F_j$ is again computable using the local information of sensor $j$.  Therefore, the sensors can recover the global Fisher information through consensus. Specifically, let each sensor maintain a running estimate $\hat{F}_j$ of the global $\text{FIM}$, and update $\hat{F}_j$ according to the following formula
	
\begin{equation}\label{eq:FIM_consensus}
		\hat{F}_j^+ = \left(\sum_{i\in \mathcal{N}_j} w_{ji}\hat{F}_i\right) + F_j-F_j^-.
	\end{equation}
	Here $\{w_{ji}\}_{i\in \mathcal{N}_j}$ are consensus weights, and $F_j^-$ denotes the partial $\text{FIM}$ value from the previous time step. To ensure $\hat{F}_j$ converges to an equally weighted sum of $F_j$'s, the consensus weights can be pre-defined to form a static doubly-stochastic consensus matrix. Alternatively, the update in \eqref{eq:FIM_consensus} can be implemented to follow the parallel two-pass algorithm \cite{olshevsky2009convergence}.
	
Finally, sensor $j$ calculates an estimated gradient descent direction by substituting the $\text{FIM}$ in \eqref{eq:FIM_grad} with $\hat{F}_j$, as shown in Line 9 of Algorithm \ref{alg:distri-alg}.

\section{Experiments}\label{sec:experiment}
\subsection{The advantage of Information-based Source Seeking}\label{sec:advantage}
	
This subsection compares Algorithm \ref{alg:centralized-alg} with field climbing methods via numerical experiments, showing the advantage of information-based source seeking. All experiments are implemented in a centralized way. The algorithm performance under actual robot dynamics is studied in simulations in Section \ref{section:Numerical-Experiments}. In the subsequent studies, we remove the robot dynamics in simulations to efficiently conduct repetitive trials and assume the sensors follow the gradient steps exactly. We study the influence of the number of sensors in Section \ref{section:sensor-swarms}, the difference of various information metrics in Section \ref{sec:Metric-Compare}, and the robustness to modeling error in Section \ref{sec:robustness_modeling_error}.

\subsubsection{Gazebo Numerical Experiments}\label{section:Numerical-Experiments}
\begin{figure*}
\vspace{5pt}
	\centering
		\begin{subfigure}[t]{0.30\textwidth}
			\centering
			\includegraphics[width=\textwidth]{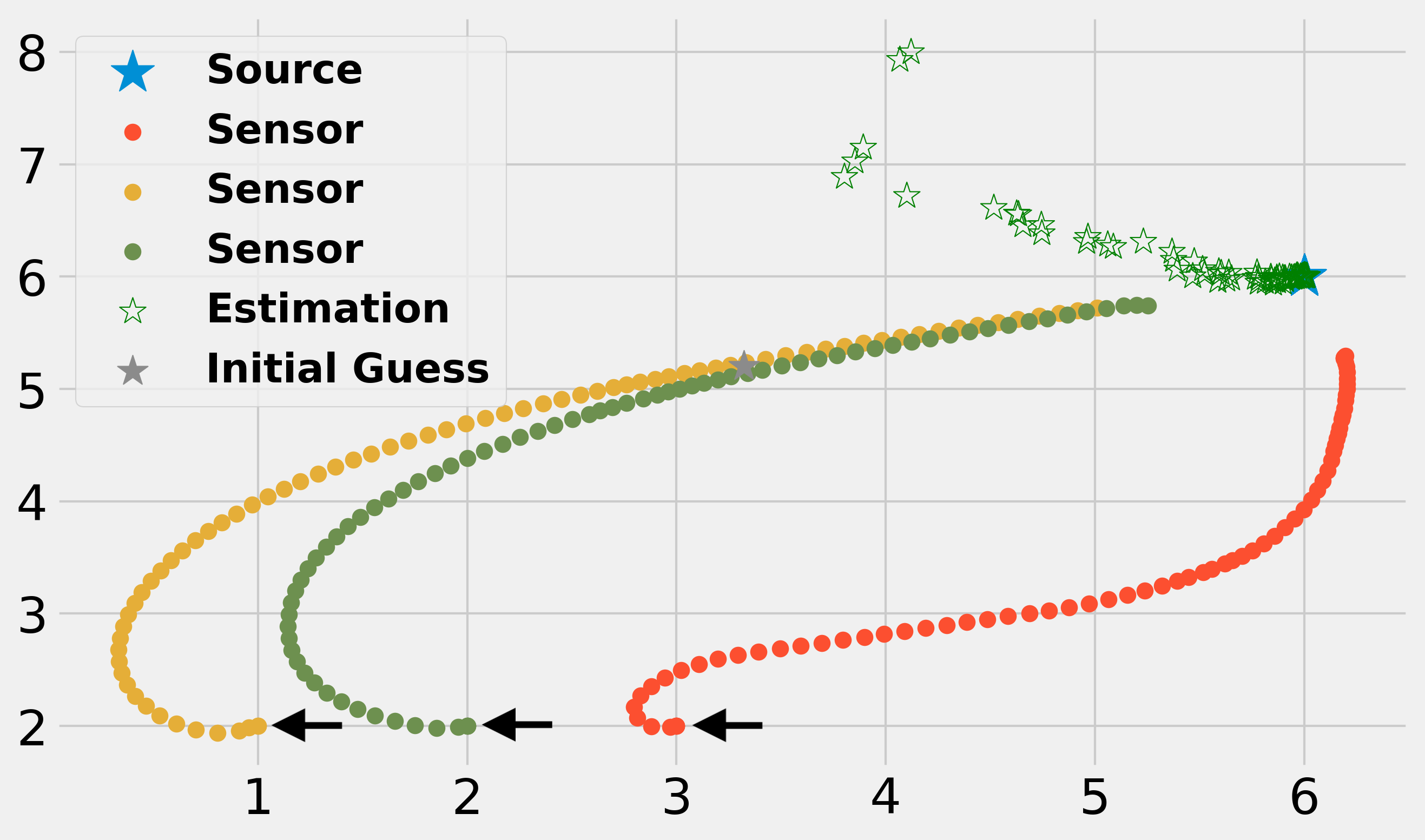}
			\caption{Information-based seeking\ref{alg:centralized-alg}}
			\label{fig:gradientdescenttraj_II}
		\end{subfigure}
		\begin{subfigure}[t]{0.30\textwidth}
			\centering
			\includegraphics[width=\textwidth]{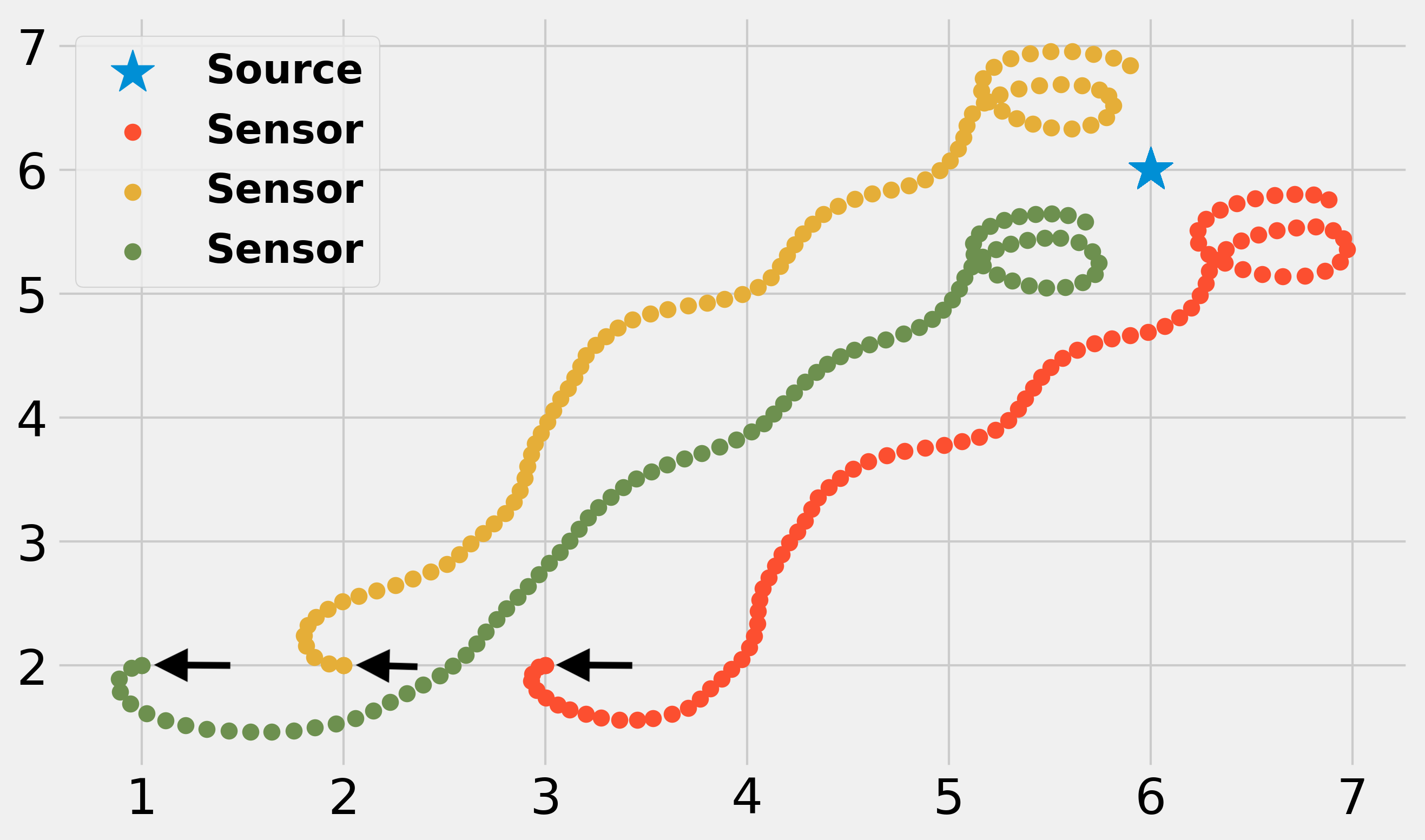}
			\caption{Field climbing \cite{moore_source_2010}.}
			\label{fig:hillclimbing_II}
		\end{subfigure}
		\begin{subfigure}[t]{0.30\textwidth}
			\centering
			\includegraphics[width=\textwidth]{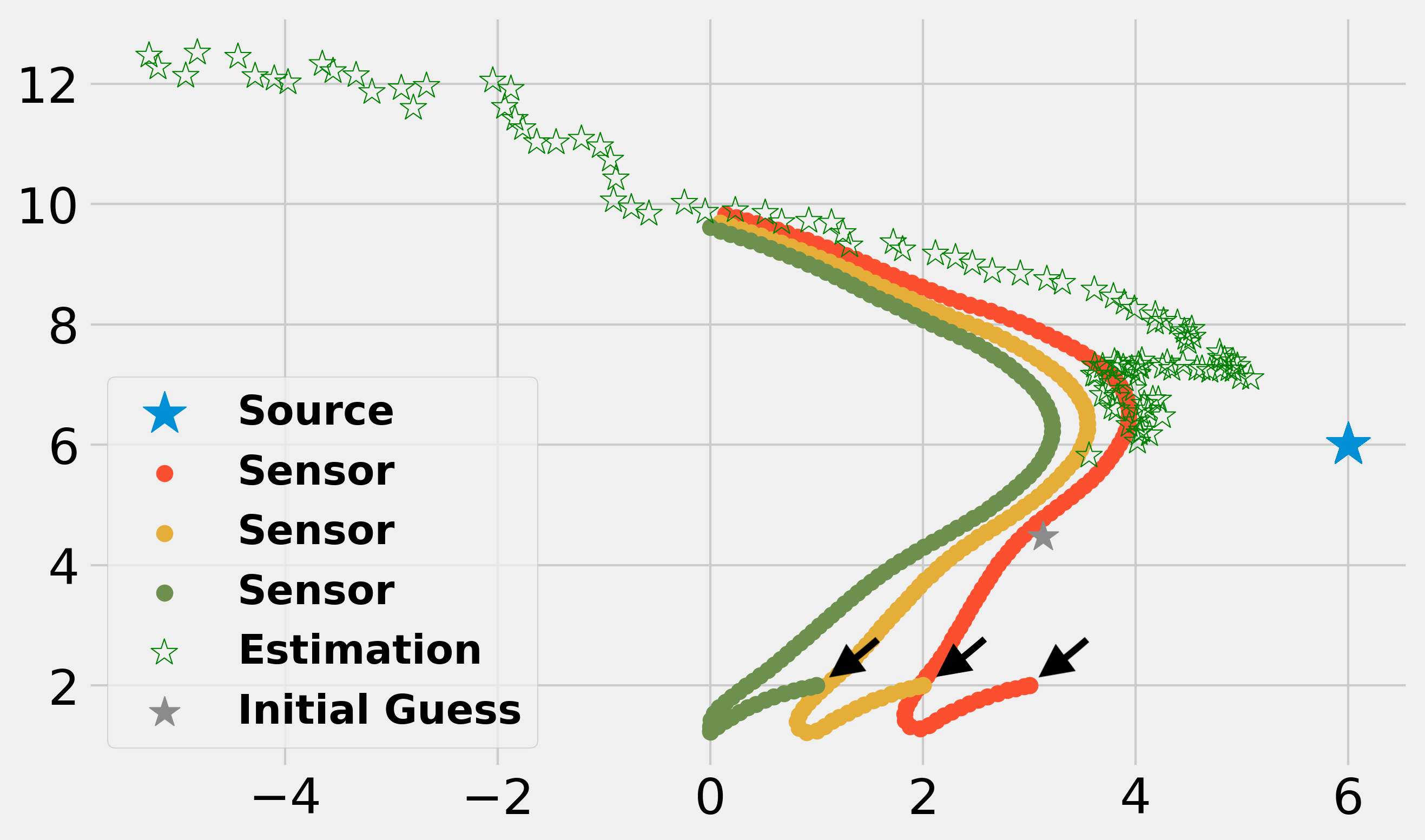}
			\caption{Following straight lines to the estimated location.}
			\label{fig:gostraightline_II}
		\end{subfigure}
		\vspace{-6pt}
		\caption{The general behaviors of seeking a stationary source. The black arrows indicate the starting locations of the sensors. The sensors in (b) do not make any estimations, since field climbing only uses measured signal strength to guide sensor movements.}
		\label{fig:static_source_seeking}
		\vspace{-3pt}
\end{figure*}
\begin{figure*}
	\begin{subfigure}[t]{0.3\linewidth}
		\centering
		\includegraphics[width=\linewidth]{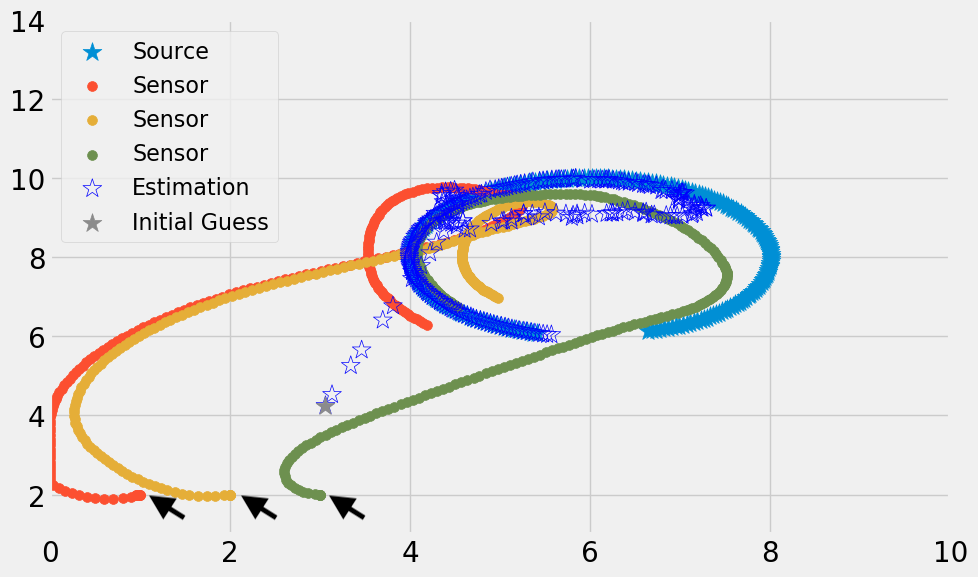}
		\caption{Information-based seeking\ref{alg:centralized-alg}}
		\label{fig:gradientdescenttraj_III}
	\end{subfigure}
	\begin{subfigure}[t]{0.3\linewidth}
		\centering
		\includegraphics[width=\linewidth]{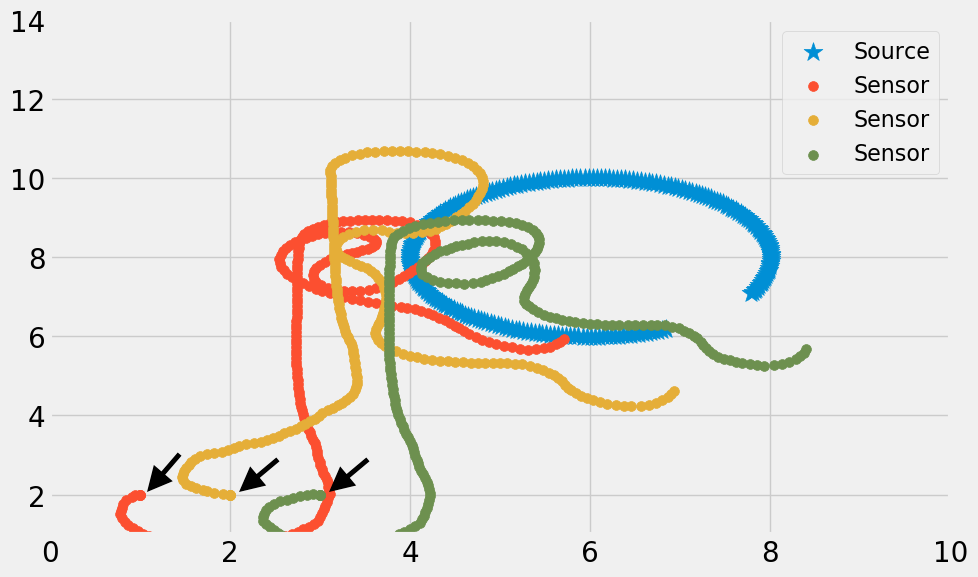}
		\caption{Field climbing \cite{moore_source_2010}.}
		\label{fig:hillclimbing_III}
	\end{subfigure}
	\centering
	\begin{subfigure}[t]{0.3\linewidth}
		\centering
		\includegraphics[width=\linewidth]{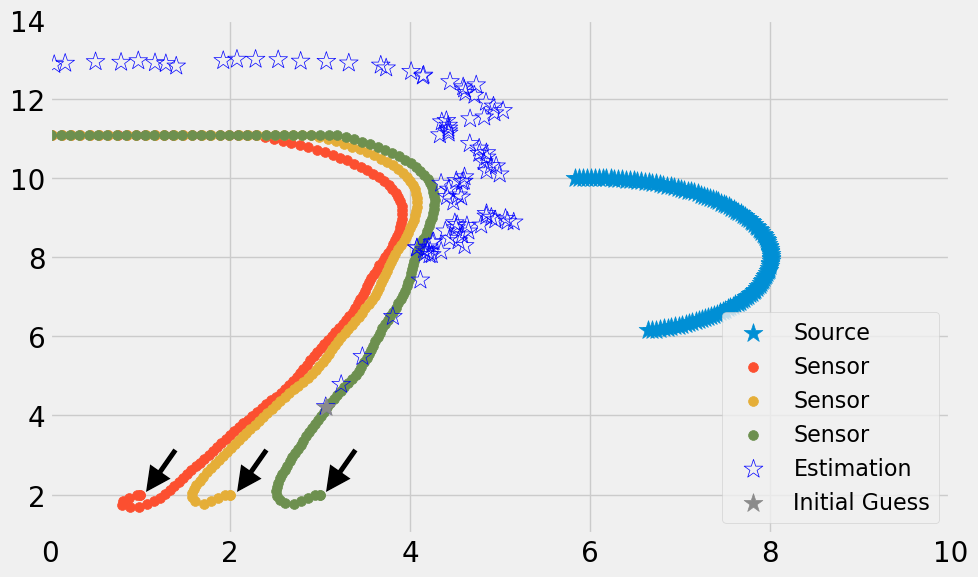}
		\caption{Following straight lines to the estimated location.}
		\label{fig:gostraightline_III}
	\end{subfigure}
	\vspace{-6pt}
	\caption{The general behaviors of seeking a moving source. The black arrows indicate the starting locations of the sensors.}
	\label{fig:moving_source_seeking}
	\vspace{-10pt}
\end{figure*}

The following numerical experiments are carried out using the Gazebo simulation toolbox\cite{koenig_design_2004}, with virtual mobile sensors simulating the same dynamics as the actual robots. We generate simulated measurement values of the sensors by
\begin{equation}\label{eq:experiment-measurement-model}
    y_i = 1 / r_i^2+\nu_i,
\end{equation}
with $\nu_i$ drawn independently from $\mathcal{N}(0,0.01)$. The measurement function $h_i(p_i,q)=1/||p_i-q||^2 = 1/r_i^2$ is given to the EKF for estimation.

\textbf{\textit{Stationary Source: }}
In the first set of simulations, we use three mobile sensors to seek a stationary source. The source is fixed at position $(6.0,6.0)$, while the mobile sensors are initially placed at $(1.0,2.0),(2.0,2.0),(3.0,2.0)$. The initial guess of source location given to the EKF is $(3.0,4.0)$. The terminal condition threshold $\epsilon_0=0.5$.
We compare the convergence to a stationary source among three algorithms: (a) our algorithm;
(b) the field climbing algorithm introduced by \cite{moore_source_2010} that only maximizes measured signal strength; (c) following straight lines to the estimated location. 
 (c) is included to show the importance of exploiting Fisher information in obtaining accurate estimation. The results are displayed in Figure~\ref{fig:static_source_seeking}.

First, notice that the straight-line algorithm fails to converge to the source, as shown in Fig. \ref{fig:gostraightline_II}. We suspect the reason is sensors cluster together quickly as they move to the (same) estimated location and cannot provide sufficiently rich, diverse measurements for a reasonable estimation. Consequently, the estimate gradually deviates from the source location, as do the sensors.
On the other hand, if taking a trajectory that improves the Fisher information, the sensors cover the space more thoroughly, resulting in a stable decrease in the estimation error and the final success of reaching the source, as shown in Figure~\ref{fig:gradientdescenttraj_II}.

Comparing Fig. \ref{fig:gradientdescenttraj_II} and \ref{fig:hillclimbing_II}, note that sensors using our algorithm first spread out to estimate the source location better and then converge to the source, whereas sensors doing field climbing maintain a tight formation while steadily approaching the source. Since we use constant rather than diminishing step sizes for Fig. \ref{fig:hillclimbing_II}, the virtual robots do not stop completely near the source and perform a looping behavior. Although our algorithm and the field climbing algorithm \cite{moore_source_2010} are both successful with a stationary source, our algorithm consistently converges faster over repetitive trials, as is shown later in Section \ref{section:sensor-swarms}. 

\textbf{\textit{Moving Source: }}
In this set of experiments, all parameters are kept the same as the stationary case, except that now the source moves in a circular motion with constant speed. See Figures \ref{fig:gradientdescenttraj_III},  \ref{fig:hillclimbing_III}, \ref{fig:gostraightline_III}. 
Note the straight-line algorithm again leads to a sensor formation that causes the estimation to deviate from the actual source location. Both our algorithm and the field climbing algorithm\cite{moore_source_2010} successfully get close to the source. However, the field climbing method exhibits unnecessary irregular motion when the sensors are near the source. We suspect that the field climbing direction becomes very sensitive to the source movement as the sensors get close to the source, which leads to such motion. In comparison, sensors following our algorithm trace much more stable paths. 

\subsubsection{Performance with Sensor Swarms}\label{section:sensor-swarms}

\begin{figure*}
\vspace{5pt}
	\centering
	\begin{subfigure}[b]{0.30\linewidth}
		\centering
		\includegraphics[width=\linewidth]{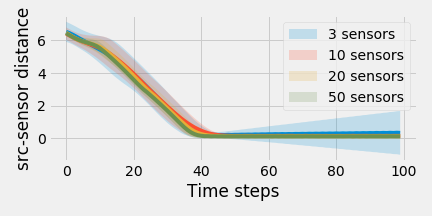}
		\caption{Information-based seeking\ref{alg:centralized-alg}}
	\end{subfigure}
	\centering
	\begin{subfigure}[b]{0.30\linewidth}
		\centering
		\includegraphics[width=\linewidth]{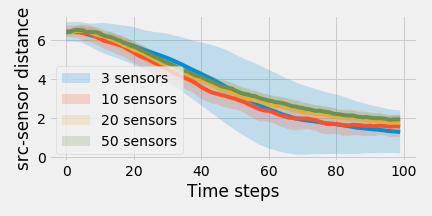}
		\caption{Climbing estimated gradient\cite{ogren_cooperative_2004}.}
	\end{subfigure}
		\begin{subfigure}[b]{0.30\linewidth}
		\centering
		\includegraphics[width=\linewidth]{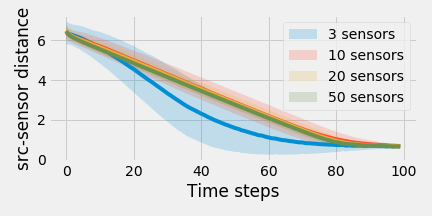}
		\caption{Circular formation field climbing \cite{moore_source_2010}.}
	\end{subfigure}	\centering
	\vspace{-3pt}
	\caption{The time evolution of the source-sensor distance. 
	}
	\label{fig:Dist-Plots}
	\vspace{-8pt}
\end{figure*}

\begin{figure*}
	\centering
	\begin{subfigure}[t]{0.175\linewidth}
		\centering
		\includegraphics[width=\linewidth]{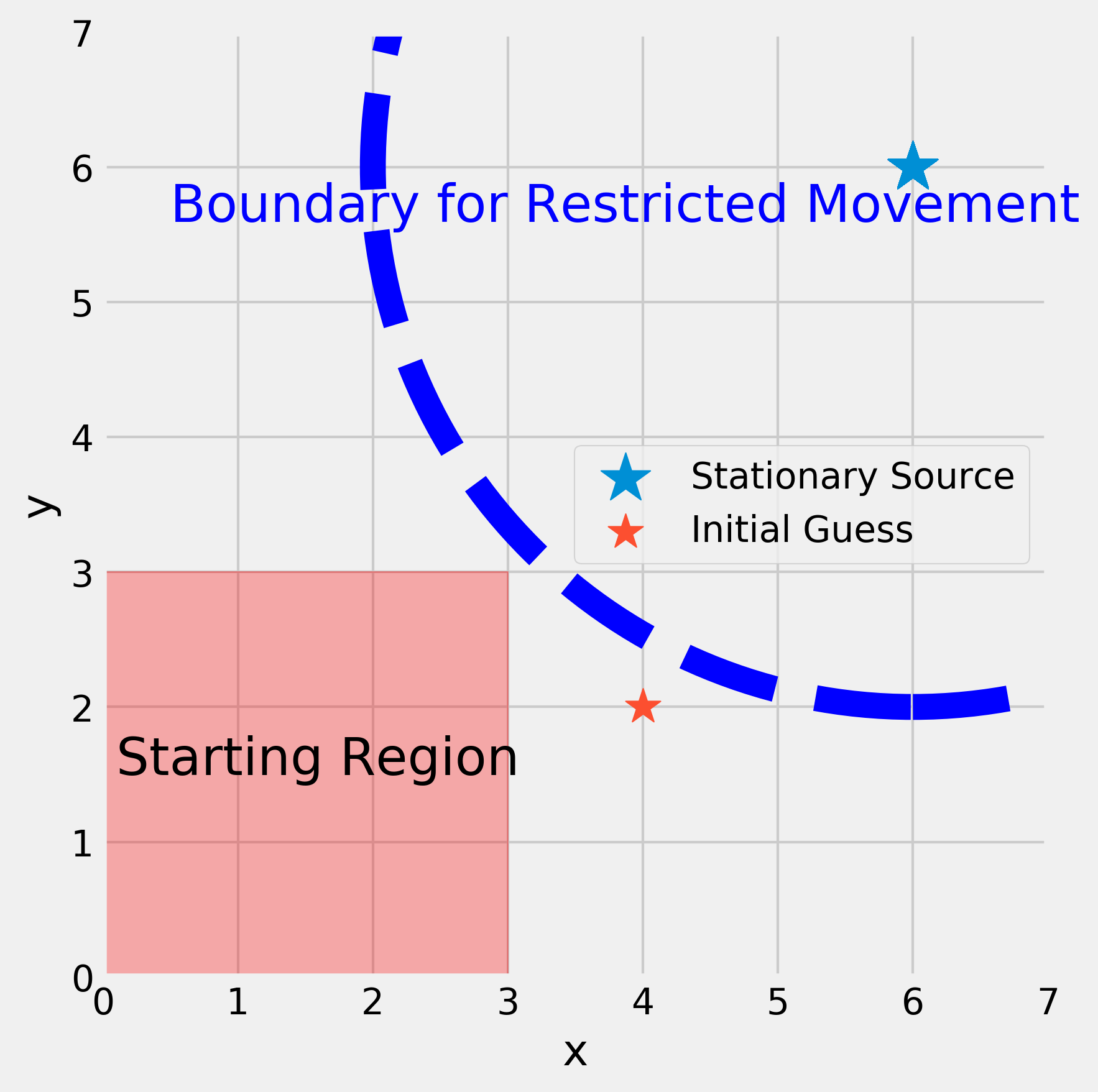}
		\caption{Experiment Setting}
		\label{fig:experimentsetting}
	\end{subfigure}
	\begin{subfigure}[t]{0.265\linewidth}
		\centering
		\includegraphics[width=\linewidth]{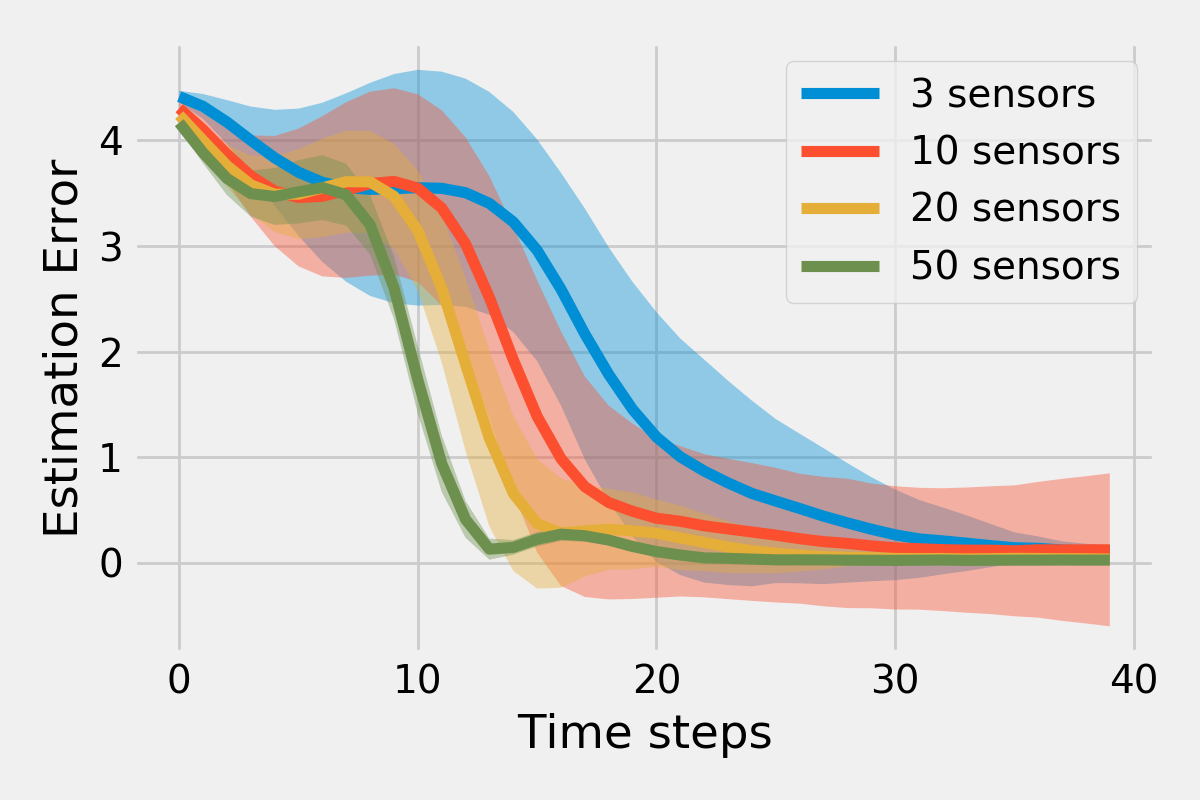}
		\caption{Sensors move freely. }
			\label{fig:free_sensors}
	\end{subfigure}
	\begin{subfigure}[t]{0.265\linewidth}
		\centering
		\includegraphics[width=\linewidth]{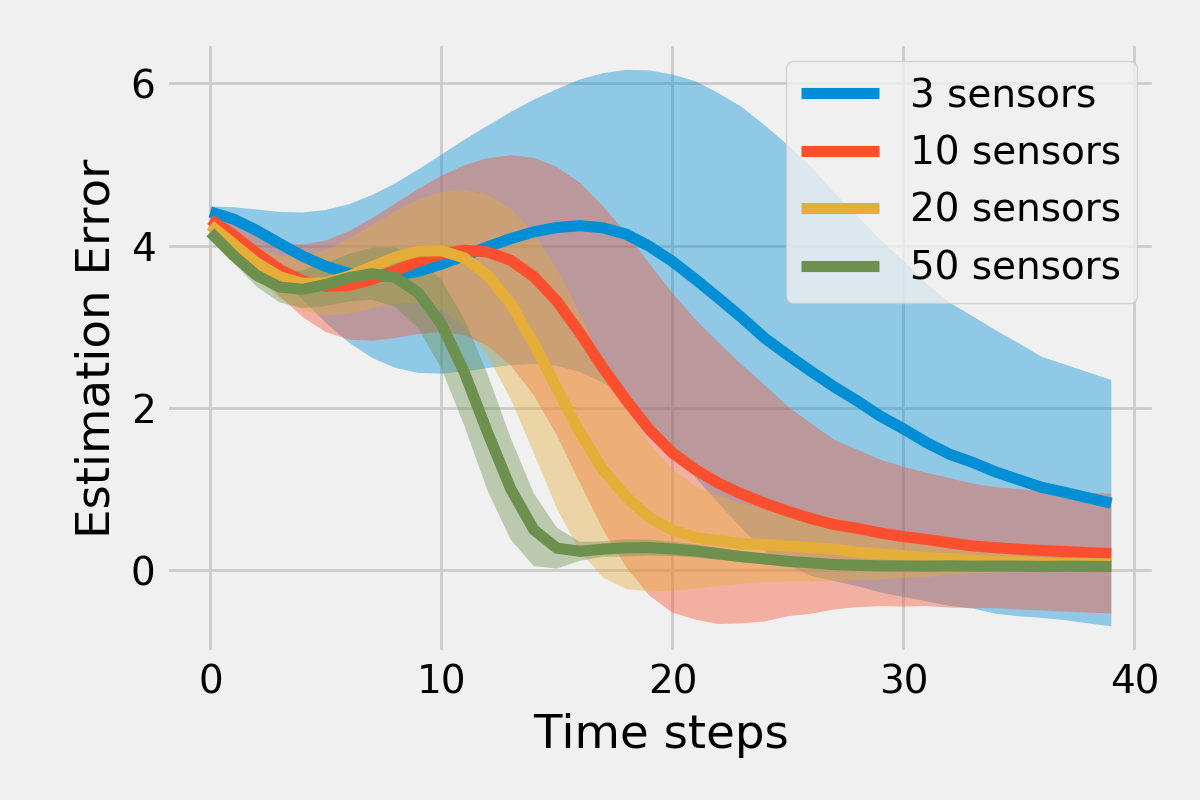}
		\caption{Sensors stay outside the boundary.}
	\end{subfigure}
	\centering
	\begin{subfigure}[t]{0.265\linewidth}
		\centering
		\includegraphics[width=\linewidth]{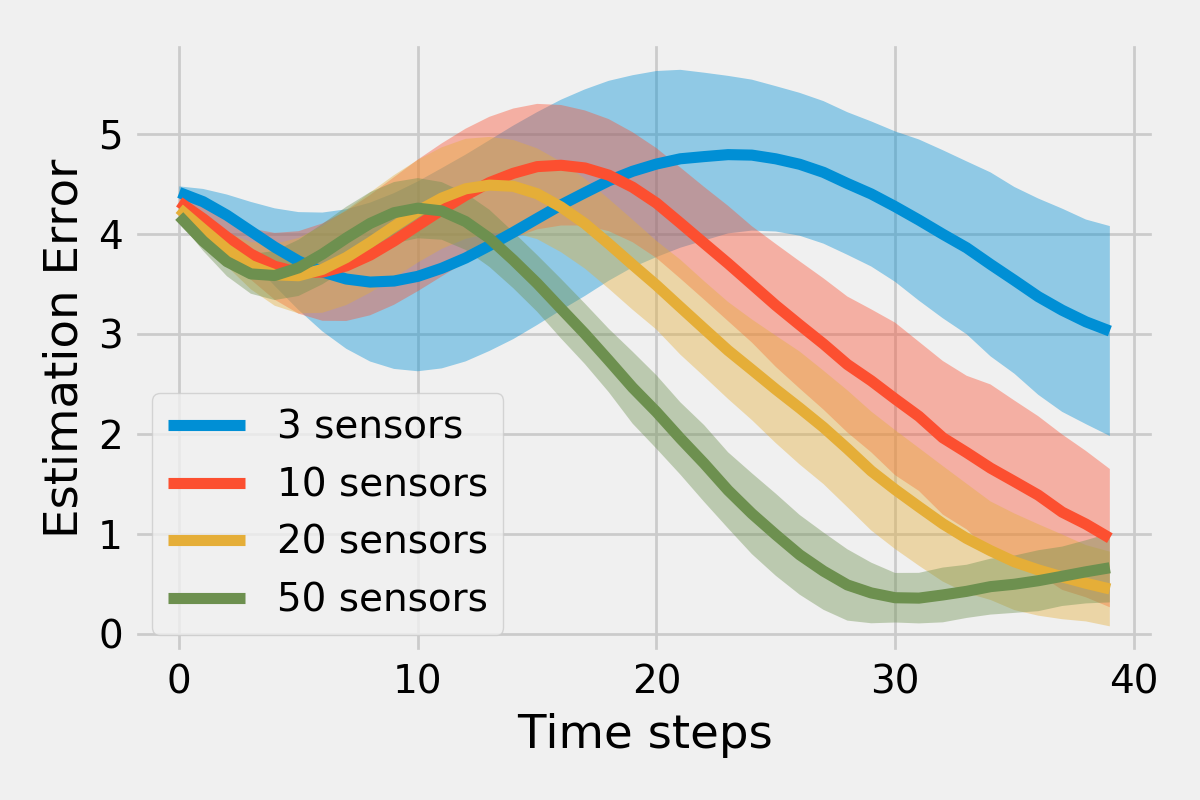}
		\caption{Stationary sensors.}
	\end{subfigure}
	\caption{The time evolution of source location estimation error. In Fig. \ref{fig:free_sensors}, the variance of the 50 sensor curve after 10 steps is small but not zero.
	}
	\label{fig:Error-Plots}
\end{figure*}

We now investigate the influence of the number of mobile sensors on the algorithm performance. We perform repetitive simulations to study the convergence rate of source-sensor distances and the evolution of \modified{the} estimation error. We use a stationary source fixed at position $(6.0,6.0)$ and randomize the initial sensor locations following uniform distribution in a $3.0$-by-$3.0$ rectangle, as shown in Fig. \ref{fig:experimentsetting}. 

The experiments in Fig. \ref{fig:Dist-Plots} study the convergence rate of \textit{the sensors' distances to the source}. Our algorithm is compared with two field-climbing methods\cite{ogren_cooperative_2004}\cite{moore_source_2010}. The performances of 3-, 10-, 20-, and 50-sensor teams are plotted. The solid curves are the average values over 100 repetitive experiments. The color bands indicate one standard deviation. The results show that our algorithm converges much faster than the others. 
In addition, increasing the number of sensors substantially reduces all algorithms' variance while not affecting the convergence time. Because the speed of sensors has an upper bound regardless of the number of sensors, the overall convergence rate of source-sensor distance is limited. Nevertheless, having more sensors provides more measurements, which reduces the variance and results in more consistent, stable trials. In particular, our algorithm benefits the most from having more sensors since more measurements contribute to better estimation, thus, faster convergence with smaller variance. 

To further test the above conjecture that sensors moving \textit{freely} and \textit{in the direction of improving Fisher information} brings richer measurement and therefore enhances the estimation, we conduct the following three experiments:
\begin{enumerate}
    \item The sensors move freely, guided by our proposed algorithm.
    \item The sensors are restricted to staying outside a radius of $3.0$ from the source, performing projected gradient descent of our proposed loss function at the boundary.
    \item The sensors do not move but only perform location estimation.
\end{enumerate}
The results in Fig. \ref{fig:Error-Plots} show that using more sensors and allowing sensors to move freely can both lead to a faster decline in the estimation error and a smaller variance. 

\subsubsection{Comparison of Different Information Metrics}\label{sec:Metric-Compare}
Although we specify the loss function as the A-optimality criterion in our algorithm, one can, in principle, replace it with other alternatives. In the following experiments, we test our algorithm's performance with four different loss functions: i) $\tr(\textup{FIM}^{-1})$; ii) $\lambda_{\max}(\textup{FIM}^{-1})$; iii) $-\log \det (\textup{FIM})$, and iv) $\tr P$ where $P$ is the posterior covariance of the EKF defined by \cite[Equation (15)]{yang_performance_2012},
   $ P = (\nabla_q H R^{-1}\nabla_q H^\top+P_0^{-1})^{-1},$
with $P_0$ being the current estimation covariance and $R$ being the measurement noise parameter of the EKF.

All loss functions are tested with a stationary source and three freely-moving sensors for 100 repetitive trials. Each trial is initialized randomly in the same way as in the previous subsection. Fig. \ref{fig:Metric-Compare-Traj} shows the gradient descent trajectories for the tested metrics. Figure~\ref{fig:Metric-Compare} shows that all the tested metrics except $-\log \det(\textup{FIM})$ yield relatively good performance when using our algorithm, with $\tr(\textup{FIM}^{-1})$ and $\lambda_{\max}(\textup{FIM}^{-1})$ achieving a better balance in convergence and estimation than the covariance metric $\tr(P)$. These results confirm the generality of our algorithms.

\begin{remark}[Rationale For Metrics]
Assume the $\textup{FIM}$ is a $k\times k$ matrix. Since the $\textup{FIM}$ is always positive semi-definite,  it has non-negative eigenvalues $\lambda_1,\lambda_2,...,\lambda_k$. The goal of minimizing CRLB is to make the matrix $\textup{FIM}^{-1}$ as close to the zero matrix as possible, which means $\frac{1}{\lambda_1},...,\frac{1}{\lambda_k}$ should all be as close to zero as possible. Therefore, $\tr{(\textup{FIM}^{-1})} = \sum_{i=1}^k \frac{1}{\lambda_i}$ and $\lambda_{\max}(\textup{FIM}^{-1})=\max_{i=1,...,k}\frac{1}{\lambda_i}$ are both reasonable metrics to minimize. Similarly, the third metric $-\log\det(\textup{FIM}) = \sum_{i=1}^k \log(\frac{1}{\lambda_i})$ also encourages an overall reduction of $\frac{1}{\lambda_i}$ values, but it may not be as good as the previous two since and its value can be made arbitrarily low by setting one single $\frac{1}{\lambda_i}$ is very close to zero while all others remain large. The covariance metric is essentially $\tr((\textup{FIM}+P_0^{-1})^{-1})$, which can be thought of as a regularized version of the first metric. 
\end{remark}

\begin{remark}[Geometric Properties]\label{remark:metric-geo-prop}
Although the trajectories in Fig. \ref{fig:Metric-Compare-Traj} may look very different, they share some important common geometric properties that make sense intuitively. Specifically, they all encourage a separation among the sensors at some point and create an overall trend of approaching the source. We need to look at the explicit form of FIM to understand these behaviors. In the proof in Appendix B, we show that $$\textup{FIM} = \sum_{i=1}^m |g_i'(r_i)|^2 \hat{r}_i\hat{r}_i^\top,$$ where $m$ is the number of sensors, $g_i$ is the measurement function for the $i$'th sensor, $r_i$ is the distance between the $i$'th sensor and the source, and $\hat{r}_i$ is the unit vector pointing from the source to the $i$'th sensor. Having $\textup{FIM}^{-1}$ close to zero means $\textup{FIM}$ itself is `very positive definite'. And to achieve that, either 1) $|g_i'(r_i)|$ are large or 2) $\hat{r}_i\hat{r}_i^\top$ are `very linearly-independent', or both. Under the assumption that $|g_i'(r_i)|$ becomes larger as $r_i$ decreases, item 1) contributes to the overall trend of approaching the sensor, which seeks a stronger signal-to-noise ratio. Further, item 2) encourages the unit vectors $\hat{r}_i$ to point towards different directions, making the sensors separate from one another and creating different line-of-sights.

The covariance is the one that stands out the most among the metrics as it exhibits some follow-straight-line behavior at the beginning and separates only when close to the source. This phenomenon can be explained by the dominance battle between $P_0^{-1}$ and $\nabla H R^{-1} \nabla H^\top$, as discussed in detail in Appendix \ref{append:covariance}.
\end{remark}

\begin{figure}
\centering
\begin{subfigure}[t]{0.45\linewidth}
		\centering
		\includegraphics[width=\linewidth]{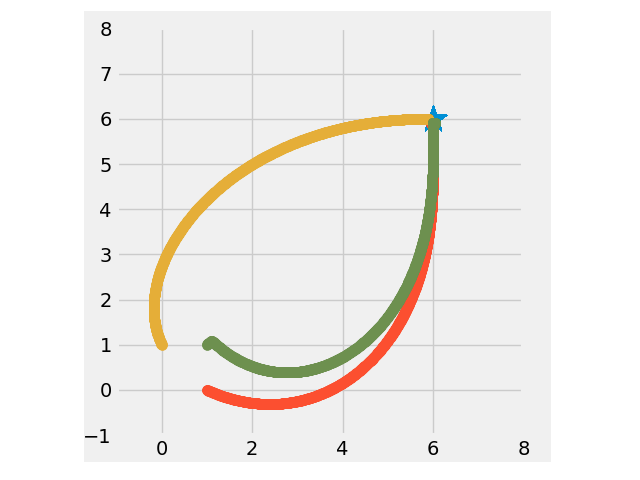}
		\caption{$\tr(\textup{FIM}^{-1})$}
	\end{subfigure}
	\begin{subfigure}[t]{0.45\linewidth}
		\centering
		\includegraphics[width=\linewidth]{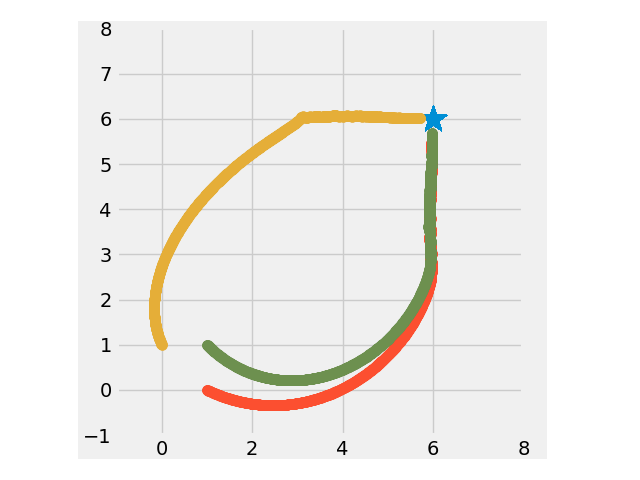}
		\caption{$\lambda_{\max}(\textup{FIM}^{-1})$}
	\end{subfigure}
	\begin{subfigure}[t]{0.45\linewidth}
		\centering
		\includegraphics[width=\linewidth]{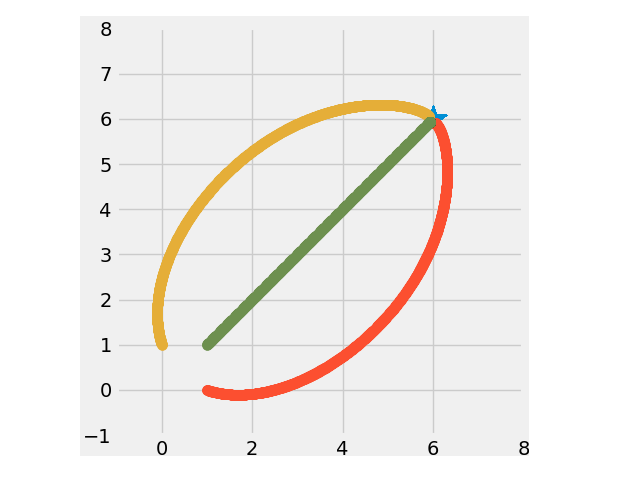}
		\caption{$-\log\det(\textup{FIM})$}
	\end{subfigure}
	\begin{subfigure}[t]{0.45\linewidth}
		\centering
		\includegraphics[width=\linewidth]{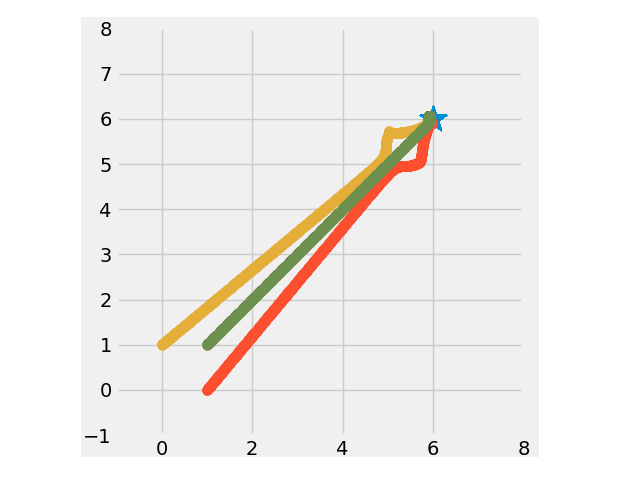}
		\caption{Covariance}\label{subfig:covariance}
	\end{subfigure}
	\caption{Assuming the source location is known, the above are the gradient descent trajectories of the tested information metrics.}
	\label{fig:Metric-Compare-Traj}
	\vspace{-15pt}
\end{figure}

\begin{figure}
    \centering
    \includegraphics[width=0.8\linewidth]{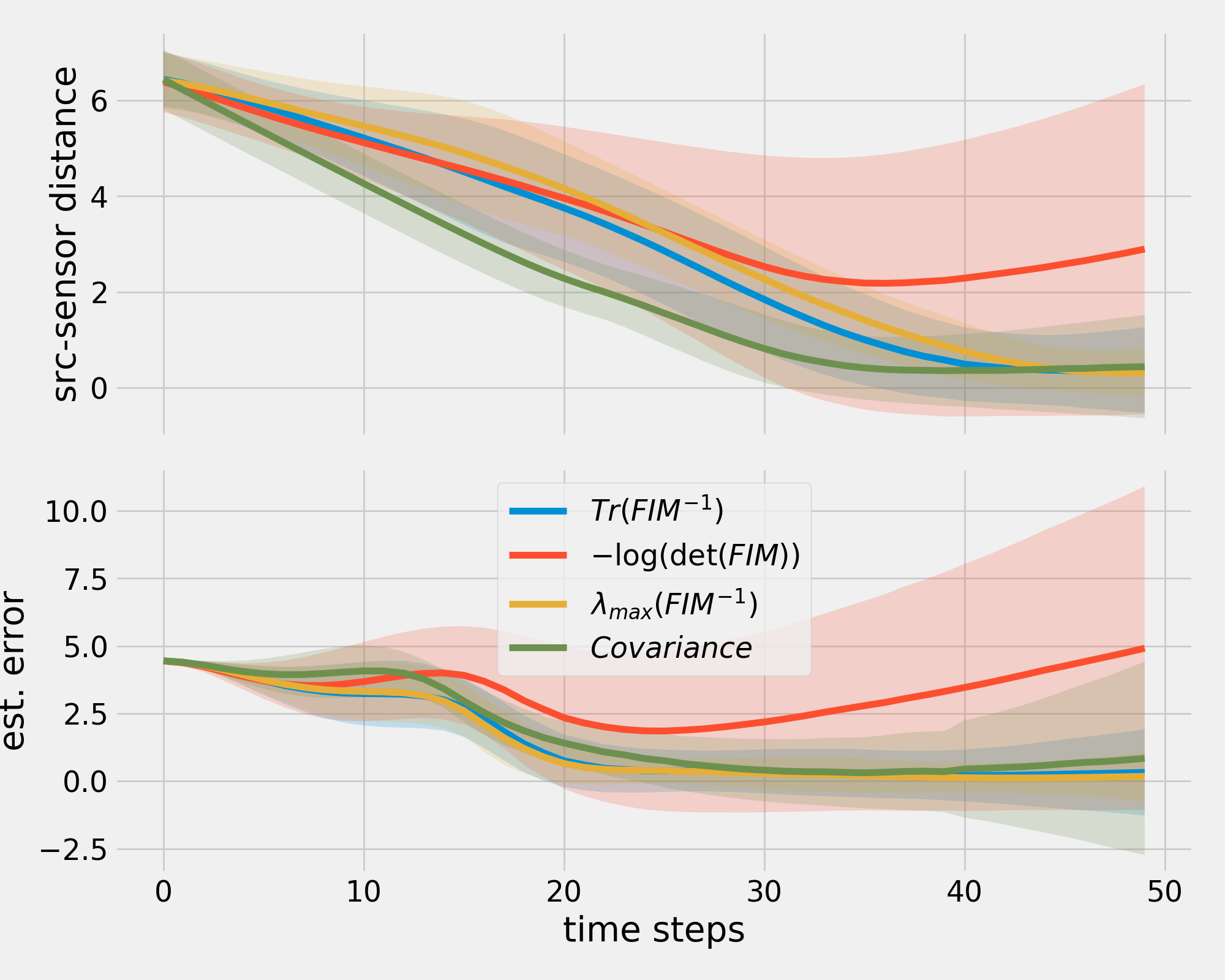}
    \caption{Comparison of the algorithm performance using different information metrics.}
    \label{fig:Metric-Compare}
\end{figure}

\subsubsection{Robustness to Measurement Modeling Error}\label{sec:robustness_modeling_error}
\begin{figure}
\centering
	\begin{subfigure}[t]{0.45\linewidth}
		\centering
		\includegraphics[width=\linewidth]{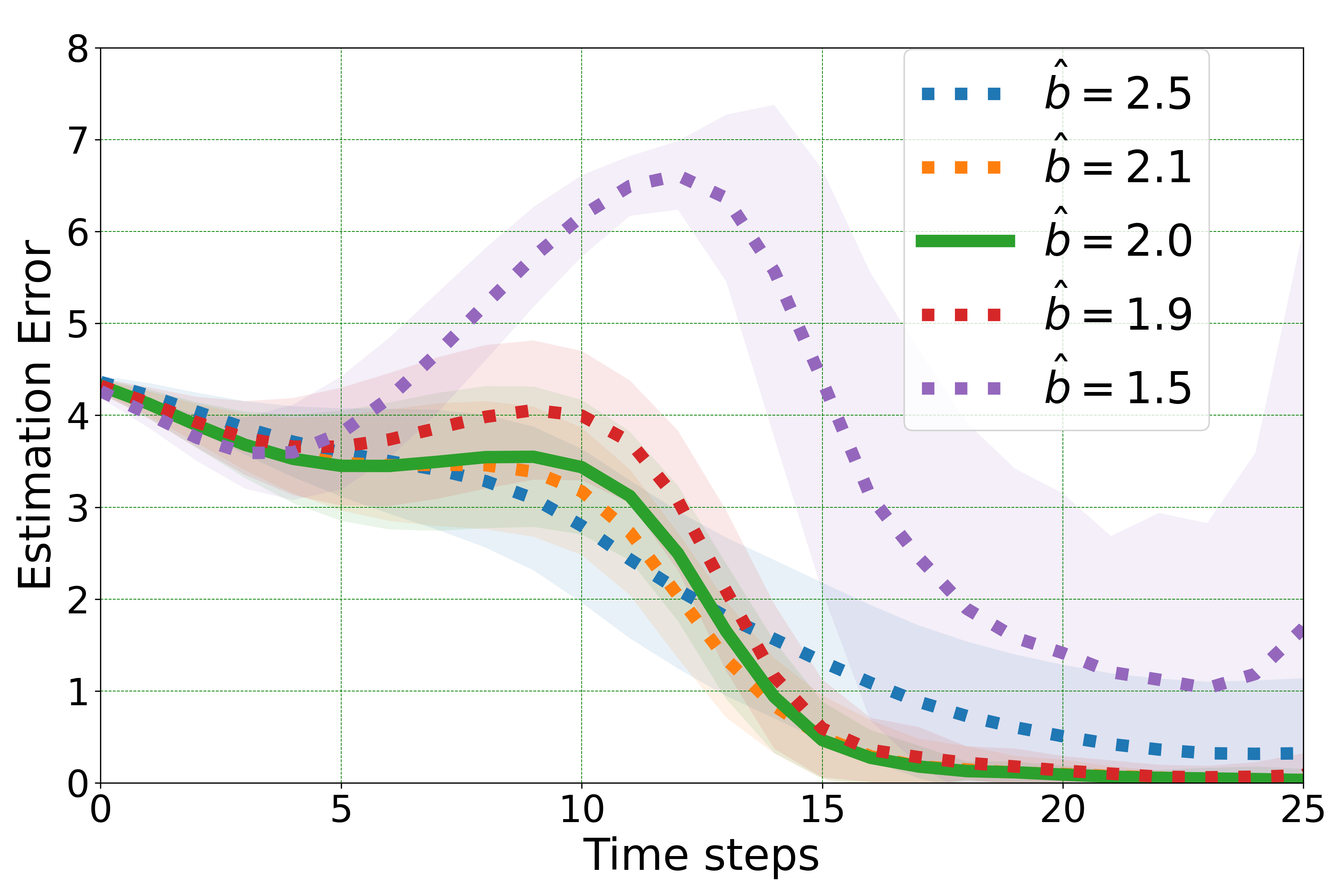}
		\caption{Our Algorithm}\label{fig:moving-robust}
	\end{subfigure}
	\hfil
	\begin{subfigure}[t]{0.45\linewidth}
		\centering
		\includegraphics[width=\linewidth]{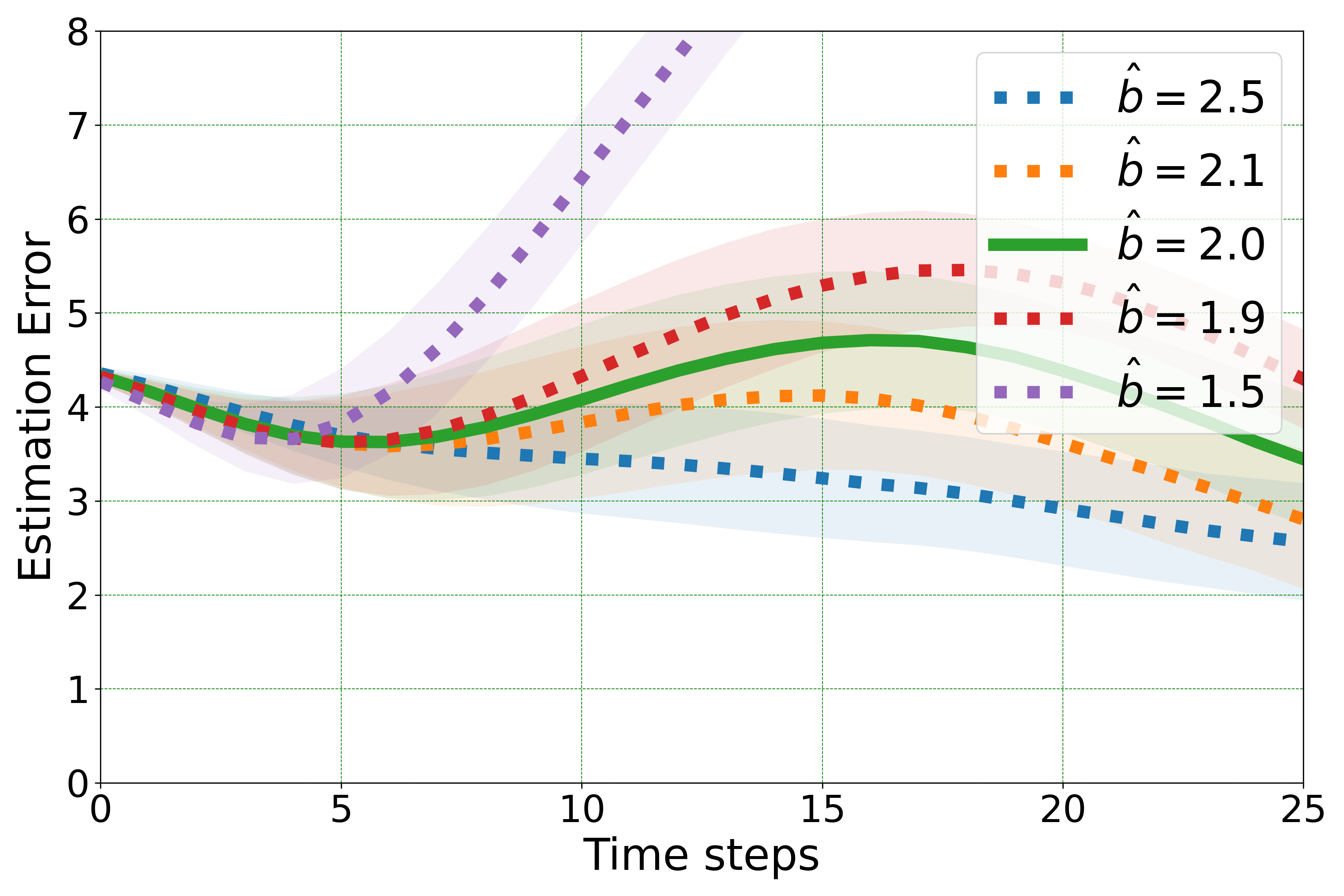}
		\caption{Stationary sensors}\label{fig:stationary-robust}
	\end{subfigure}
	\caption{Robustness. The solid lines correspond to the perfect measurement model; the dotted lines are measurement models with errors. The color bands show the standard deviation of errors across 100 repetitive trials.}
	\label{fig:Robustness}
	\vspace{-10pt}
\end{figure}
We next investigate whether our algorithm can function despite the error in the measurement model.
We simulate source seeking with ten mobile sensors and a stationary source, in which the measurement is generated by $y = 1/r^2+\nu,~\nu\overset{i.i.d}{\sim} \mathcal{N}(0,0.01)$. We provide imperfect measurement models to the EKF in the form of $h(r)=1/r^{\hat{b}}=1/r^{2+\Delta b}$, with $\Delta b$ taking values in $0,\pm 0.1,\pm 0.5$. We study the estimation error in two settings: 1) The sensors move freely using our algorithm; 2) The sensors are stationary. The results are shown in Fig. \ref{fig:Robustness}.

The robustness of our algorithm is two-fold: In Fig. \ref{fig:moving-robust}, when compared to using the perfect measurement model ($\Delta b=0$), our algorithm shows no significant degradation in estimation when $\Delta b$ is small, maintaining reasonable estimation quality despite imperfect measurement models. 
Besides, when comparing Fig. \ref{fig:moving-robust} and Fig. \ref{fig:stationary-robust}, our algorithm shows more robustness than stationary sensors, whose estimations tend to divergence when $\Delta b$ is at the value of $0.5$.

\subsection{The effectiveness of our distributed algorithm}\label{sec:distr-experiment}
	
	\begin{figure*}[ht]
		\centering
		\begin{subfigure}{0.24\linewidth}
			\includegraphics[width=\textwidth]{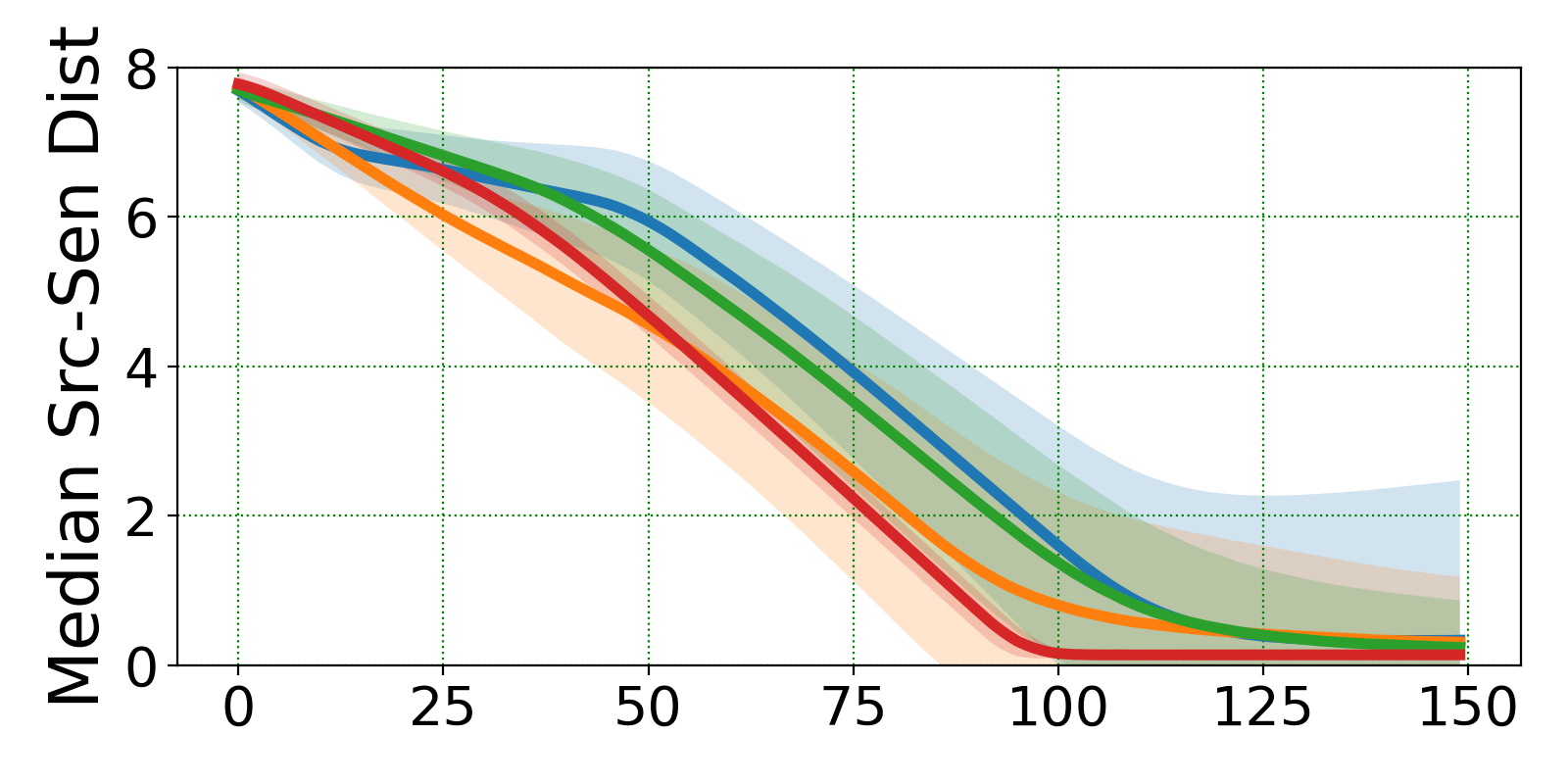}
		\end{subfigure}
		\begin{subfigure}{0.24\linewidth}
			\includegraphics[width=\textwidth]{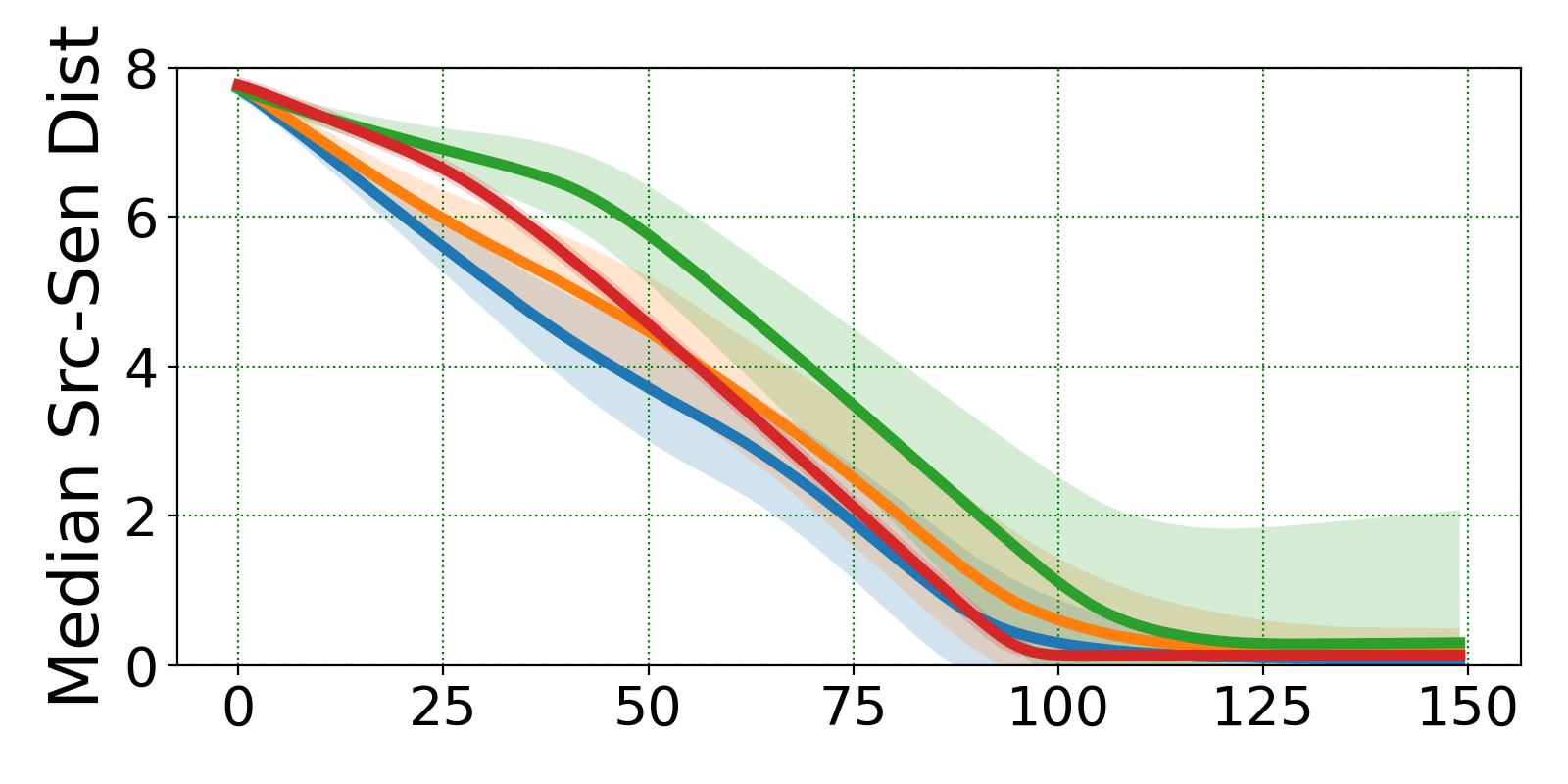}	
		\end{subfigure}
		\begin{subfigure}{0.24\linewidth}
			\includegraphics[width=\textwidth]{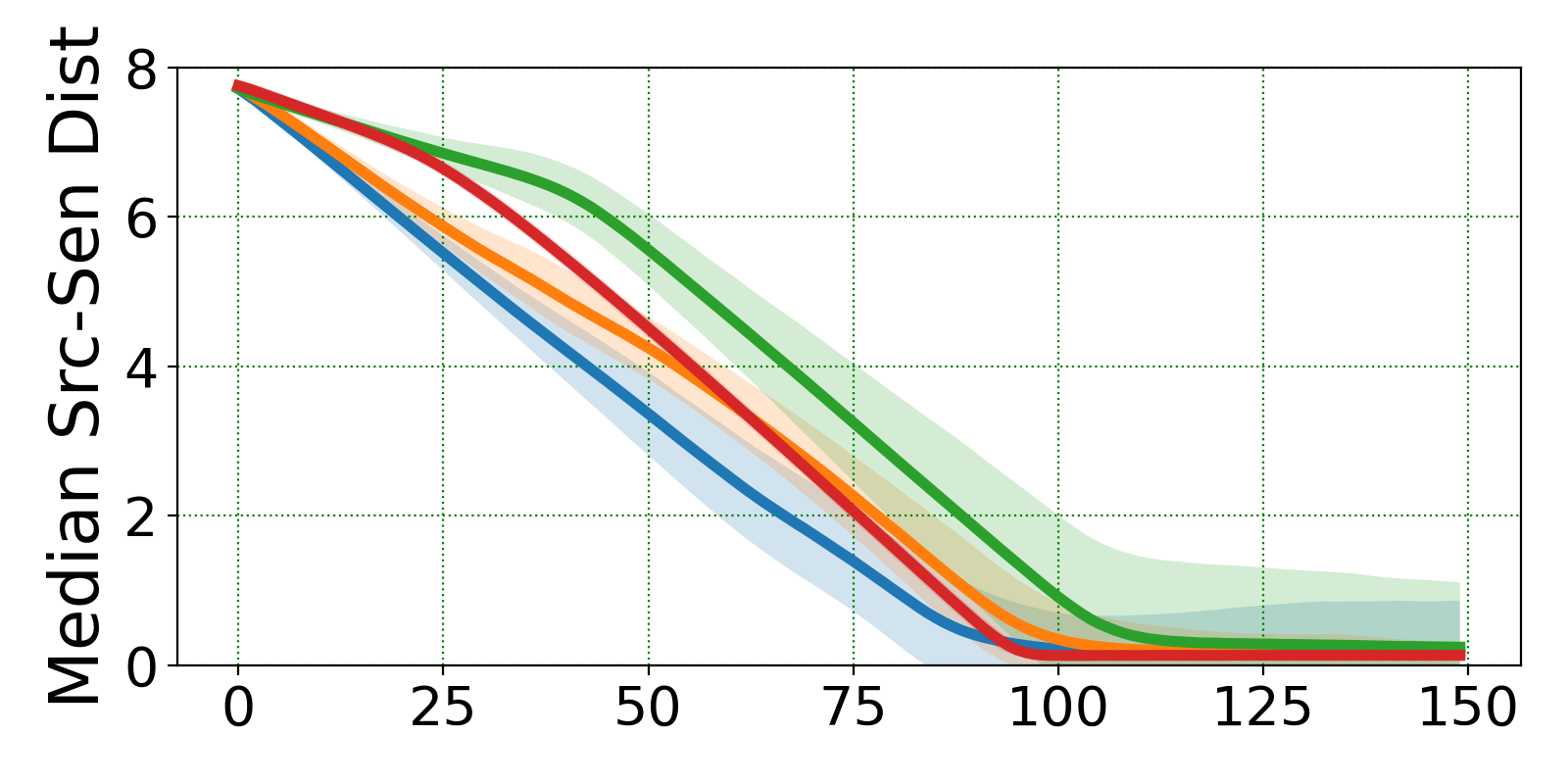}	
		\end{subfigure}
			\begin{subfigure}{0.24\linewidth}
		\includegraphics[width=\textwidth]{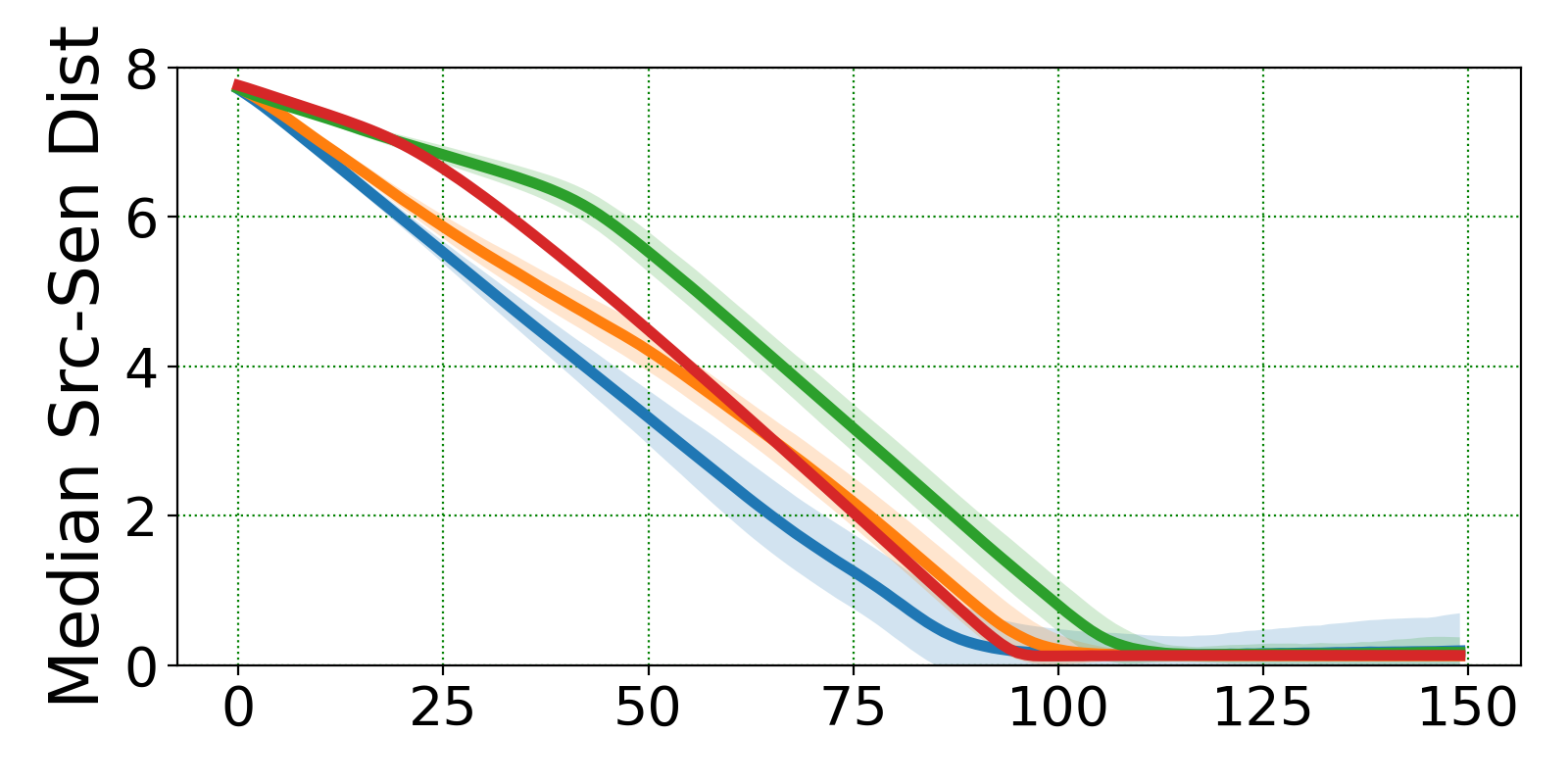}	
	\end{subfigure}
		\begin{subfigure}{0.24\linewidth}
		
		\includegraphics[width=\textwidth]{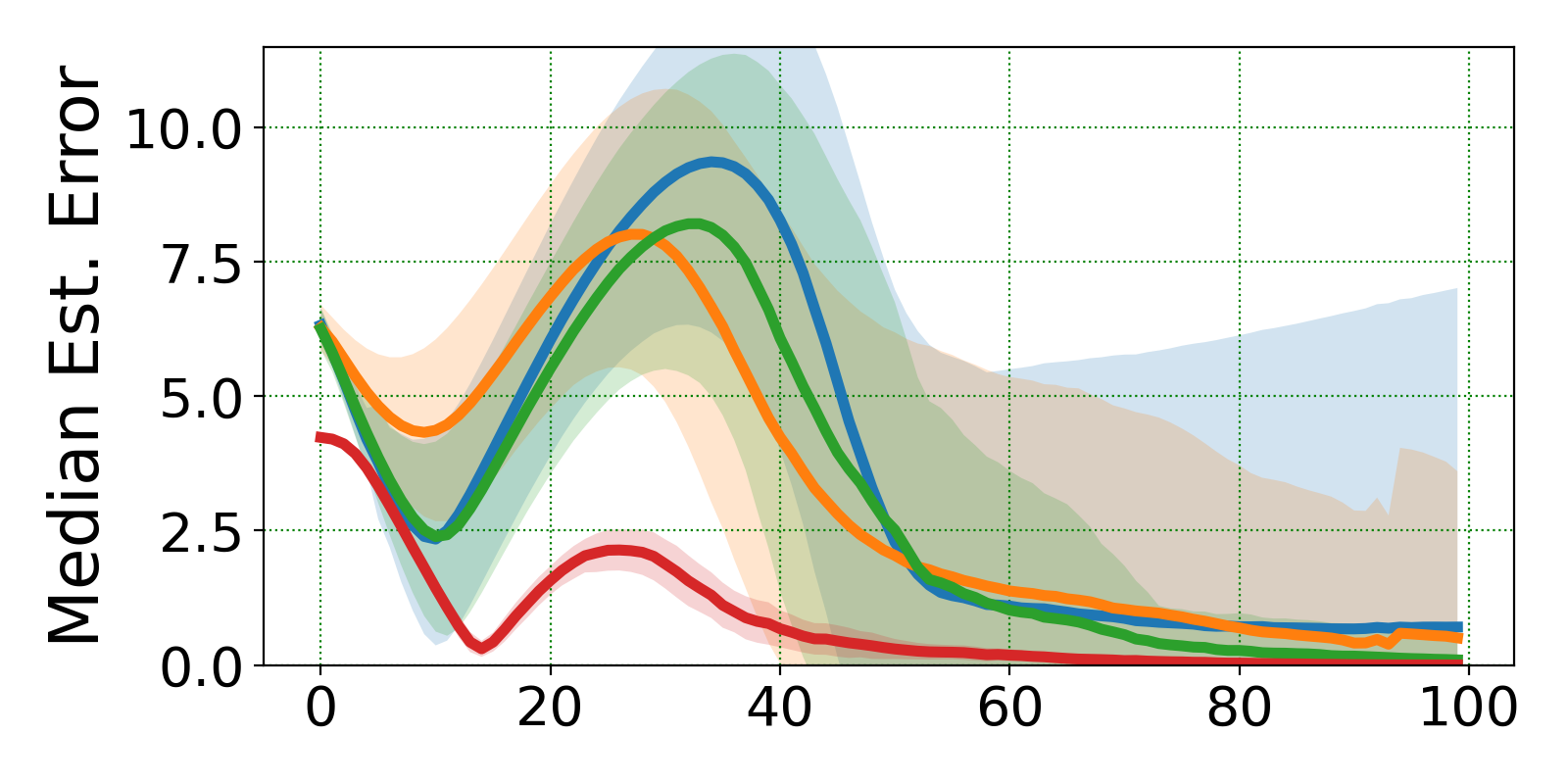}	\caption{4 Sensors}\label{subfig:4-sensors-internal-compare}
		\end{subfigure}
		\begin{subfigure}{0.24\linewidth}	\includegraphics[width=\textwidth]{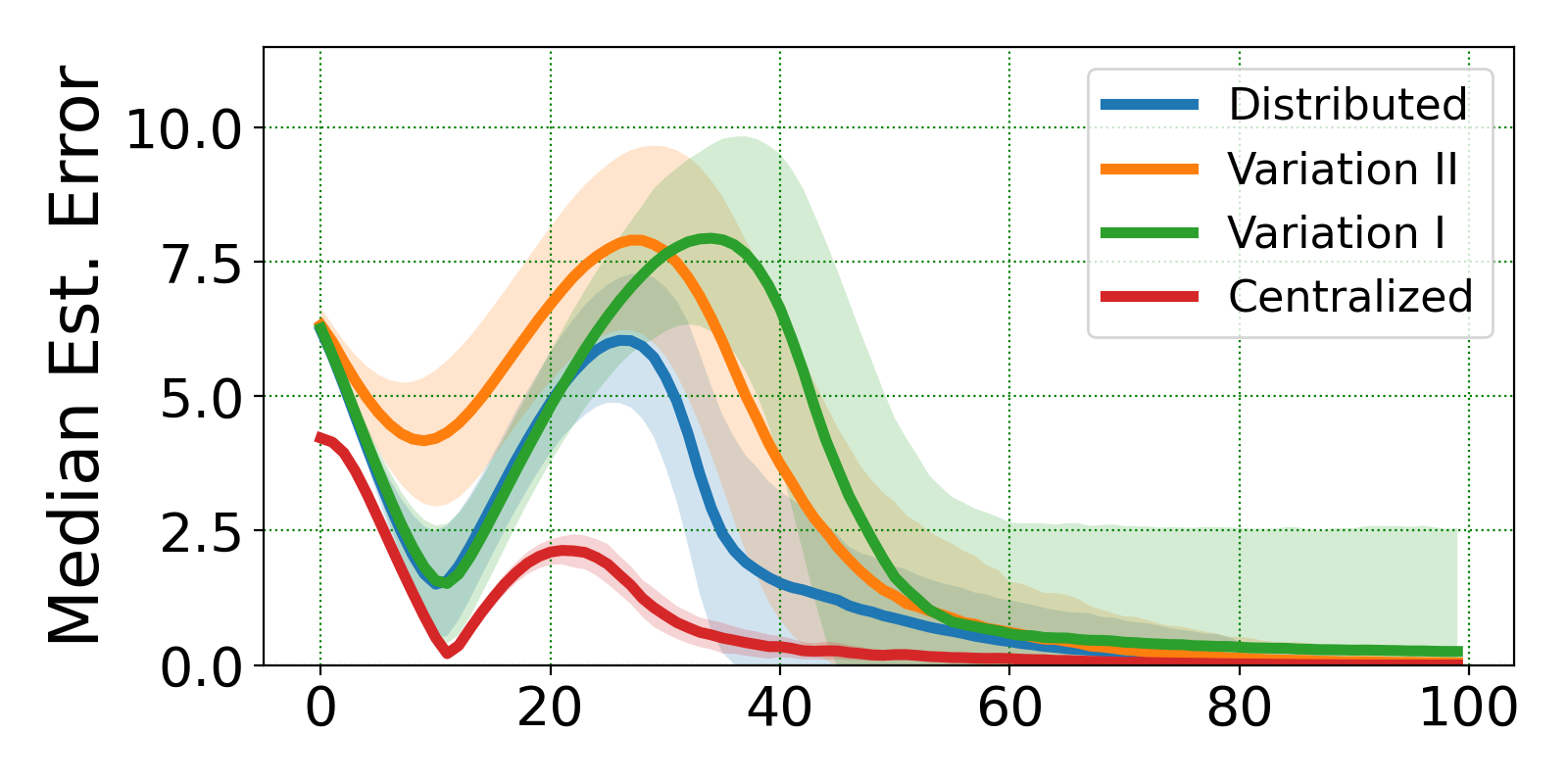}	\caption{10 Sensors}
		\end{subfigure}
		\begin{subfigure}{0.24\linewidth}
	\includegraphics[width=\textwidth]{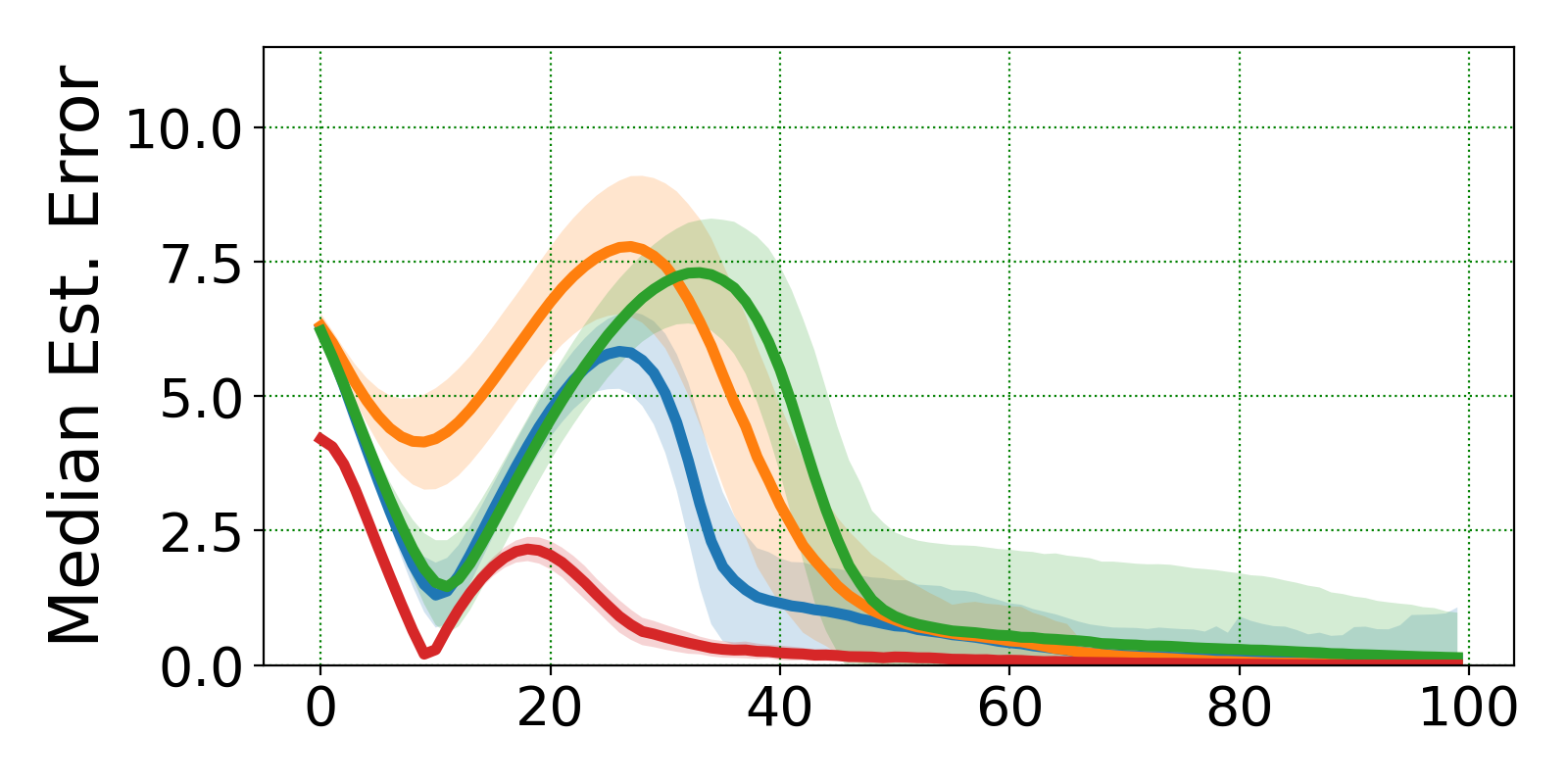}	\caption{20 Sensors}
		\end{subfigure}
    	\begin{subfigure}{0.24\linewidth}
		\includegraphics[width=\textwidth]{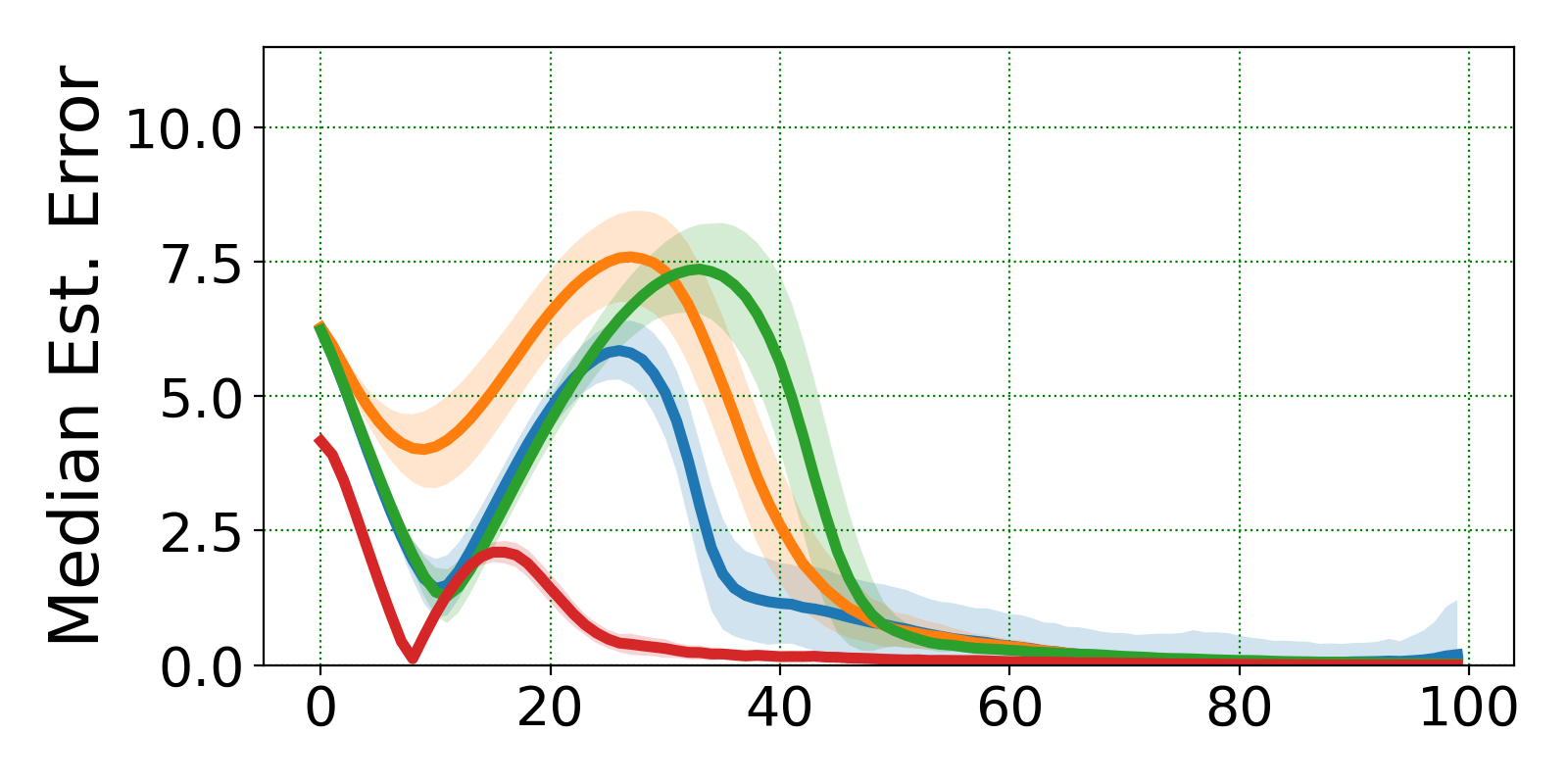}	\caption{40 Sensors}
	\end{subfigure}
		\caption{Comparison of the time evolution of sensor-source distance and estimation error for different variations, with the additional case where $4$ sensors are used. The x-axes correspond to time steps. The `Median Src-Sen Dist' in the y-axis label of the subfigures in the first row is defined as $\underset{i}{\text{median}} ~||p_i-q||$, where $||p_i-q||$ is sensor $i$'s distance to the source. The `Median Est. Error' in the second row is defined as $\underset{i}{\text{median}} ~ ||\hat{q}_i-q||$, where $||\hat{q}_i-q||$ is sensor $i$'s estimation error. Blue curves: our distributed algorithm. Green curves: Variation I. Orange curves: Variation II. Red curves: our centralized algorithm.
		}\label{fig:Internal-Compare}
	\end{figure*}

\begin{figure*}[ht]
\centering
\begin{subfigure}[t]{0.24\linewidth}
		\centering
		\includegraphics[width=\linewidth]{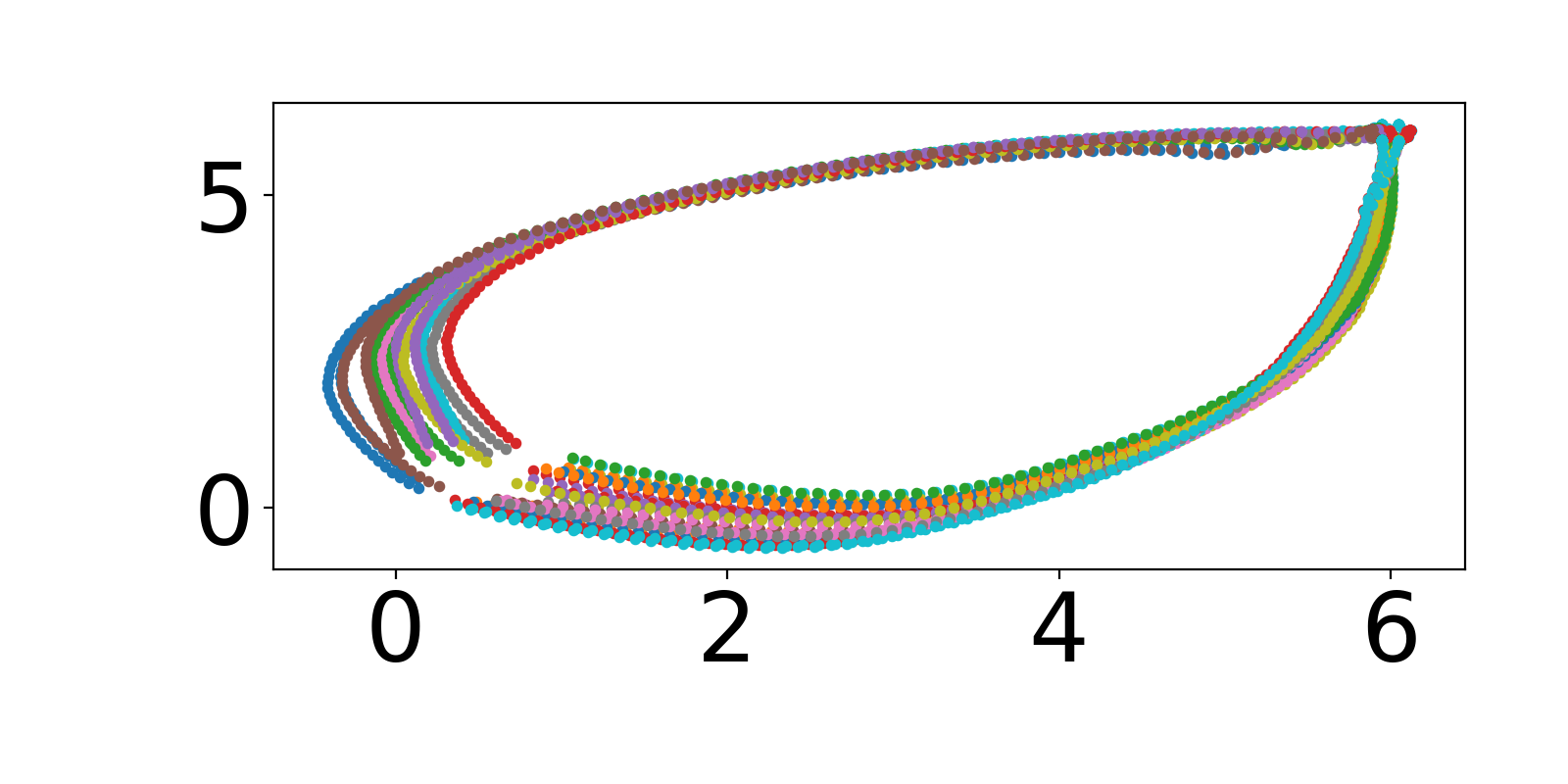}\caption{Centralized}
	\end{subfigure}
\begin{subfigure}[t]{0.24\linewidth}
		\centering
		\includegraphics[width=\linewidth]{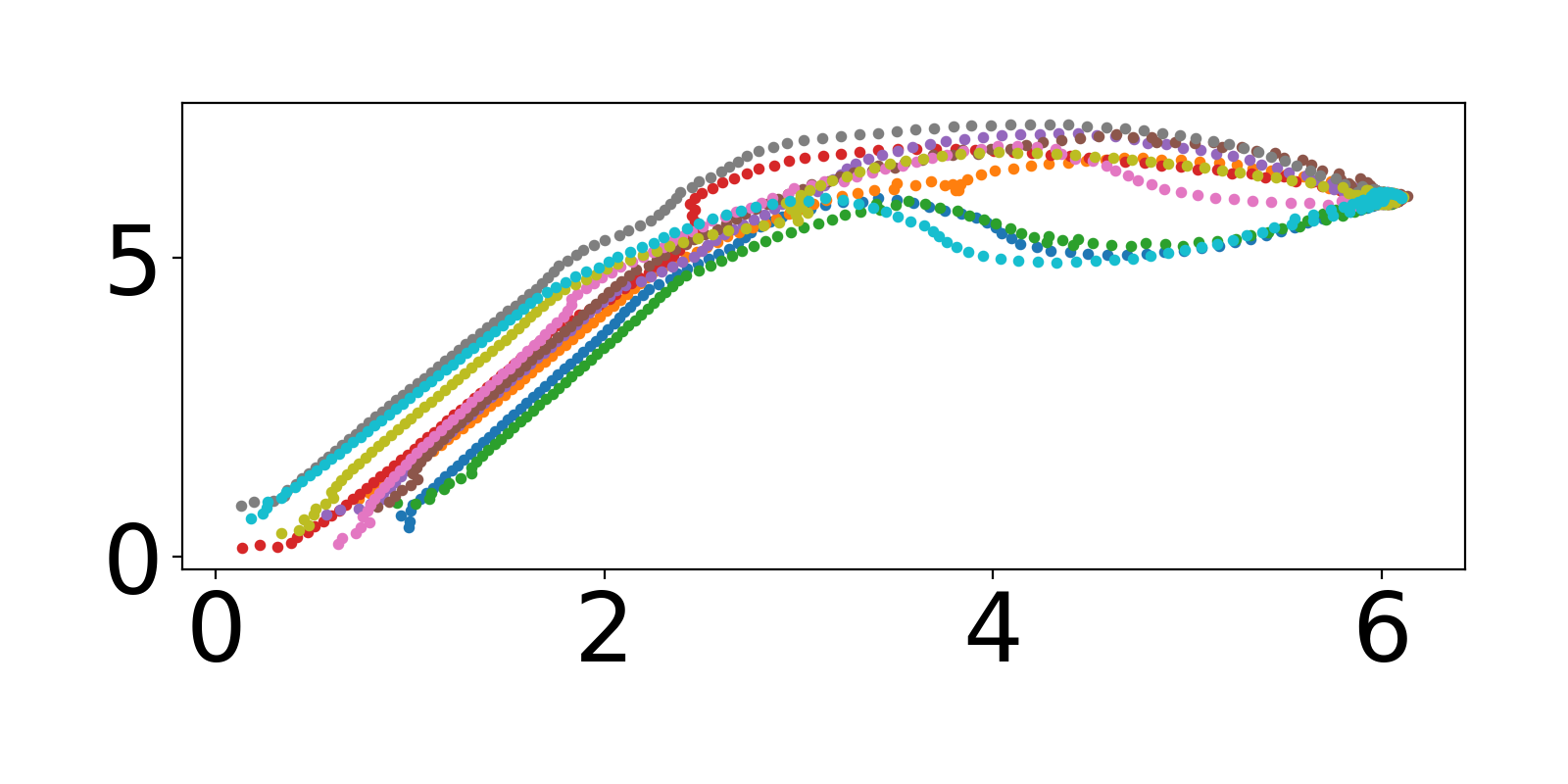}\caption{Distributed }
	\end{subfigure}
\begin{subfigure}[t]{0.24\linewidth}
		\centering
		\includegraphics[width=\linewidth]{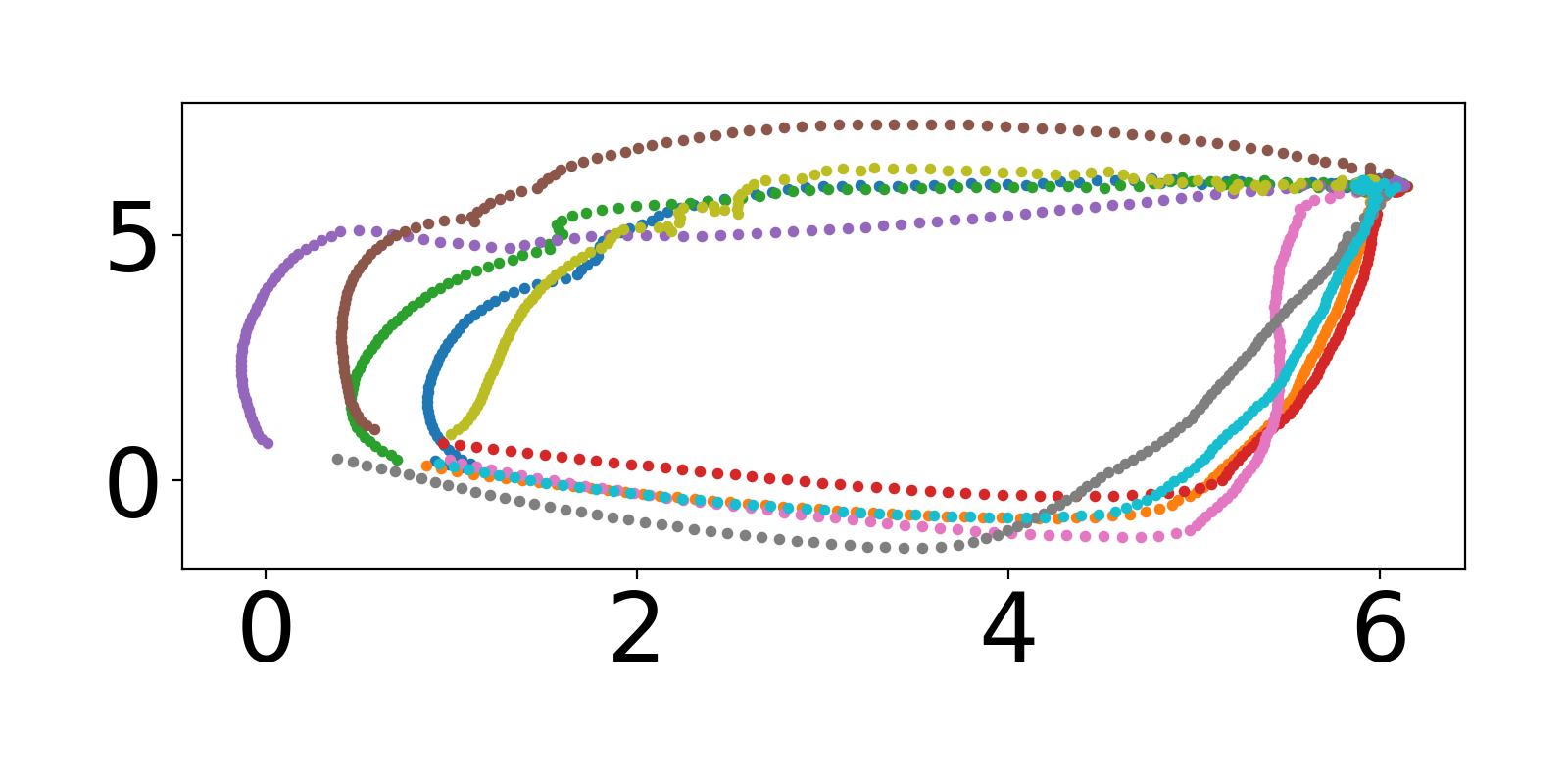}\caption{Variation I}
	\end{subfigure}
\begin{subfigure}[t]{0.24\linewidth}
		\centering
		\includegraphics[width=\linewidth]{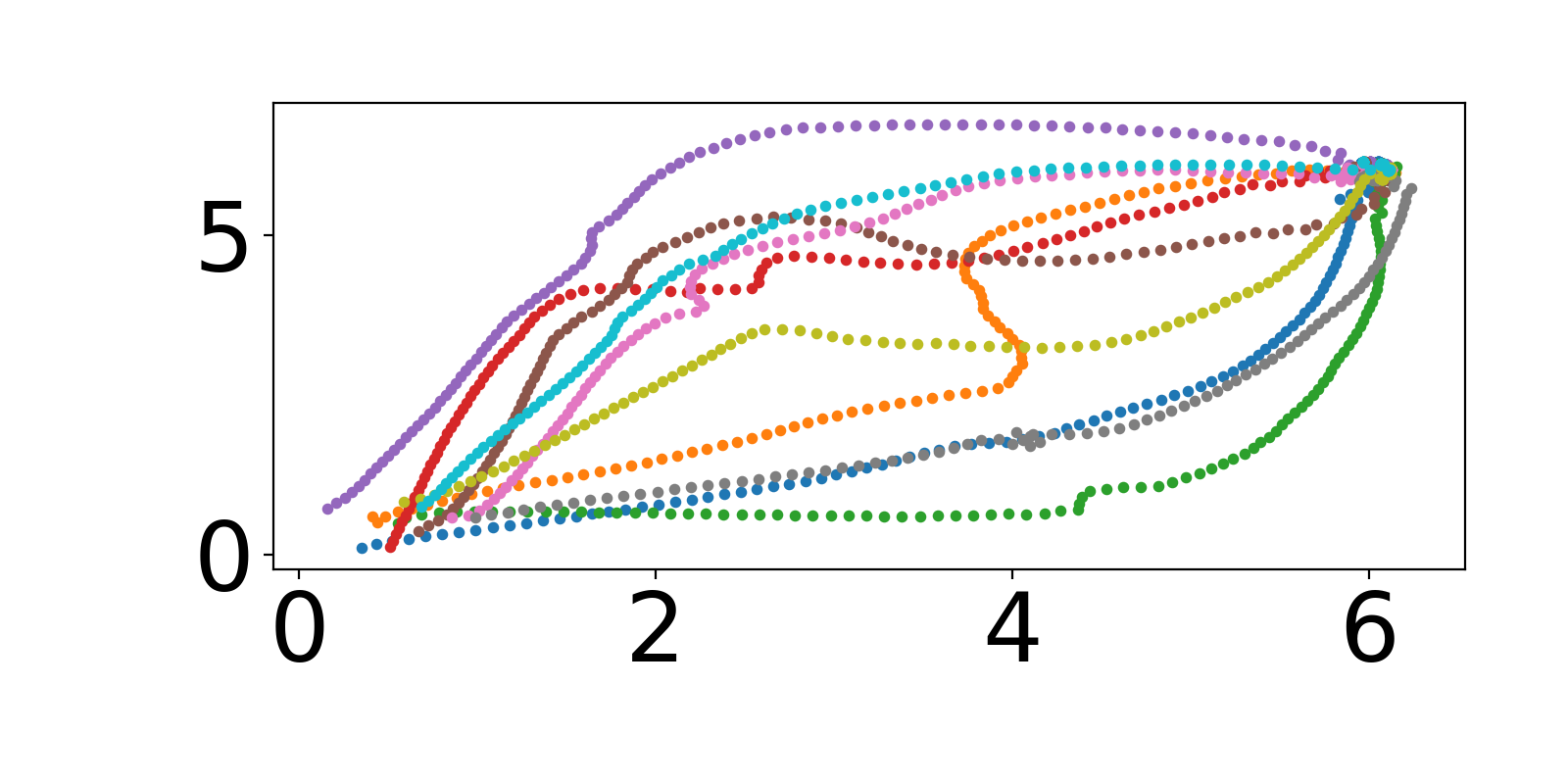}\caption{Variation II}
	\end{subfigure}
	\caption{Sample trajectories from the experiments in Fig. \ref{fig:Internal-Compare}. The samples are taken randomly from 100 simulations. There are $10$ sensors in each figure. The figures illustrate the general trend that the centralized algorithm tends to spread out more than the distributed variations in the initial steps. Also, Variation I spreads out more than Variation II and our distributed algorithm.}
	\label{fig:Variation-Sample-Trajectory}
\end{figure*}
	In this subsection, we showcase the effectiveness of our algorithm when extended to the distributed setting. {In Fig. \ref{fig:Internal-Compare}-\ref{fig:Delay}, the x-axes are time steps, and the color bands show the standard deviation across 100 repetitive experiments. }
	Specifically, our experiments compare the following groups: 
	\begin{itemize}
		\item \textbf{Distributed Algorithm \ref{alg:distri-alg}}: Consensus is used in estimation and gradient update. The sensors have a shared loss function.
		\item \textbf{Variation I}: Consensus is used in estimation but not gradient update. The sensors have different local loss functions. 
		\item \textbf{Variation II}: No consensus in either estimation or gradient update.   Each sensor gathers measurement information from its neighbors and use local EKF for estimation. The sensors have different local loss functions.
		\item \textbf{Centralized Algorithm \ref{alg:centralized-alg}}.
	\end{itemize}
	The first three groups are implemented as distributed algorithms. The fourth is Algorithm \ref{alg:centralized-alg} with a central controller, which serves as the control group. Detailed descriptions for the two variants I and II are provided in Appendix \ref{append:Distributed-Variants}.  
	All sensors use the same measurement model as defined in \eqref{eq:experiment-measurement-model}. We simplify the sensor communication network so that the distributed algorithms use a static, undirected circulant network with each mobile sensor connected with two neighbors. 
    
Although the neighbors of a sensor in physical locations could change over time, the communication neighborhood is fixed and does not vary in our experiments. The robots communicate via a single-hop, static network, where each communicates only with a fixed set of neighbors on the network.\footnote{This feature is also implementable in our hardware experiments because our lab space is small enough that all sensors are within the maximal range of communication with one another.} Using a fixed and sparsely connected communication network simulates the effect of limited information in the distributed setting while allowing simpler, more interpretable results from the experiments. However, fixed communication is indeed a limiting assumption. Our algorithm can be revised to accommodate a more general, potentially time-varying network, such as the ad-hoc network where the robots can only communicate with teammates within a limited distance. Based on the previous studies on distributed consensus and optimization over time-varying communication graphs, e.g., \cite{you2013consensus,nedic2014distributed,yang2019survey}, 
we expect our algorithm to achieve good performance as long as the network remains connected for a sufficient fraction of the time. This is left as our future work.

We repeat the experiments of seeking a static source. The sensor locations are initialized in the same way as in the previous subsection \ref{sec:advantage}. The source is still situated at $(6,6)$.  We use the fixed location $(3,3)$ as the initial guess of the source location for the centralized control group. As for the first three distributed groups, each sensor uses an independent random location in $[0,3]\times [0,3]$ as the initial guess. 

The main results are shown in Fig. \ref{fig:Internal-Compare}, showing the time evolution of median sensor-source distance and estimation error. All variations converge to the source in around $100$-$125$ steps. Sections \ref{sec:entire-team-convergence} and \ref{sec:success-perfect-finish} further analyze the data from Fig. \ref{fig:Internal-Compare} to demonstrate the advantage of our distributed algorithm over Variations I and II in terms of full-team convergence and success/perfect finish rates.
	
\subsubsection{Discussion on the main results}\label{sec:distributed-main-result}
The curves in the first row of Figure \ref{fig:Internal-Compare} represent $\median_i ~||p_i-q||$, the median of source-sensor distances within the team. The second row shows the median estimation error $\median_i ~||\hat{q}_i-q||$. All the estimation error plots in Figure \ref{fig:Internal-Compare} show the distributed variations have an overall worse estimation than the centralized algorithm. Also, note that when the number of sensors is small, as in Figure \ref{subfig:4-sensors-internal-compare}, the centralized algorithm has a clear advantage over the distributed variations in estimation and convergence to the source. These results exemplify one of the weaknesses of the distributed variations: each agent has access to much less measurement information than the centralized algorithm.

However, as the number of sensors increases, we can observe that distributed variations improve in both estimation and convergence to the source. As the number of sensors exceeds $10$, Variation II and our distributed algorithm achieve comparable convergence to the source as the centralized algorithm. The centralized algorithm still reaches the source the fastest, in around $100$ steps, while Variation II and our distributed algorithm take slightly longer than that. But interestingly, the distance to the source for the centralized algorithm seems to decrease slower than these two variations in the initial steps, and the reason is likely that the centralized algorithm tends to spread out more than the distributed variations due to the stronger coordination in sensor movements, as shown in the illustrative example in Figure \ref{fig:Variation-Sample-Trajectory}. 

It is also interesting that Variation I seems to be consistently worse than other distributed variations in convergence to the source. Recall that Variation I performs consensus in estimation but not in gradient update, giving it more information than Variation II.  We do observe from Figure \ref{fig:Internal-Compare} that in the early stages, its estimation error is comparable with our distributed algorithm and is lower than Variation II. However, as illustrated in Figure \ref{fig:Variation-Sample-Trajectory}, Variation I tends to spread out more than other distributed variations, yet it does not coordinate the movement among the agents, meaning it is slow to converge to the source. These factors potentially explain why Variation I does not have an advantage over Variation II in terms of the $\median_i ~||p_i-q||$ metric.

The main results suggest that with a large number of sensors, even with the limited information for each sensor, as in Variation II, the distributed variations can still perform well. The sheer size of the distributed sensor team allows a high chance of some sensors getting close to the source and obtaining good measurement readings. Nevertheless, our distributed algorithm still outperforms Variations I and II due to the additional consensus in estimation and movement coordination.

Overall, our distributed algorithm converges to the source at a comparable rate as the centralized implementation. It also converges to the source slightly faster than Variations I/II and exhibits more improvement when using more sensors. Furthermore, our algorithm maintains a clear advantage in estimation accuracy over Variations I/II throughout the seeking process. 

\subsubsection{The convergence of the entire team}\label{sec:entire-team-convergence}
The advantage of our distributed algorithm over Variations I and II is more prominent in terms of the entire team's convergence to the source. Fig. \ref{fig:farthest-sensor} shows the performance of the variations in terms of $\max_i ||p_i-q||$ using the same data as in Fig. \ref{fig:Internal-Compare}, where $||p_i-q||$ is the distance between sensor $i$ and the source. The convergence of $\max_i ||p_i-q||$ to $0$ thus characterizes the entire team's convergence to the source, a condition stronger than the one we proposed in Eq. (4). Figure \ref{fig:farthest-sensor} shows the $\max_i ||p_i-q||$  of our distributed algorithm converges to the source almost as fast as the centralized algorithm when the number of sensors exceeds $10$; meanwhile, as the number of sensors increases, the performance of Variation I gets slightly worse, and Variation II gets significantly worse. These results demonstrate that with consensus in estimation and motion planning, our distributed algorithm can ensure all sensors come to the source sufficiently fast. In contrast, without sufficient coordination via consensus, as in Variations I and II, some sensors could perform much worse than others.
\begin{figure*}[ht]
    \centering
    \begin{subfigure}{0.24\linewidth}
    \includegraphics[width=\textwidth]{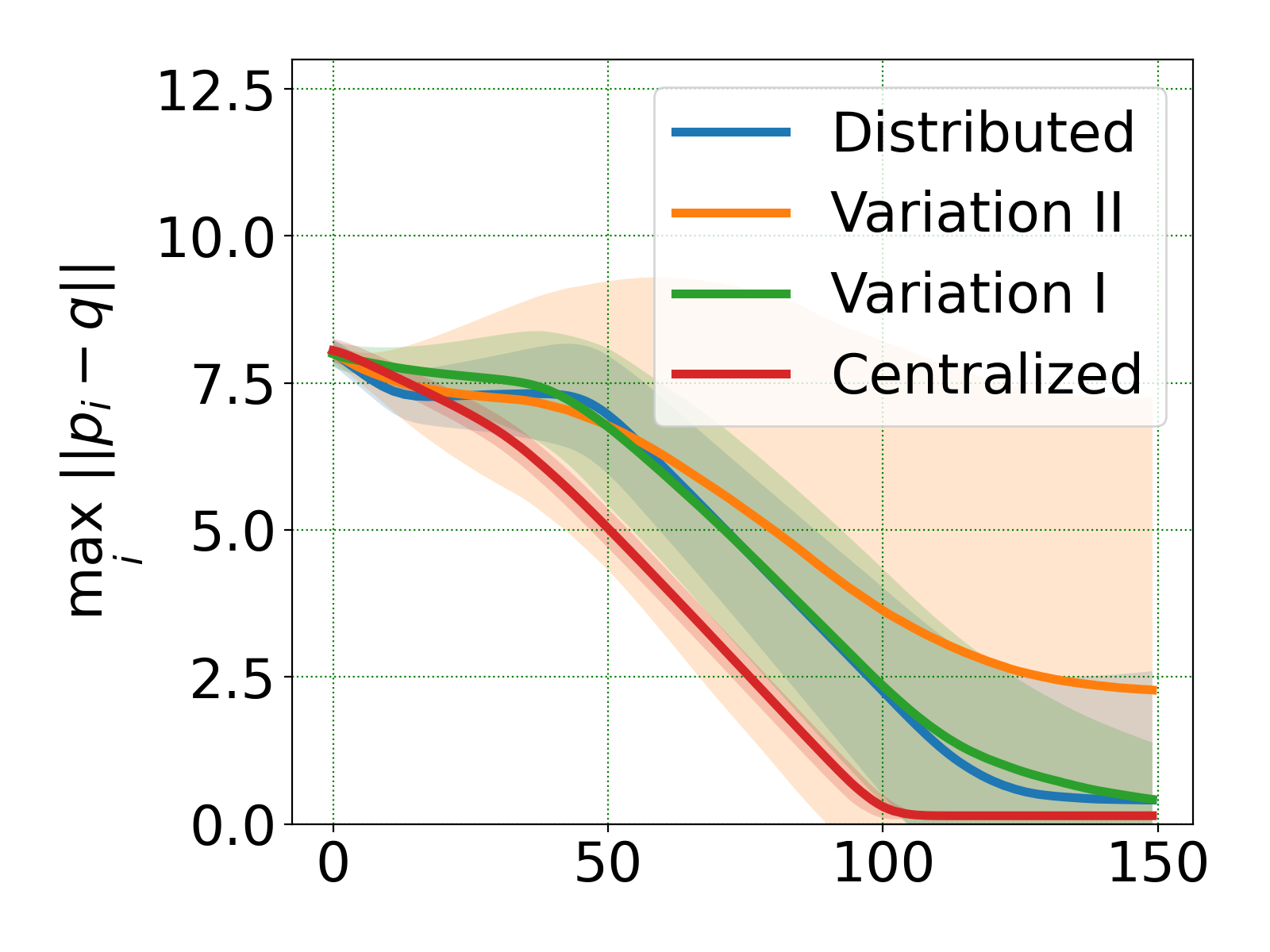}	\caption{4 Sensors}
    \end{subfigure}
    \begin{subfigure}{0.24\linewidth}	\includegraphics[width=\textwidth]{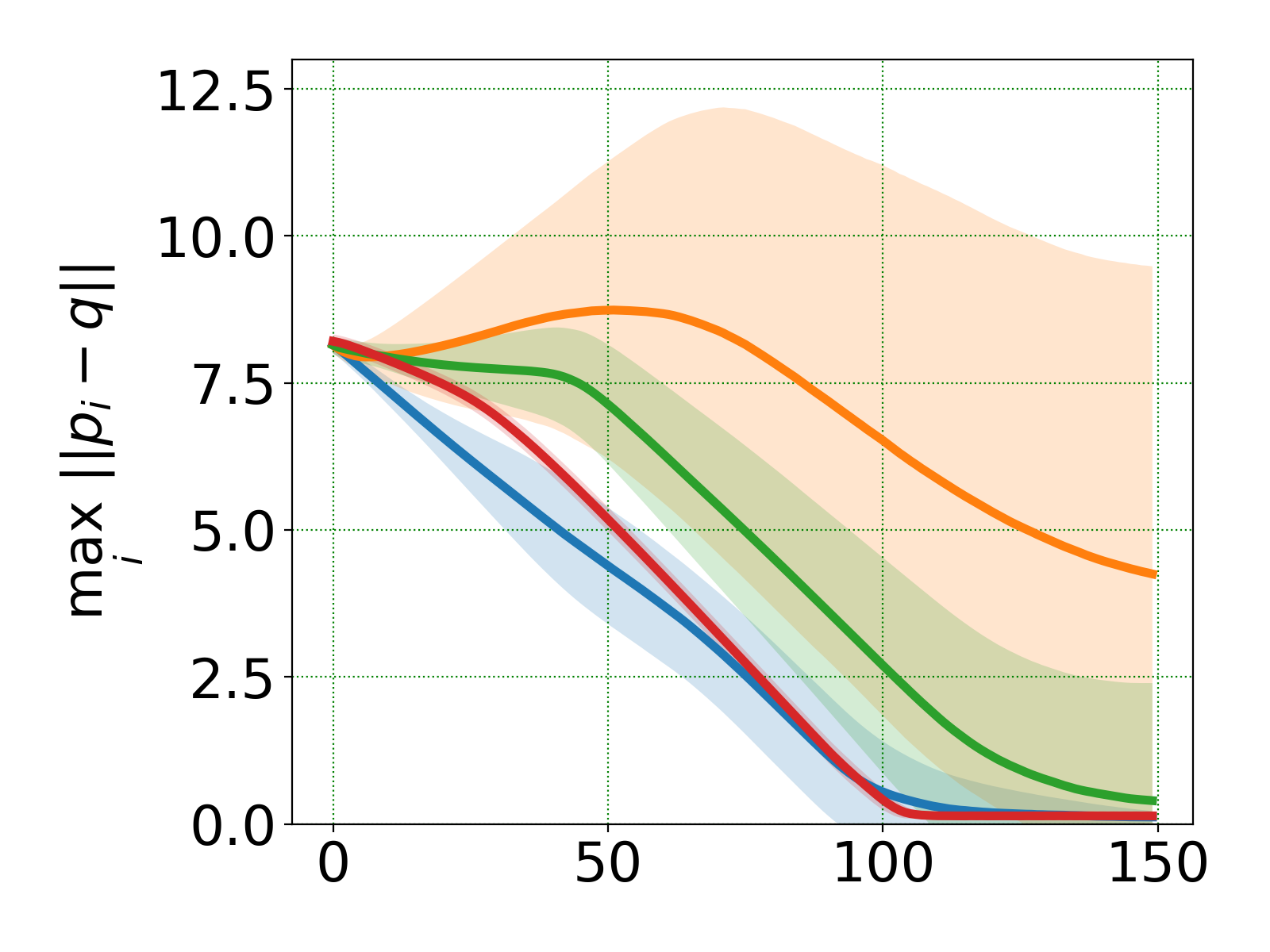}	\caption{10 Sensors}
    \end{subfigure}
    \begin{subfigure}{0.24\linewidth}
\includegraphics[width=\textwidth]{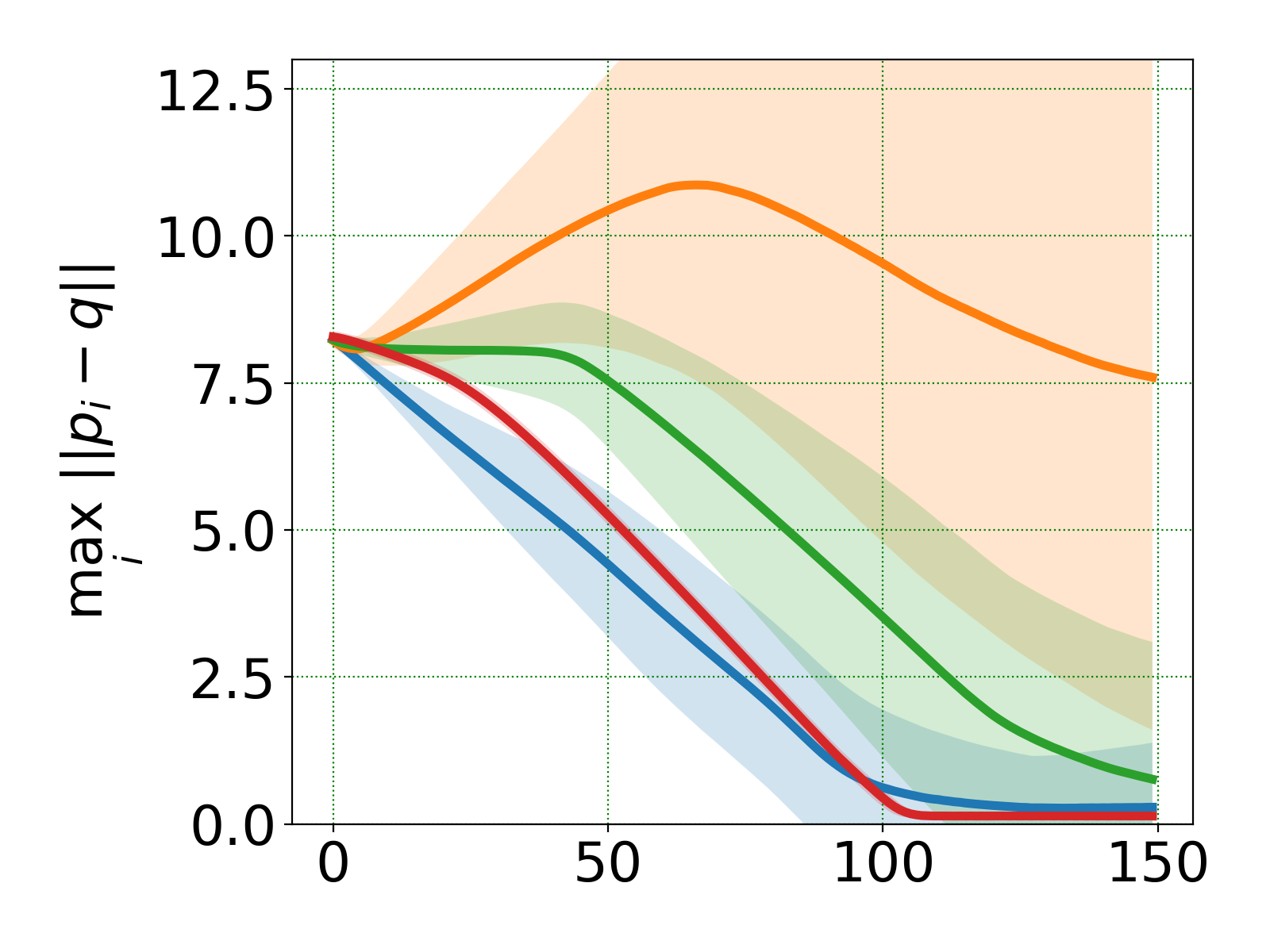}	\caption{20 Sensors}
    \end{subfigure}
    \begin{subfigure}{0.24\linewidth}
    \includegraphics[width=\textwidth]{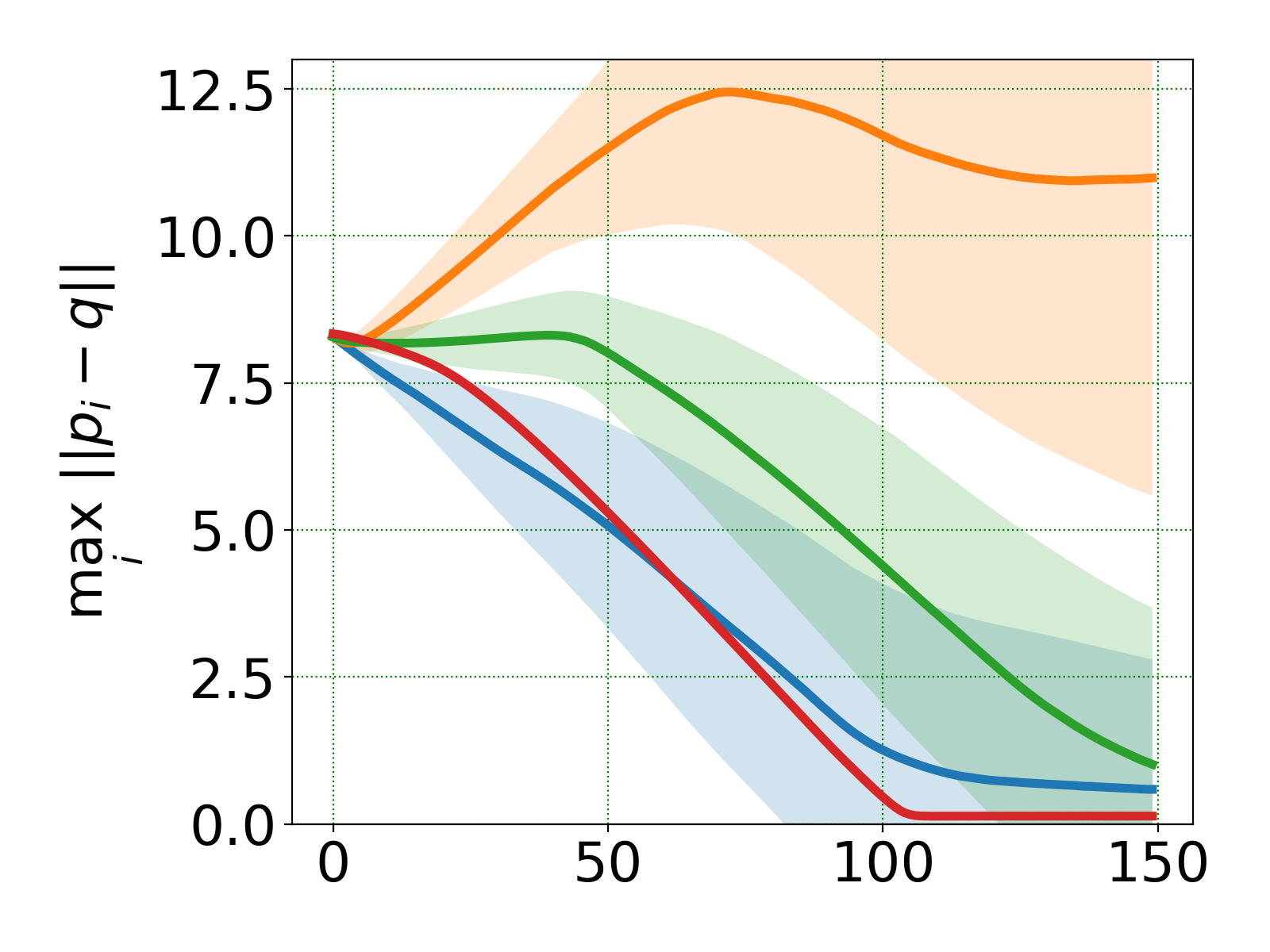}	\caption{40 Sensors}
\end{subfigure}
    \caption{Comparison of the entire team's convergence to the source for different variations, in terms of $\max_i ||p_i-q||$. The figures are generated using the same data in Fig. \ref{fig:Internal-Compare}. The x-axes correspond to time steps.
    }\label{fig:farthest-sensor}
\end{figure*}

\subsubsection{The Success and Perfect Finish Rates}\label{sec:success-perfect-finish}

Fig. \ref{fig:Perfect-Finish-Robustness} shows the perfect finish/success rates of different variations of our algorithm. The figures are generated using the same data from Fig. \ref{fig:Internal-Compare}. A robot is deemed to have reached the source if its distance from the source is below $0.2$. A `success' is defined in Eq. \eqref{SourceSeekingGoal}, corresponding to a trial where \textbf{at least one} sensor reaches the source at the final step. A `perfect finish' is a trial where \textbf{all} sensors achieve that, which, like the $\max_i ||p_i-q||$ metric, is a notion characterizing the entire team's convergence. The success and perfect finish rates are computed over $100$ trials. The results show that all variations have close to $100\%$ success rates, which demonstrates the effectiveness of our information-based source-seeking framework. The perfect finish rates of our distributed algorithm are very close to one over any number of sensors. In contrast, the perfect finish rates for Variation I and II drop significantly as the number of sensors increases. The perfect finish rate results again demonstrate the crucial role of consensus in estimation and motion planning in our distributed algorithm: It allows the entire team to reach the source, achieving a more challenging control goal than Eq. \eqref{SourceSeekingGoal}.

\begin{figure}[ht]
		\centering
		\begin{subfigure}{0.45\linewidth}
			\includegraphics[width=\linewidth]{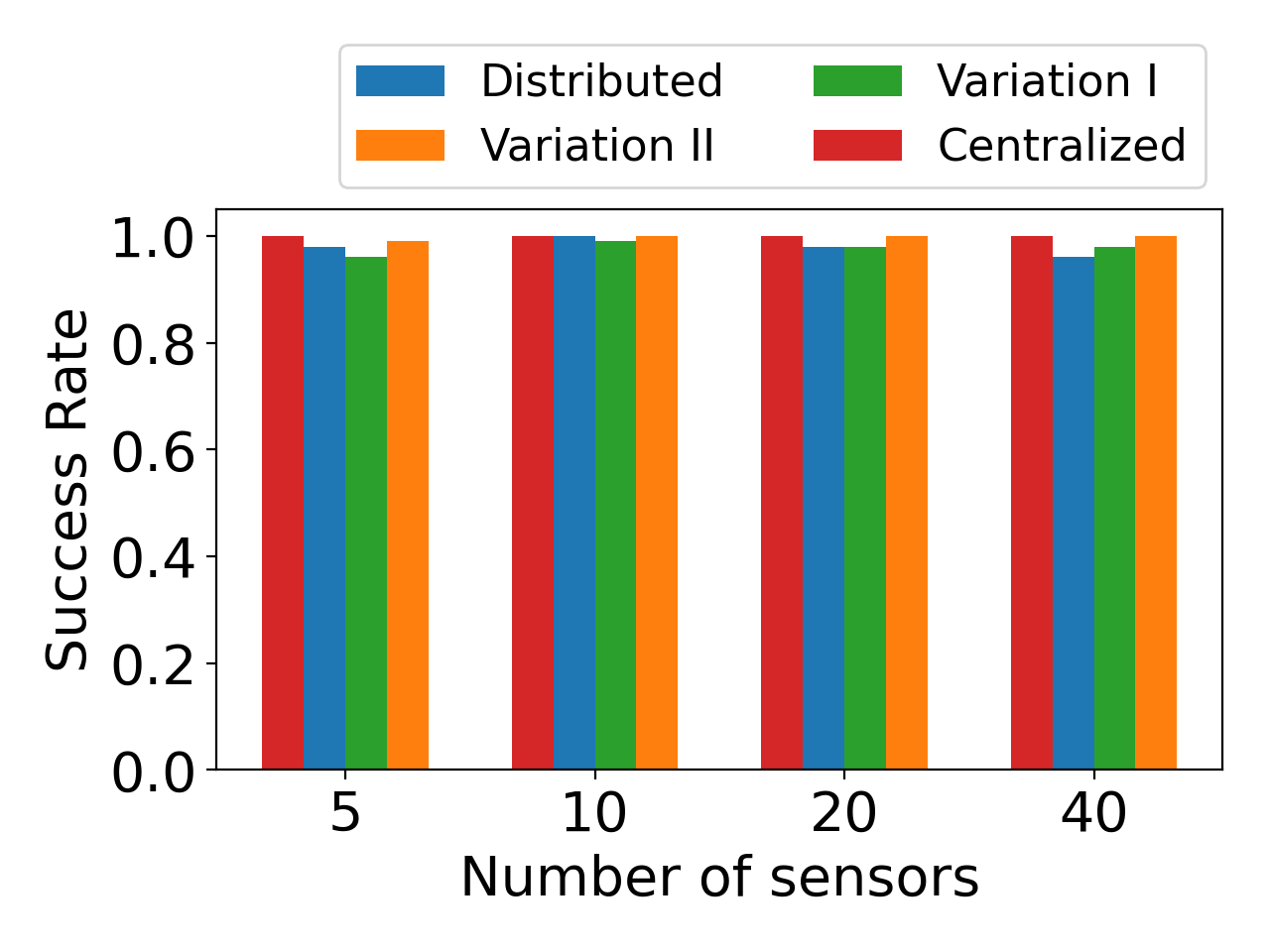}\caption{Success Rate}
		\end{subfigure}
		\begin{subfigure}{0.45\linewidth}
		\includegraphics[width=\linewidth]{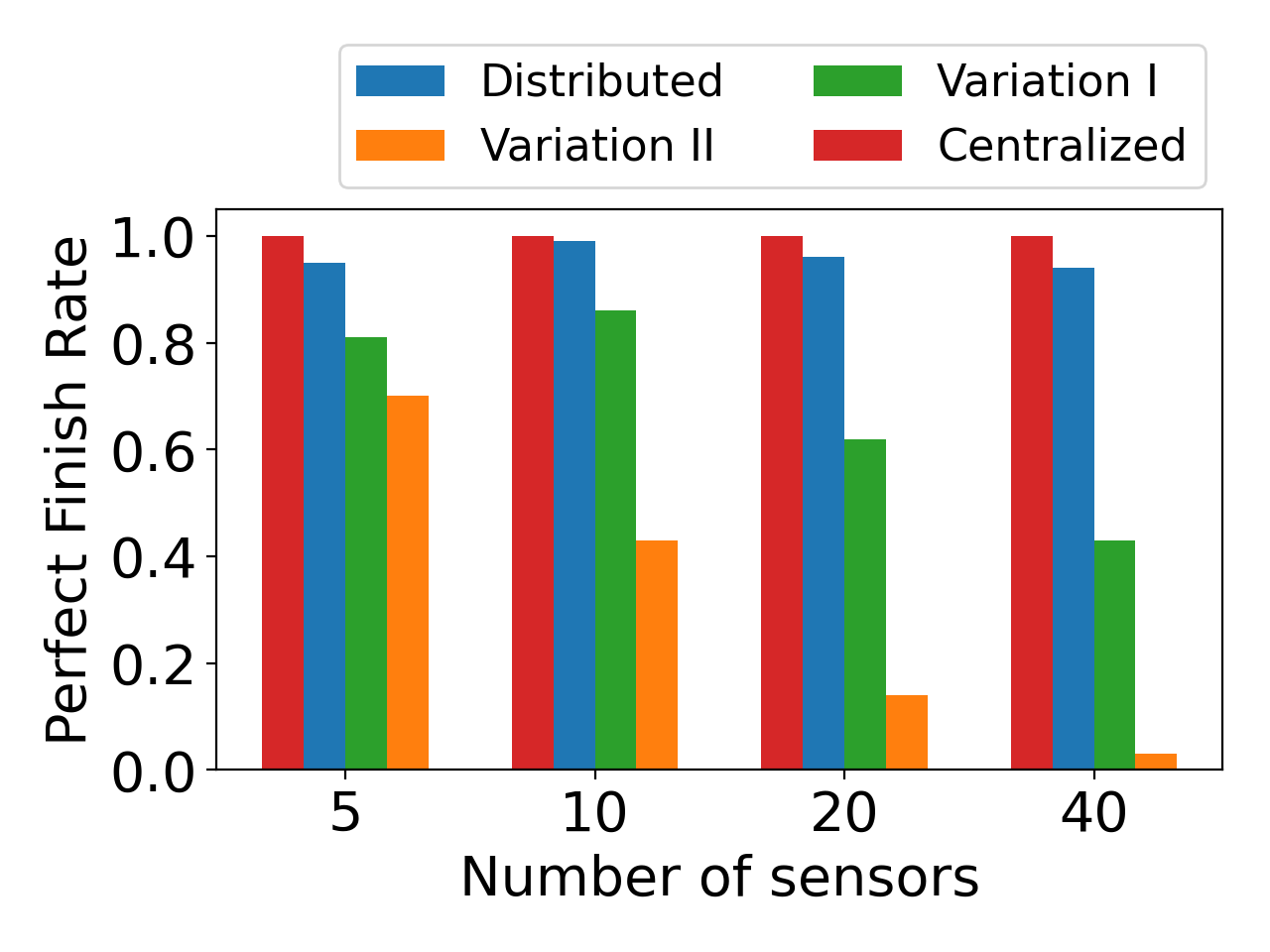}\caption{Perfect Finish Rate}
		\end{subfigure}
		\caption{The success/perfect finish rates for the variations.}\label{fig:Perfect-Finish-Robustness}
\end{figure}

\subsubsection{Discussion on non-monotonic estimation error}

\begin{figure*}[ht]
\centering
\begin{subfigure}[t]{0.24\linewidth}
		\centering
		\includegraphics[width=\linewidth]{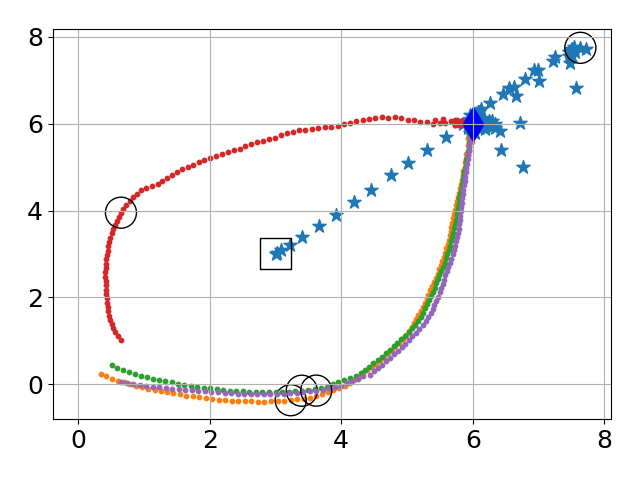}\caption{Centralized}
	\end{subfigure}
\begin{subfigure}[t]{0.24\linewidth}
		\centering
		\includegraphics[width=\linewidth]{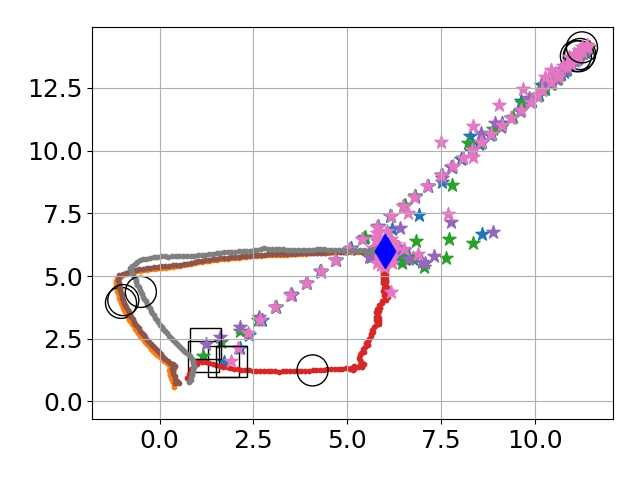}\caption{Distributed }
	\end{subfigure}
\begin{subfigure}[t]{0.24\linewidth}
		\centering
		\includegraphics[width=\linewidth]{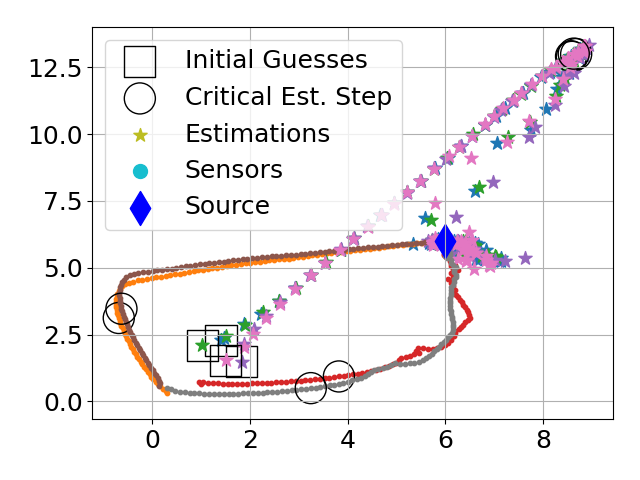}\caption{Variation I}
	\end{subfigure}
\begin{subfigure}[t]{0.24\linewidth}
		\centering
		\includegraphics[width=\linewidth]{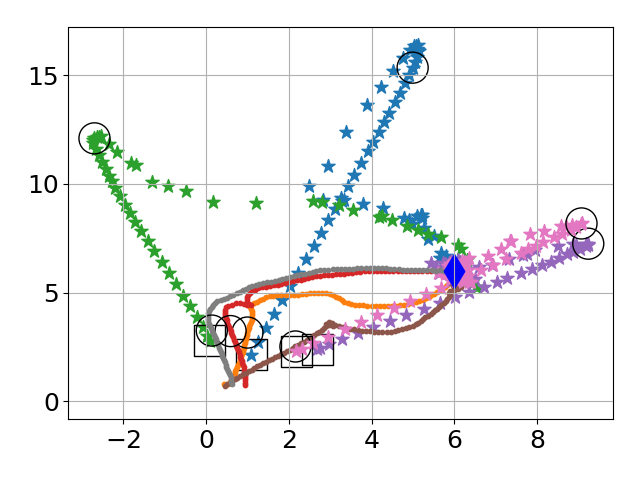}\caption{Variation II}\label{subfig:Variation-Details-Variation-II}
	\end{subfigure}
	\caption{Sample trajectories and estimations for the variations of our algorithm. The samples are randomly selected from 100 simulations. For visualization clarity, we only show the results for $4$-sensor teams. The circles with label \textbf{`Critical Est. Step'} indicates the estimations and sensor locations when the estimation error is the largest. }
	\label{fig:Variation-Details}
\end{figure*}

In Fig. \ref{fig:Internal-Compare}, we observe that the estimation errors do not decrease monotonically. Indeed, classical results for Kalman filters show that monotonic convergence can be achieved, but that applies only to linear systems, where both the measurement and motion models are linear. In contrast, the estimation of EKF on a non-linear measurement model, the estimation error of which is generally not monotonic and could diverge given bad initialization. Although the criteria for the convergence of EKF is discussed in detail in \cite{reif1999stochastic}, we are unaware of any literature that discusses the criterion for monotonicity of the estimation error for EKF.

We also observe in Fig. \ref{fig:Internal-Compare} the somewhat counter-intuitive behavior that source-sensor distances can decrease as the estimation errors increase. Fig. \ref{fig:Variation-Details} unveils the reason for this curious phenomenon. It shows a common situation in the experiments, where the estimation from the EKF could move toward and overshoot the true source location but then comes back to the source as the sensors spread out and move closer to the source. During the overshoot, the estimation error gets larger, but the estimation still guides the sensors to move toward the direction of the source. That is why we observe that the estimation error increases while the sensor-source distance decreases during this period. We also note that in Fig. \ref{subfig:Variation-Details-Variation-II} for Variation II, the estimations of two sensors are in the direction of the source, while the estimations for the other two are not in the direction of the source. But because the two sensors with the `right direction' come closer to the source, their measurements become better/more informative, which is shared with the other two sensors and `saves' them from divergence. This phenomenon is common in experiments and explains why Variation II performs well in terms of the median distance metric in Fig. \ref{fig:Internal-Compare} even though it uses no consensus in estimation and motion planning.

	\subsection{Sensitivity to the Initial Guess}\label{sec:sensitive-initial-guess}
	Our distributed algorithm is not only as effective in source seeking as the centralized implementation; it is also more robust in multiple ways. For example, the distributed algorithm is much less sensitive to the initial guesses of the source location. In the following experiments, we compare the sensitivity of the distributed and centralized implementation to initial guesses in terms of convergence and estimation error. Five sensors are used in all experiments. The locations of the sensors and the source are initialized in the same way as in the previous experiments. Random initial guesses $\hat{q}_0$ are given to estimators in both the distributed and centralized implementations. It is defined by
	\begin{equation}
	    \hat{q}_0 = q + \frac{D}{2}\cdot\begin{bmatrix}
	    \Delta_1\\
	    \Delta_2
	    \end{bmatrix},\Delta_{1,2}\overset{i.i.d.}{\sim}Unif([-1,1]).
	\end{equation}
	In other words, $\hat{q}_0$ is a random location in a box with side length $D$ centered at $q$. Different estimators may have different $\hat{q}_0$, but the level of deviation $D$ is kept the same among estimators for one experiment.
	
	Fig. \ref{fig:InitialGuess-Robustness} shows the distributed algorithm exhibit clear advantages over the centralized implementation in convergence and estimation when deviation $D$ increases. The distributed algorithm prevails because using more estimators adds robustness. Although the centralized implementation has more data for estimation, it runs only one estimator. If the initial guess is poor, the estimator may fail. In contrast, sensors of the distributed algorithm form an ensemble of estimators. As long as the majority of the initial guesses are decent, individual estimations should improve after rounds of consensus. 
	
	\begin{figure*}
		\centering
		\begin{subfigure}{0.24\linewidth}
			\includegraphics[width=\textwidth]{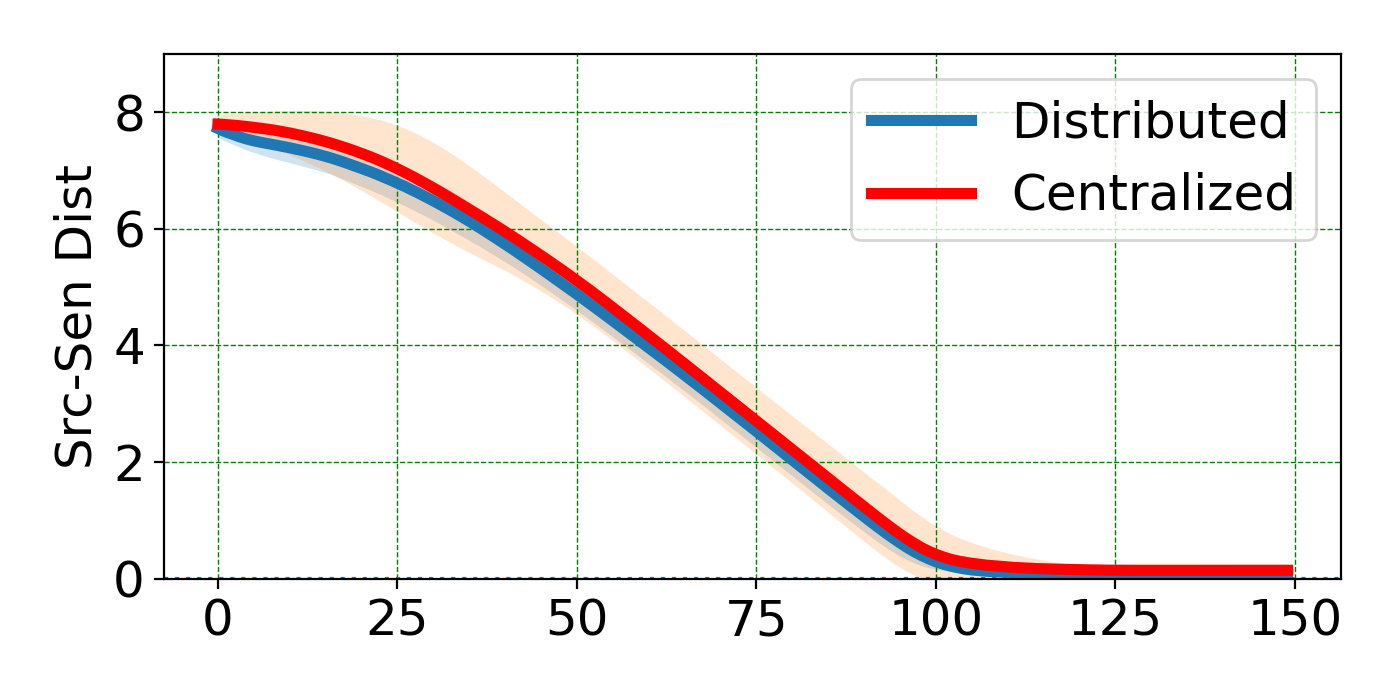}	
		\end{subfigure}
		\begin{subfigure}{0.24\linewidth}
			\includegraphics[width=\textwidth]{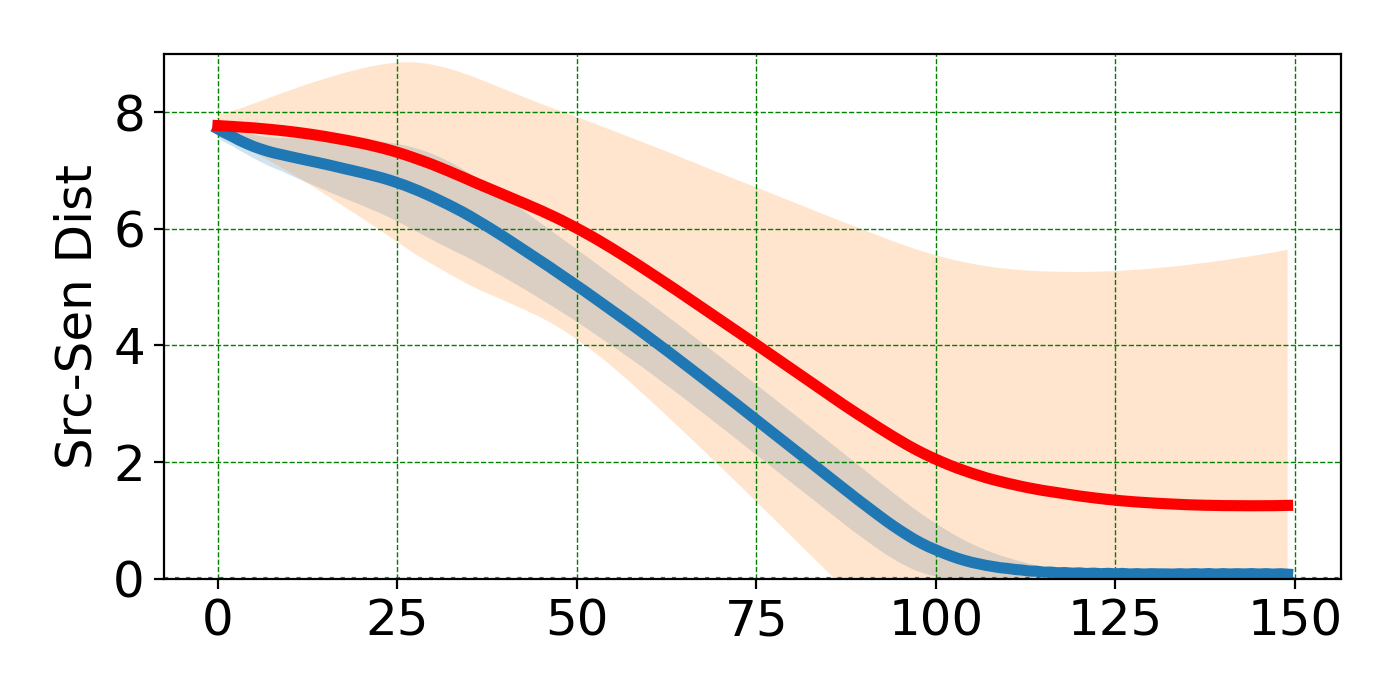}	
		\end{subfigure}
	\begin{subfigure}{0.24\linewidth}
		\includegraphics[width=\textwidth]{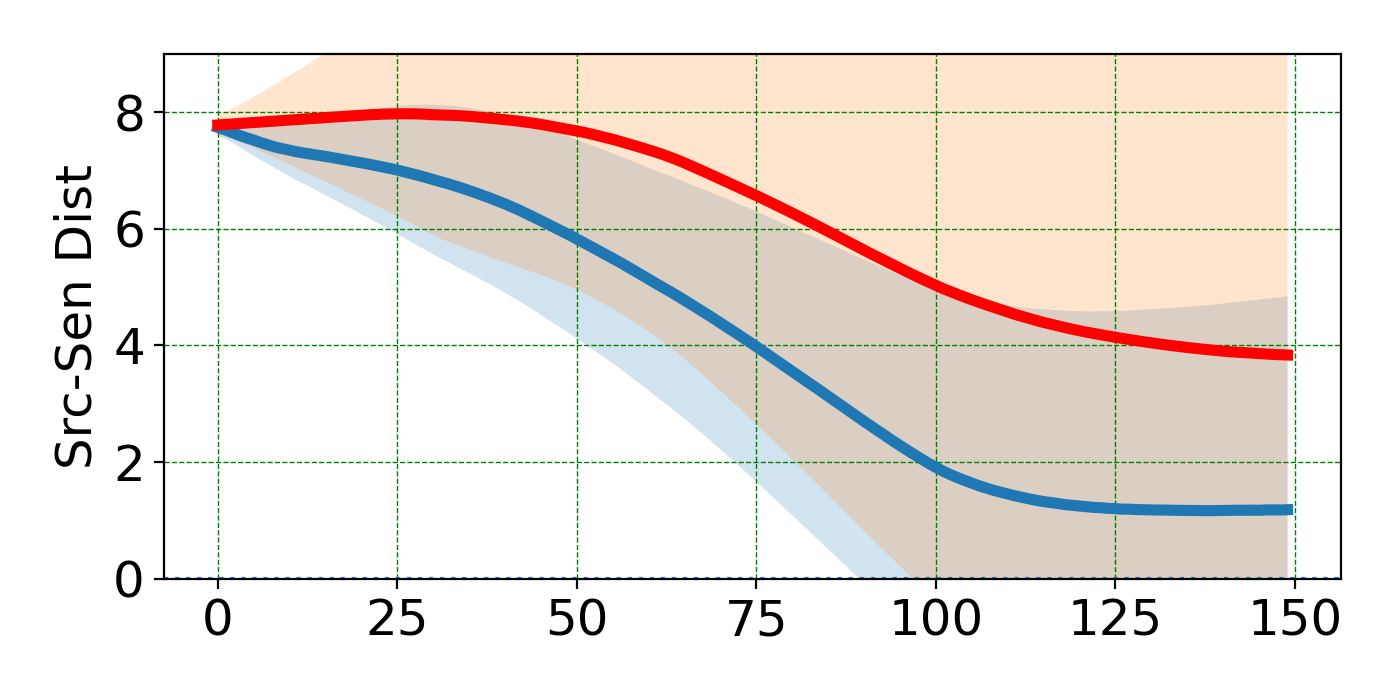}	
	\end{subfigure}
	\\
		\begin{subfigure}{0.24\linewidth}
			\includegraphics[width=\textwidth]{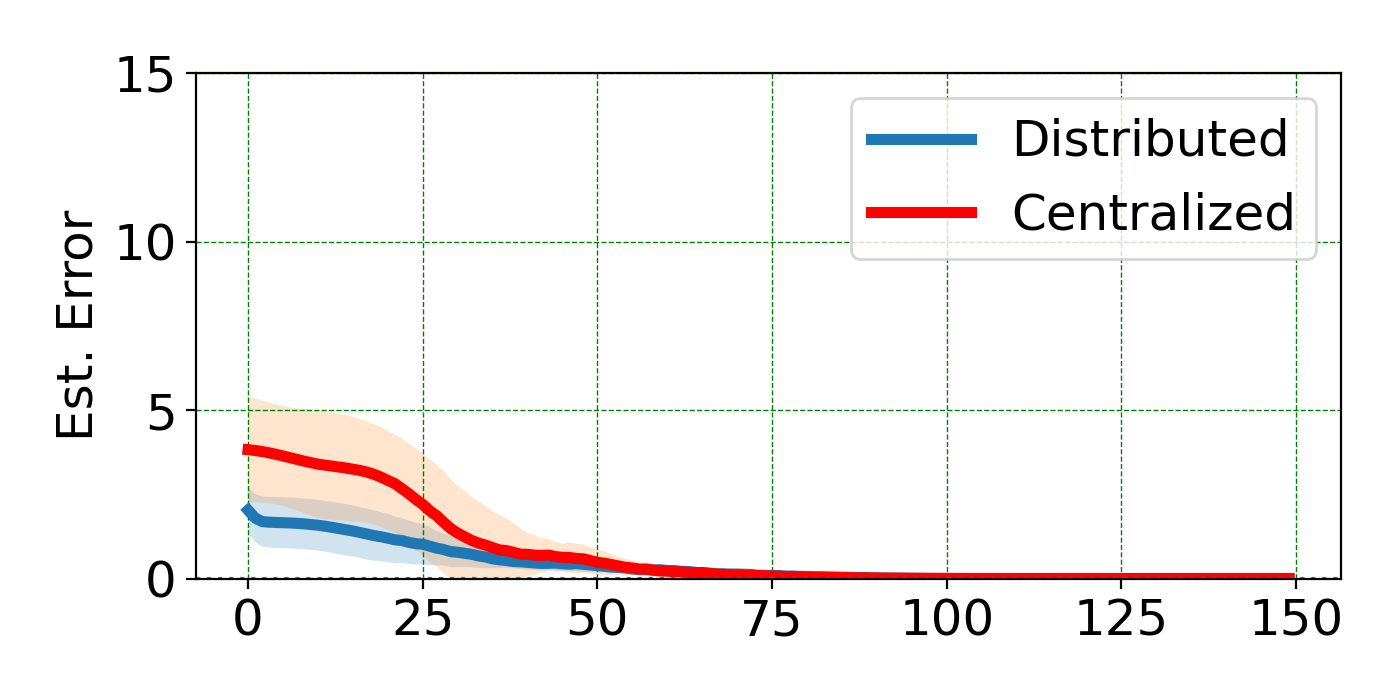}	\caption{$D=10$}
		\end{subfigure}
		\begin{subfigure}{0.24\linewidth}
	\includegraphics[width=\textwidth]{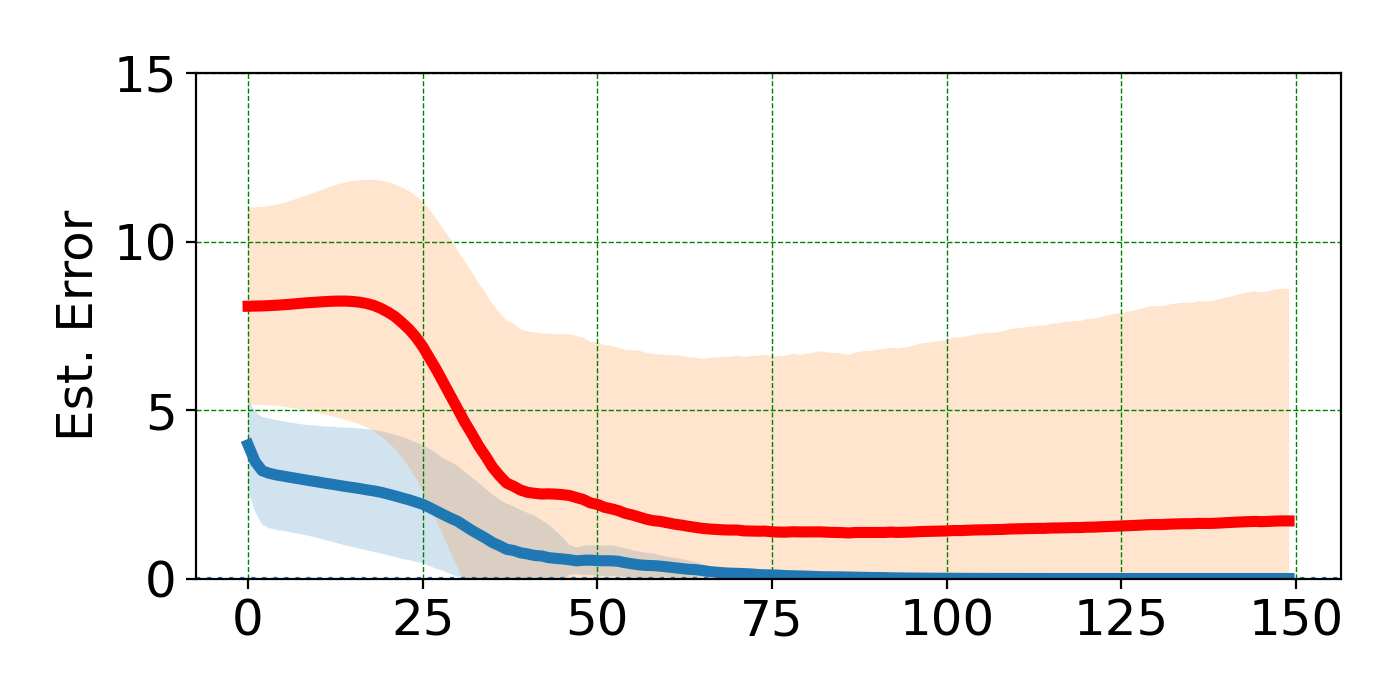}	\caption{$D=20$}
		\end{subfigure}
			\begin{subfigure}{0.24\linewidth}
		\includegraphics[width=\textwidth]{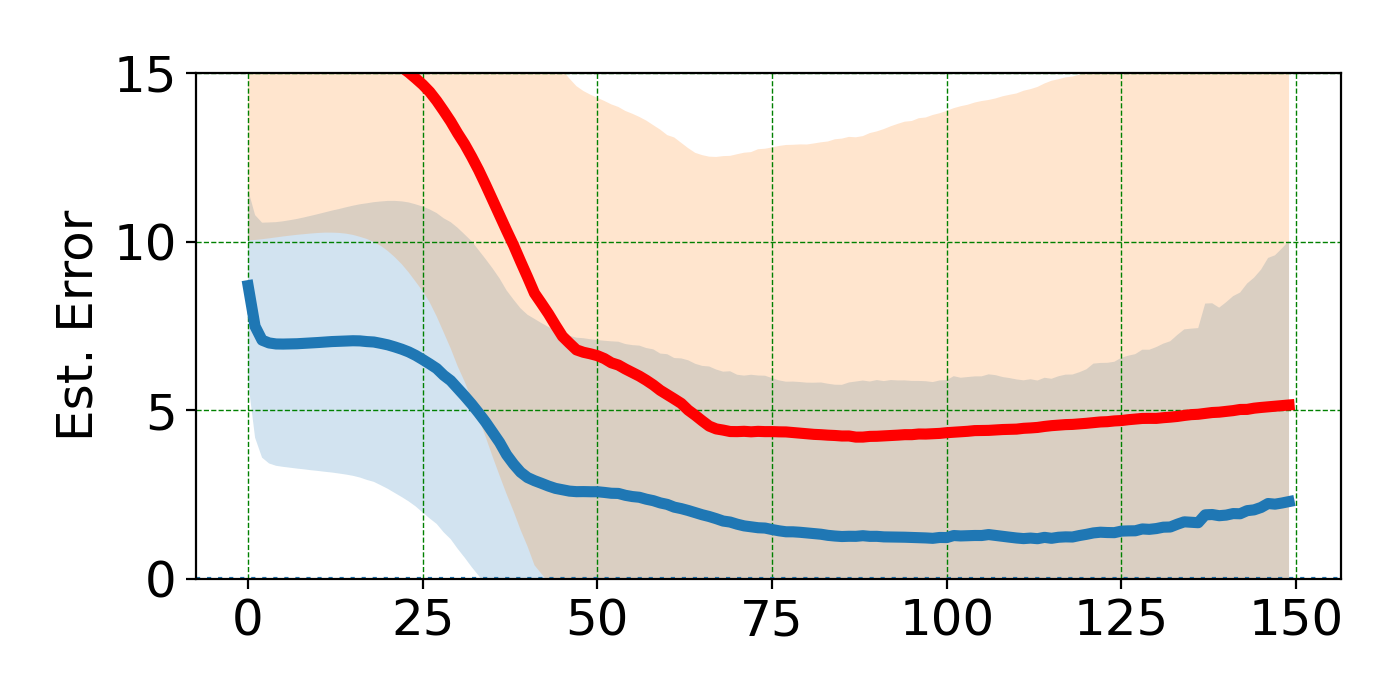}	\caption{$D=40$}
	\end{subfigure}
		\caption{How the initial guess deviation influences convergence and estimation error. The x-axes correspond to time steps.
    }\label{fig:InitialGuess-Robustness}
	\end{figure*}
	
	\subsection{Robustness to Communication Delay}\label{sec:delay}
Apart from being robust to errors in initial guesses, the distributed algorithm is more robust than the centralized implementation to communication delay. Note that communication delay typically grows with the incoming information rate\cite[M/M/1 system, Section 4.1.2]{ng2008queueing}, which usually increases with the number of incoming connections. Therefore, the delay increases with the number of incoming connections. The consequence is that the delay in centralized implementation increases with the number of sensors used, while the delay in the distributed algorithm can remain low as the network expands as long as the incoming degree of each node stays unchanged.
	
The following numerical experiments show that the distributed algorithm is more robust to communication delay. We assume each incoming connection to a node brings a $0.5$ time step delay in information passing. That is, if $m$ sensors are used in the centralized algorithm, the centralized controller receives all information with a delay of $\lfloor\frac{m}{2} \rfloor$ time steps; if there are $k$ neighbors connected to sensor $j$ in the distributed algorithm, sensor $j$ receives all information with a delay of $\lfloor\frac{k}{2} \rfloor$ time steps. The delay is modeled in this way since each sensor takes and sends out measurements to the neighbors at a roughly constant rate. Initialization is the same as in the previous experiments. In particular, each sensor in the distributed algorithm is connected to two neighbors regardless of the number of sensors. 
	
Fig. \ref{fig:Delay} shows that when the delay is in effect, the advantage of the centralized implementation over the distributed algorithm dissipates as the number of sensors grows. In the centralized implementation, the benefit of more sensor data does not offset the negative effect of increasing delay, and the performance gradually degrades. On the other hand, distributed sensors benefit from the increasing global information shared in the network while the delay remains low. Therefore, the centralized implementation is eventually overtaken by the distributed algorithm. 
	
	\begin{figure*}
		\centering
		\begin{subfigure}{0.24\linewidth}
			\includegraphics[width=\textwidth]{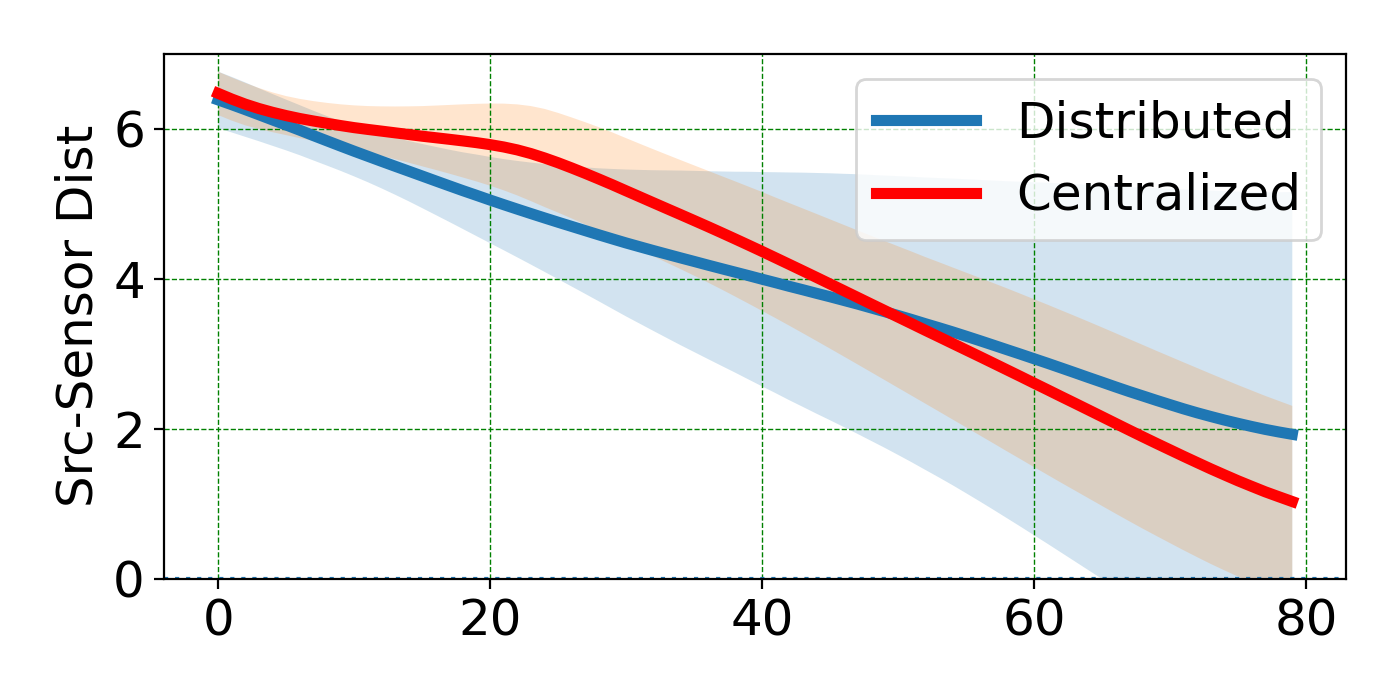}	
		\end{subfigure}
		\begin{subfigure}{0.24\linewidth}
			\includegraphics[width=\textwidth]{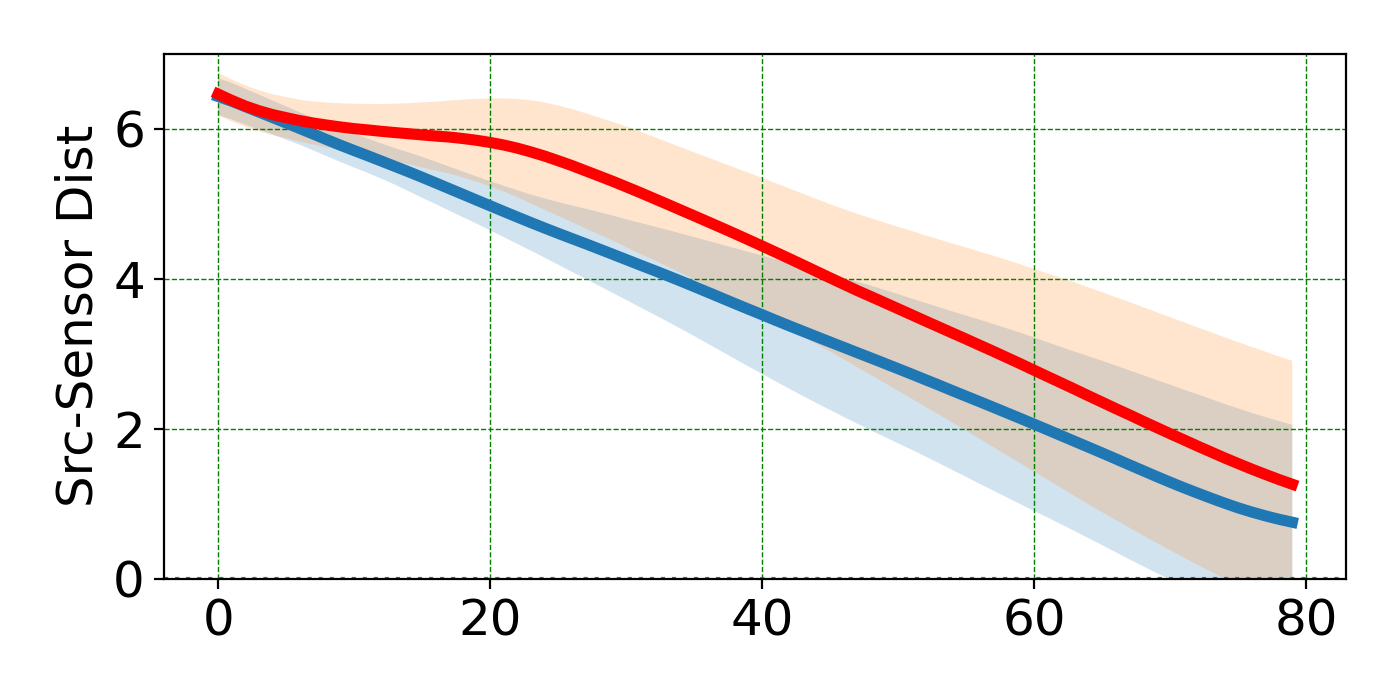}	
		\end{subfigure}
	\begin{subfigure}{0.24\linewidth}
		\includegraphics[width=\textwidth]{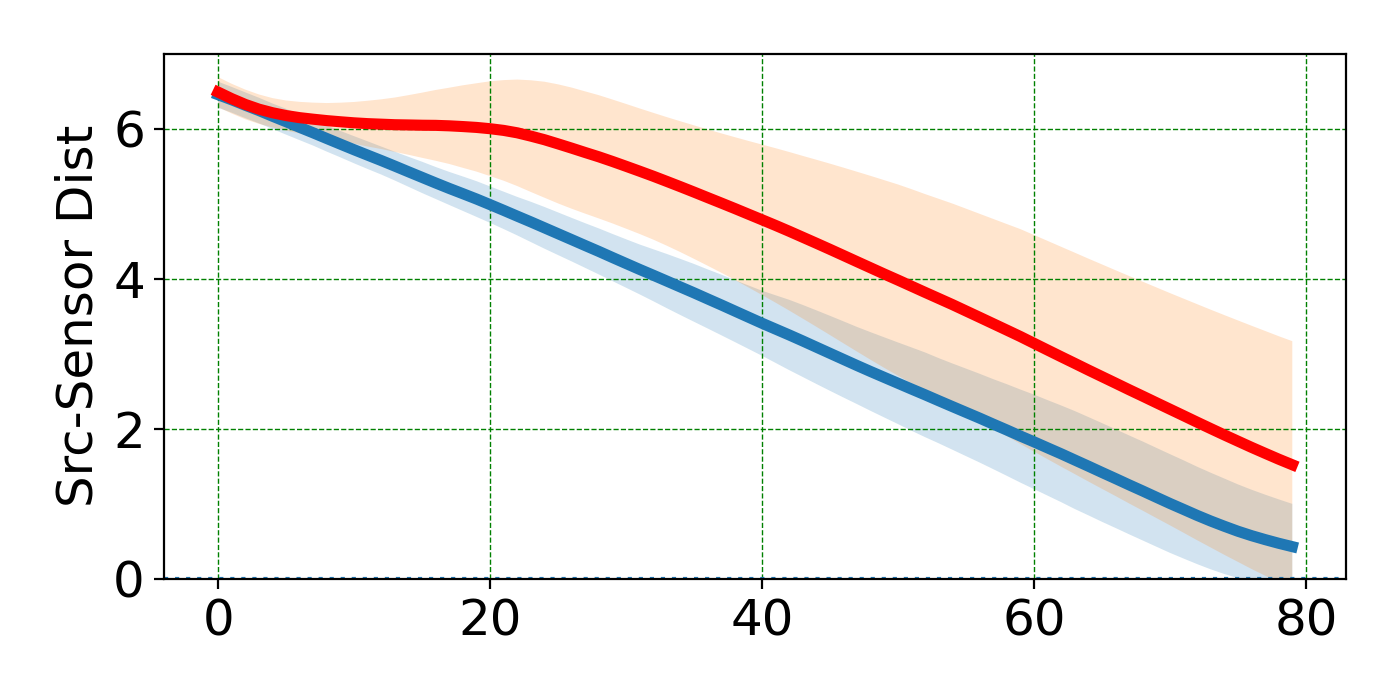}
	
	\end{subfigure}
	\\
		\begin{subfigure}{0.24\linewidth}
			\includegraphics[width=\textwidth]{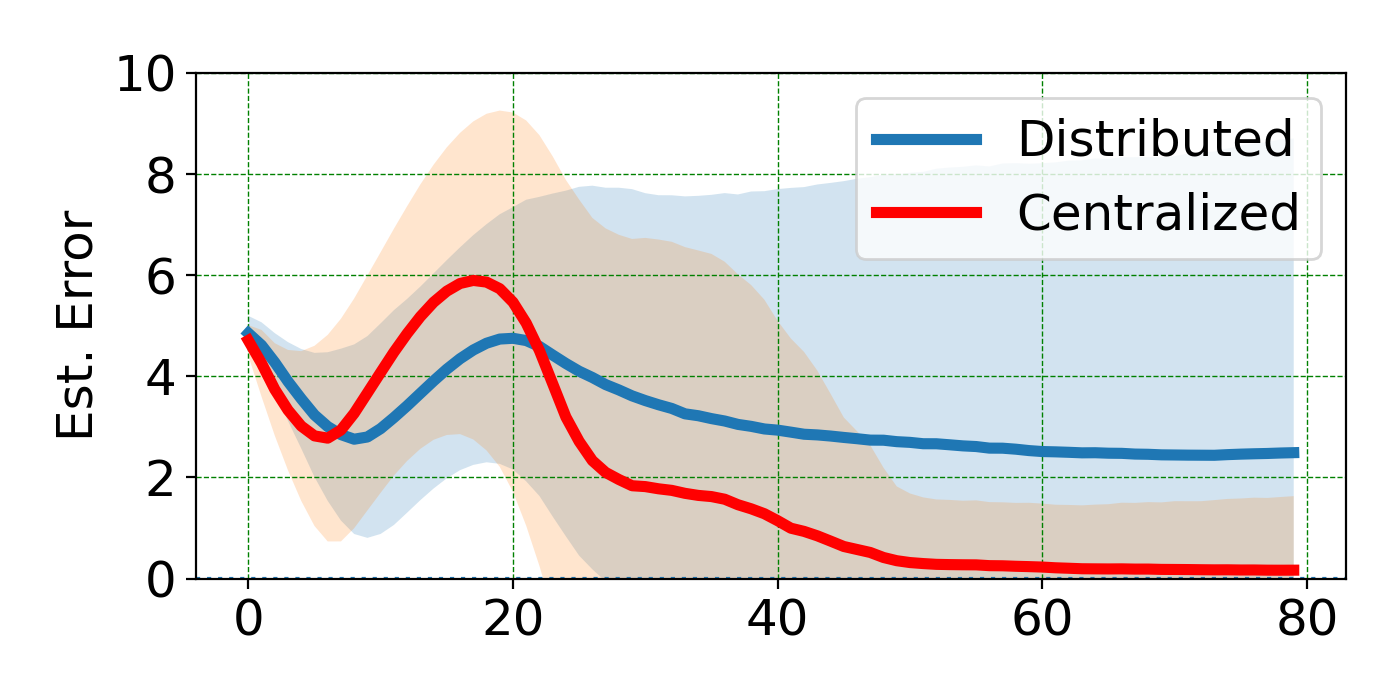}	\caption{10 Sensors with Delay}
		\end{subfigure}
		\begin{subfigure}{0.24\linewidth}
			\includegraphics[width=\textwidth]{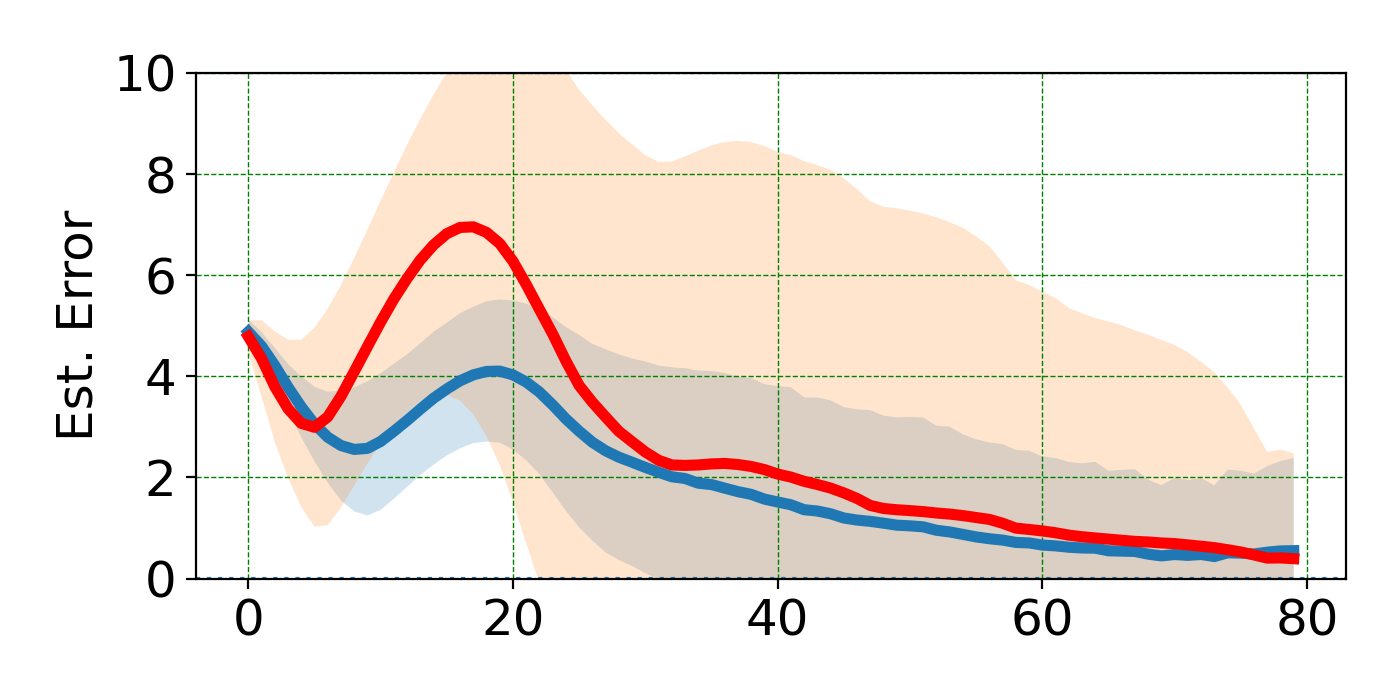}	\caption{20 Sensors with Delay}
		\end{subfigure}
			\begin{subfigure}{0.24\linewidth}
		\includegraphics[width=\textwidth]{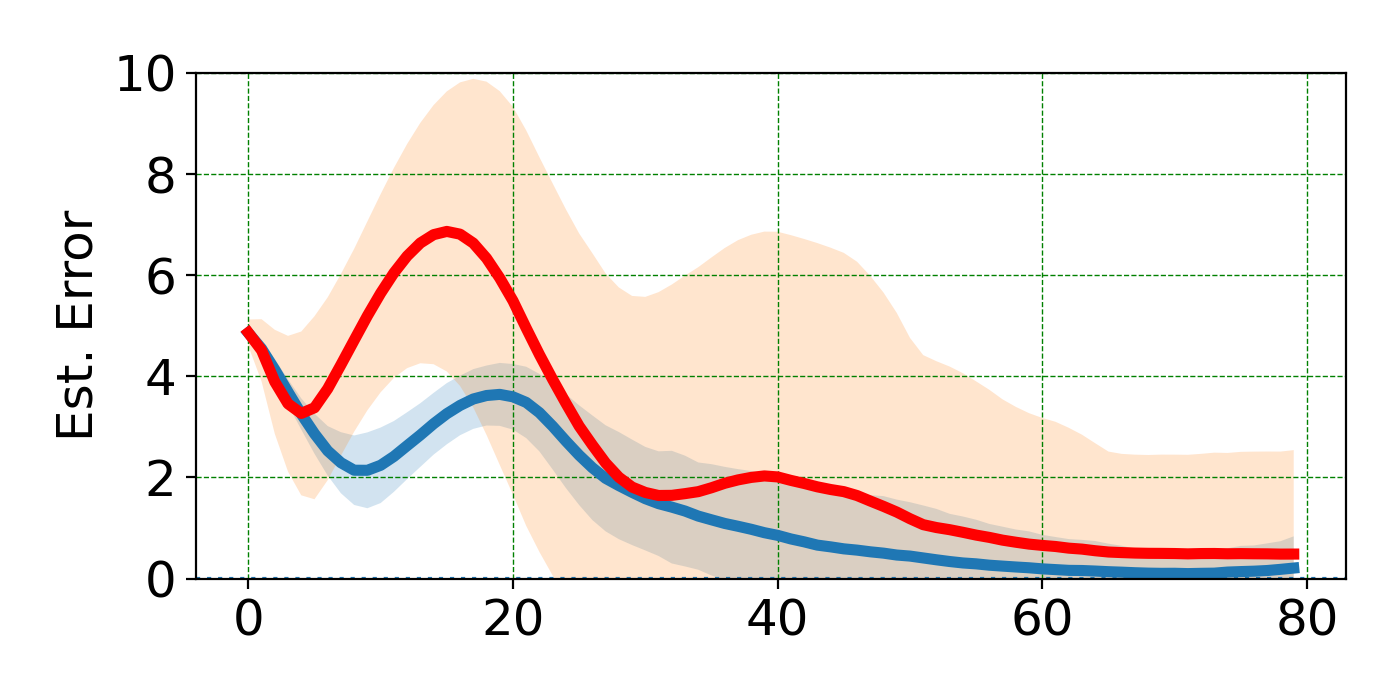}	\caption{40 Sensors with Delay}
	\end{subfigure}
		\caption{How the communication delay influences convergence and estimation. The x-axes correspond to time steps.
		}\label{fig:Delay}
	\end{figure*}

\subsection{Lab Implementations}\label{sec:real-lab-demo}
We implement our distributed algorithm on Turtlebot3 robotic ground vehicles to seek a light source in a dark room. The light source is an LED lamp. Light sensors are installed on Turtlebots to measure the local light intensity. The robots communicate through an ad-hoc WiFi network, each being an individual node. We only allow each robot to communicate with two other pre-determined robots throughout the experiment as a reasonable approximation of a non-fully connected network. An indoor motion capture system captures the positions of the robots and the source.
	
The algorithm is implemented as a ROS 2 package in Python language\cite{Tianpeng_ROS_2_Package_2021}. The package is installed on the onboard computer of every robot. By running the package, every robot gathers its position from the motion capture system, communicates information required in estimation and waypoint planning, makes estimations, and decides its control actions.
	
As a measurement model is needed to estimate the source location, we fit a function for each robot describing the relationship between their measurements and distances to the source before deploying them in source seeking. The function is defined by 
	
\begin{equation}\label{eq:exact_meas_model}
    g_i(r_i) = k_i(r_i-C_{1,i})^{-b_i}+C_{0,i},
\end{equation}
where $y_i,r_i$ are the measured data from the $i$th mobile sensor, and $k_i>0,\,b_i>0,\,C_{0,i},\,C_{1,i}$ are the associated model parameters.

In the implementation, a robot is deemed to have reached the source if its distance from the source is below $0.5m$. A separate program that runs independently from the source seeking algorithm monitors whether any robot has reached the source using the motion capture data and notifies the robots if they contact the source. 
The accompanying video shows that the robots implementing our distributed algorithm converge to the source consistently. 

\begin{remark}
Our lab resources constrain the hardware implementation; thus, the hardware experiments are less comprehensive than the numerical tests, an aspect of the experiments that could be improved. Nevertheless, the hardware experiments serve as a proof-of-concept showing the proposed algorithms are implementable on physical systems with satisfactory performance. We hope the combination of extensive numerical tests and hardware experiments has demonstrated how the algorithm works.
\end{remark}

\section{Conclusion}
This paper proposes a method for multi-robot source seeking that utilizes the Fisher Information associated with the estimated source location to direct the robots' movements. We show that improving the trace of the inverse of Fisher Information aids estimation while guiding the robots to converge to the source. The method is extended to the distributed setting where a central controller is absent, and the robots make individual decisions while communicating via a distributed network. We propose consensus schemes that make distributed estimation and gradient update effective. The algorithms are verified in numerical experiments and on physical robotic systems. 
	
Our work leads to a few future directions. One is to combine our work with Bayesian optimization methods to learn the measurement model while the robots seek the source. Progress in this direction removes the requirement of a known measurement model, making the method more flexible. It is also imperative to develop the ability to navigate an environment with obstacles during source seeking, especially when deploying our method in complex environments, such as search-and-rescue missions.

\appendices
\section{Motion Planning}\label{append:motion-planning}
The motion planner $MP$ can be viewed as a device that transforms the planned waypoints into low-level actuation of a mobile sensor: by applying $(u_i(1),\ldots,u_i(T))=MP(\tilde{p}_i(0),\ldots,\tilde{p}_i(T))$ to sensor $i$, the sensor will follow the trajectory of the waypoints $\tilde{p}_i(0),\ldots,\tilde{p}_i(T)$. The motion planner $MP$ typically requires the knowledge of the sensors' motion dynamics to compute the control inputs, and any method that can fulfill this task can be a motion planner in Algorithm~\ref{alg:centralized-alg}. In our implementation, the motion planner combines spline-based motion generation described in\cite{walambe_optimal_2016} and the Linear Quadratic Regulator (LQR). Regarding the planning horizon $T$ choice, in the Gazebo simulations and hardware implementation, we set $T=20$ to ensure stability in sensor movement and robustness to disturbances. Meanwhile, in numerical studies where the robot dynamics are not simulated, it suffices to set $T=1$.

\section{Proof of Propositions \ref{prop:reach-the-source} and \ref{prop:reach-the-source-2}}\label{append:reach-the-source}

Recall that $\hat{r}_i$ is the directional unit vector pointing from $q$ (the source) to $p_i$ (sensor $i$'s position), i.e.,
$$
\hat{r}_i=\frac{p_i-q}{\|p_i-q\|}.
$$
We first derive expressions for $\textup{FIM}$ and the loss function $L$. 
\begin{lemma}\label{prop:FIM-formula}
Suppose individual measurements are isotropic as in Assumption~\ref{assumption:measurement}. Then
	\begin{equation}\label{eq:FIM-formula}
		\textup{FIM} = \sum_{i=1}^m \left|g_i'(r_i)\right|^2 \hat{r}_i\hat{r}_i^T.
	\end{equation}
Moreover,
	\begin{equation}\label{eq:L_sum_inverse_eig_FIM}
		L(\textbf{p},q)=\tr(\textup{FIM}^{-1})= \sum_{i=1}^m \frac{1}{\lambda_i(\textup{FIM})}>0,
	\end{equation}
	where $\lambda_i(\textup{FIM})$ is the $i$th eigenvalue of $\textup{FIM}$.
\end{lemma}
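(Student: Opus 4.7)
My plan is to prove the two assertions sequentially: the formula \eqref{eq:FIM-formula} for $\textup{FIM}$ reduces to a chain-rule computation of $\nabla_q H$ followed by an expansion of its outer product, and the trace-sum identity \eqref{eq:L_sum_inverse_eig_FIM} follows from the spectral decomposition of the resulting symmetric matrix. The arguments are short; the only real subtlety lies in guaranteeing positive definiteness of $\textup{FIM}$ so that $L > 0$.

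For the first assertion, I would begin with the $i$-th measurement function, which by Assumption~\ref{assumption:measurement}.1 takes the form $h_i(p_i, q) = g_i(r_i)$ with $r_i = \|p_i - q\|$. Elementary differentiation of a Euclidean norm gives $\nabla_q r_i = -(p_i - q)/\|p_i - q\| = -\hat{r}_i$, so by the chain rule
$$\nabla_q h_i(p_i, q) = -g_i'(r_i)\,\hat{r}_i.$$
Stacking these as the columns of the Jacobian $\nabla_q H \in \mathbb{R}^{k \times m}$, the outer product decomposes column-by-column as
$$\nabla_q H\,(\nabla_q H)^\top = \sum_{i=1}^m \nabla_q h_i\,(\nabla_q h_i)^\top = \sum_{i=1}^m |g_i'(r_i)|^2\,\hat{r}_i \hat{r}_i^\top,$$
which is exactly the claimed expression for $\textup{FIM}$ (absorbing the proportionality constant from \eqref{eq:FIM_prop} into the normalization convention).

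For the second assertion, I would start from the definition of $L$ in \eqref{FIMLossFunction_ALG} combined with the identity just derived, so that $L(\mathbf{p}, q) = \tr(\textup{FIM}^{-1})$. Because each rank-one term $\hat{r}_i \hat{r}_i^\top$ is symmetric, $\textup{FIM}$ is real symmetric and admits an orthogonal eigendecomposition with nonnegative eigenvalues $\lambda_1, \ldots, \lambda_k$; the inverse has eigenvalues $1/\lambda_i$, and the trace identity $\tr(\textup{FIM}^{-1}) = \sum_{i=1}^m 1/\lambda_i(\textup{FIM})$ follows from the unitary invariance of the trace. To conclude $L > 0$, I must show every $\lambda_i > 0$, i.e.\ $\textup{FIM} \succ 0$. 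This is where I would invoke Assumption~\ref{assumption:measurement}.3: $\sum_i \hat{r}_i \hat{r}_i^\top \succ 0$ by hypothesis, and provided the weights $|g_i'(r_i)|^2$ are not collectively zero on the directions spanned by the $\hat{r}_i$, the weighted sum remains positive definite.

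The only step requiring care is the last positive-definiteness argument: Assumption~\ref{assumption:measurement}.3 is stated for the unweighted sum, so one must verify that rescaling the rank-one summands by $|g_i'(r_i)|^2$ does not collapse the matrix to something singular. Under the monotonicity and continuous differentiability assumptions on $g_i$, this is a generic condition, and in the numerical implementation it is enforced unconditionally through the $\delta I$ regularization noted in Remark~2. Beyond this minor bookkeeping, the proof amounts to a one-line chain-rule calculation and the standard spectral identity for the trace of the inverse of a symmetric positive-definite matrix.
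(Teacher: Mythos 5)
Your proposal is correct and follows essentially the same route as the paper: the chain rule gives $\nabla_q h_i=-g_i'(r_i)\hat r_i$, the column-wise outer-product expansion gives \eqref{eq:FIM-formula}, and the spectral identity gives \eqref{eq:L_sum_inverse_eig_FIM}. The only cosmetic difference is in the positivity step, where the paper makes your ``generic condition'' explicit via the bound $\textup{FIM}\succeq\bigl(\min_i|g_i'(r_i)|\bigr)\sum_{i=1}^m\hat r_i\hat r_i^{T}\succ 0$ together with Assumption~\ref{assumption:measurement}.3.
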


\begin{proof}
	By noticing that $r_i=\|p_i-q\|$ and using the chain rule of calculus, one can show that
	\begin{equation*}
		\begin{split}
			\nabla_q h_i (p_i,q) = -g_i'(r_i)\,\hat{r}_i.
		\end{split}
	\end{equation*}
	Denote $A = \nabla_q H(\textbf{p},q)$ and $A_i = \nabla_q h_i(p_i,q)$. Then from the definition of $H$ in~\eqref{eq:MeasurementFunc}, we have
	\begin{equation*}
		A = \begin{bmatrix}
			A_1|A_2|\cdots|A_m
		\end{bmatrix},
	\end{equation*}
	and so
	\begin{equation*}
			\textup{FIM} = AA^T = \sum_{i=1}^m A_i A_i^T = \sum_{i=1}^m \left|g_i'(r_i)\right|^2 \hat{r}_i\hat{r}_i^T.
	\end{equation*}
	
	Then~\eqref{eq:L_sum_inverse_eig_FIM} follows since $$\textup{FIM}\succeq (\min_i |g_i'(r_i)|)\sum_{i=1}^m  \hat{r}_i\hat{r}_i^T\succ 0, $$ and the eigenvalues of $\textup{FIM}^{-1}$ are the reciprocals of those of $\textup{FIM}$.
\end{proof}

With the formulas of $L$ and $\textup{FIM}$, and by further imposing the monotonicity property of $h_i$ in Assumption~\ref{assumption:measurement}.2, we can show that minimizing $L$ leads to reaching the source as stated in Proposition~\ref{prop:reach-the-source}.

\begin{proof}[Proof of Proposition~\ref{prop:reach-the-source}]

First, we take the maximum over all the summation coefficients in $\textup{FIM}$ to get
	$$
	\textup{FIM} \preceq \max_{i=1,2...m} \left|g_i'(r_i)\right|^2 \sum_{i} \hat{r}_i\hat{r}_i^T
	$$
Let $\lambda_{\min}(P)$ denote the smallest eigenvalue of a positive semi-definite matrix $P$. Then by~\eqref{eq:L_sum_inverse_eig_FIM}, we can see that
$$
	\frac{1}{\lambda_{\min}(\textup{FIM})}\leq L(\textbf{p},q)
	\leq \frac{k}{\lambda_{\min}(\textup{FIM})},
$$
	where we recall that $k$ is the dimension of $q$ and also the size of $\textup{FIM}$. Consequently,
	$$
	\begin{aligned}
		\frac{1}{L(\textbf{p},q)}
		& \leq \lambda_{\min}(\textup{FIM})\\
		& \leq \max_{i} \left|g_i'(r_i)\right|^2
		\cdot\lambda_{\min}\!\left(\sum_{i=1}^m \hat{r}_i\hat{r}_i^T\right).
	\end{aligned}
	$$
	Since we assume $\sum_{i=1}^m\hat{r}_i\hat{r}_i^T\succ 0$, there is $\lambda_{\min}(\sum_{i=1}^m \hat{r}_i\hat{r}_i^T)>0$. Then because $\lambda_{\min}(\sum_{i=1}^m \hat{r}_i\hat{r}_i^T)\leq m$, we have
	\begin{equation}
		\frac{1}{m\cdot \max_{i} \left|g_i'(r_i)\right|^2}\leq  L(\textbf{p},q),
	\end{equation}
	So by Assumption~\ref{assumption:measurement}, $|g_i'(r_i)|$ monotonically increases as $r_i$ decreases, so we have $\min_i r_i$ tends to decrease as $L(\textbf{p},q)$ decreases, which completes the proof.
\end{proof}

 \begin{proof}[Proof of Proposition~\ref{prop:reach-the-source-2}]
     Let $\bar{\mathbf{p}}=[\bar{p}_1^\top,\bar{p}_2^\top,...,\bar{p}_m^\top]^\top := \lim_{t\rightarrow \infty} \mathbf{p}(t)$ be the location vector where the sensors converge to. 
     
     We prove the theorem by contradiction. Assume $\min_i ||p_i(t)-q||$ does not converge to $0$. Then there must be constant $\epsilon>0$ such that
      \begin{equation}\label{eq:not-reaching-assumption}
          \bar{r}_i := ||\bar{p}_i-q||\geq\epsilon,\forall i.
      \end{equation}
     Together with the continuity of $L(\mathbf{p},q)$ in $\mathbf{p}$ on $\{\mathbf{p}:p_{i}\neq q,\forall i\}$, the assumption leads to the conclusion that the infimum of $L$ in the theorem should be replaced with the minimum, and is attainable at $\bar{\mathbf{p}}$.

    We now construct a contradiction to the assumption in Eq. \ref{eq:not-reaching-assumption} showing $\bar{\mathbf{p}}$ cannot be the minimum of $L$. Consider the location vector $\mathbf{b}=[b_1^\top,b_2^\top,...,b_m^\top]$ defined as
    $$
     b_i := q+\frac{\epsilon}{2}\cdot\frac{\bar{p}_i-q}{\bar{r}_i},\forall i
    $$
    So that $$||b_i-q|| = \frac{\epsilon}{2}.$$
    Note that $\mathbf{b}$ satisfies $b_i\neq q$ for all $i$. We also note that
    \begin{equation*}
        \begin{aligned}
            &\textup{FIM}(\mathbf{b},q) \\
            =& \sum_{i=1}^m |g_i'(\frac{\epsilon}{2})| (\frac{\bar{p}_i-q}{\bar{r}_i})(\frac{\bar{p}_i-q}{\bar{r}_i})^\top\\
            \succeq &\alpha \underbrace{\sum_{i=1}^m |g_i'(\bar{r}_i)| (\frac{\bar{p}_i-q}{\bar{r}_i})(\frac{\bar{p}_i-q}{\bar{r}_i})^\top}_{\textup{FIM}(\bar{\mathbf{p}},q)},\\
        \end{aligned}
    \end{equation*}
    where $$\alpha = \min_{i=1,2,...,m} \frac{|g_i'(\epsilon/2)|}{|g_i'(\bar{r}_i)|}.$$

    Note that $\alpha>1$ since $\bar{r}_i>\epsilon/2$ and $|g_i'(r_i)|$ is assumed strictly monotone in $r_i$ in the theorem. Therefore, 

    \begin{equation*}
        \begin{aligned}
            L(\mathbf{b},q) = &\sum_{i=1}^m \frac{1}{\lambda_i(\textup{FIM}(\mathbf{b},q))}\\
            \leq& \frac{1}{\alpha} \sum_{i=1}^m \frac{1}{\lambda_i(\textup{FIM}(\mathbf{\bar{p}},q))} = \frac{1}{\alpha} L(\mathbf{\bar{p}},q)\\
            <& L(\mathbf{\bar{p}},q)
        \end{aligned}
    \end{equation*}

    Therefore, the location vector $\mathbf{b}$, which is closer to $q$ than $\mathbf{\bar{p}}$, achieves a lower $L$ value than $\mathbf{\bar{p}}$. We have reached a contradiction, and Eq. \eqref{eq:not-reaching-assumption} cannot hold. So $\bar{r}_i=0$ for some $i$ and $\lim_{t\rightarrow\infty}\min_{i} ||p_i(t)-q|| = 0$.
 \end{proof}

\section{Distributed Algorithm Variants I and II}\label{append:Distributed-Variants}
The two variations are formally defined as follows:
	\begin{enumerate}
	\item Sensor $j$ in variation I and II no longer share a loss function with all sensors. Instead, it uses the following local loss that may be different for other sensors.
	\begin{equation}\label{eq:local_objective}
	\begin{split}
	    &L_{j}(\mathbf{p}_j,q) =\\
	    &\tr\left[\big(\nabla_q H_j(\mathbf{p}_j,q)\cdot\nabla_q H_j(\mathbf{p}_j,q)^\top\big)^{-1}\right]
	\end{split}
	\end{equation}
	This local loss reflects the Fisher information about $q$ contained in the neighborhood measurements as opposed to the entirety of sensor measurements.
		\item Sensor $j$ in variation II performs the \textbf{local Kalman Filter} update defined in \eqref{eq:local_KF} using neighborhood information $\mathbf{p}_j,\mathbf{y}_j$. No consensus is required.
		\item Sensor $j$ in variation I and II skips line 5-7 in Algorithm \ref{alg:distri-alg}, and changes line 8-9 to be:
		
		\begin{itemize}
			\item 	$\tilde{\mathbf{p}}_j(0)\gets\mathbf{p}_j$
			\item 	Calculate joint-waypoints $\tilde{\mathbf{p}}_j(1),...,\tilde{\mathbf{p}}_j(T)$
			\begin{equation}
				\begin{aligned}
					\tilde{\mathbf{p}}_j(t+1) \gets \tilde{\mathbf{p}}_j(t)-\alpha_t M_t \nabla_{\mathbf{p}_j}L_j(\mathbf{p}_j=\tilde{\mathbf{p}}_j(t),\hat{q}_j)\\
				\end{aligned}
			\end{equation}
			\item Extract sensor $j$'s waypoints $\tilde{p}_j(1),...,\tilde{p}_j(T)$ from joint-waypoints $\tilde{\mathbf{p}}_j(1),...,\tilde{\mathbf{p}}_j(T)$
		\end{itemize}
	\end{enumerate}
	Note that no consensus is required to compute the gradient of $L_j$. The remaining parts of the algorithm are identical to our algorithm and the variations. In particular, sensor $j$ in Variation I still performs consensus EKF update.

\section{Understanding the geometric properties of the trajectory of the covariance metric}\label{append:covariance}

Figure \ref{subfig:covariance} shows that the sensors under the covariance metric approach the source in straight lines and then spread out a little when close to the source. We can understand the initial straight-line behavior by analyzing the direction of the gradient when sensors are far away from the source.  Recall that the covariance metric defined in Section \ref{sec:Metric-Compare} is $\tr(P)=\tr((\nabla H R^{-1} \nabla H^\top+P_0^{-1})^{-1})$. We set $R=I$ in Figure \ref{fig:Metric-Compare-Traj}, so the covariance metric is effectively $\tr(P)=\tr((\nabla H \nabla H^\top+P_0^{-1})^{-1})$.  Through matrix calculus, we can show that the covariance metric's gradient is given by the following.

\begin{equation*}
\begin{aligned}
    G_j:=&\nabla_{p_j} \tr((\nabla H \nabla H^\top+P_0^{-1})^{-1})\\
    = &-2\nabla_{p_j} A_j (\nabla H \nabla H^\top+P_0^{-1})^{-2} A_j.
\end{aligned}
\end{equation*}

Where $A_j:=\nabla_q g_j(||p_j-q||)$, and
$A_j$, $\nabla_{p_j} A_j$ can be shown to be

\begin{equation*}
    \begin{aligned}
   A_j&= -g_j'(r_j)\hat{r}_j\\
   \nabla_{p_j}A_j&=-g_j''(r_j)\hat{r}_j \hat{r}_j^\top - \frac{g_j'(r_j)}{r_j} \hat{t}_j\hat{t}_j^\top\\
    \end{aligned}
\end{equation*}

Where $\hat{r}_j = \frac{p_j-q}{||p_j-q||}$ is the unit vector pointing from $q$ to $p_j$, and $\hat{t}_j$ is the unit vector orthogonal to $\hat{r}_j$.

Since we assume the individual measurement function $|g_j'(r_j)|$ gets smaller as $r_j$ increases, $P_0^{-1}$ dominates $\nabla H \nabla H^\top$ when $r_j$ are large(sensors are far away from the source).
We note that we set $P_0=I$ in Figure \ref{subfig:covariance}. Therefore, when sensors are far away from the source, the gradient for sensor $j$ is approximately

\begin{equation*}
    \begin{aligned}
    G_j\approx &-2\nabla_{p_j} A_j (P_0^{-1})^{-2} A_j\\
    = &-2(\nabla_{p_j} A_j)  A_j\\
    = &-2g_j'(r_j)g_j''(r_j) \hat{r}_j.
    \end{aligned}
\end{equation*}
Also, since the $g_j$ we consider are in the form of $g_j(r_j) = k_j(r_j-C_{1,j})^{-b_j}+C_{0,j}$, its first and second order derivatives satisfies $g'_j(r_j)\leq 0$ and $g''_j(r_j)\geq 0$. Therefore, the descent direction $-G_j$ is approximately aligned with the direction of $-\hat{r}_j$ when the sensors are far away from the source, causing the straight-line behavior.

When sensors are close to the source, analyzing the gradient direction is less trivial, but we do note that the covariance metric is roughly equivalent to $\tr(FIM^{-1})$ in this condition since the $\nabla H \nabla H^\top$ term dominates $P_0^{-1}$. We argued in Remark \ref{remark:metric-geo-prop}, that minimizing the $\tr(FIM^{-1})$ amounts to approaching the source and creating separation between the sensors simultaneously, which explains why the sensors spread out a little at the end.

\begin{figure}[ht]
\centering
\begin{subfigure}[t]{0.3\linewidth}
		\centering
		\includegraphics[width=\linewidth]{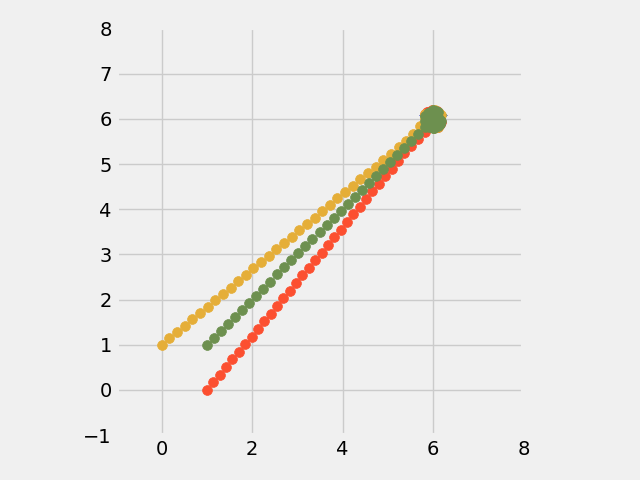}
		\caption{$P_0^{-1}=100 I$}\label{subfig:large-P0-inv}
	\end{subfigure}
\begin{subfigure}[t]{0.3\linewidth}
		\centering
		\includegraphics[width=\linewidth]{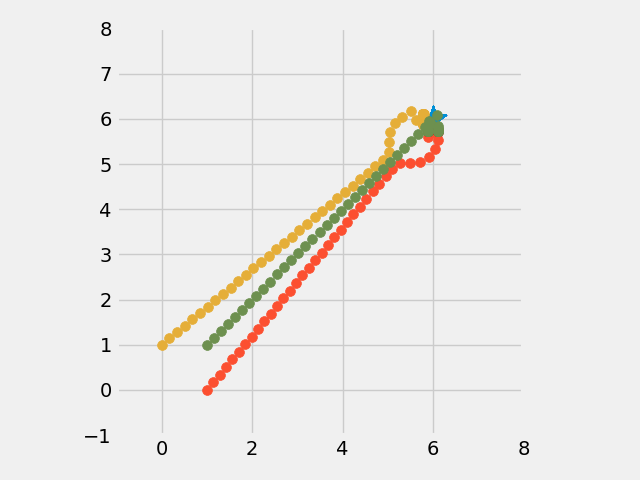}
		\caption{$P_0^{-1}=1.0 I$}\label{subfig:paper-P0-setup}
	\end{subfigure}
\begin{subfigure}[t]{0.3\linewidth}
		\centering
		\includegraphics[width=\linewidth]{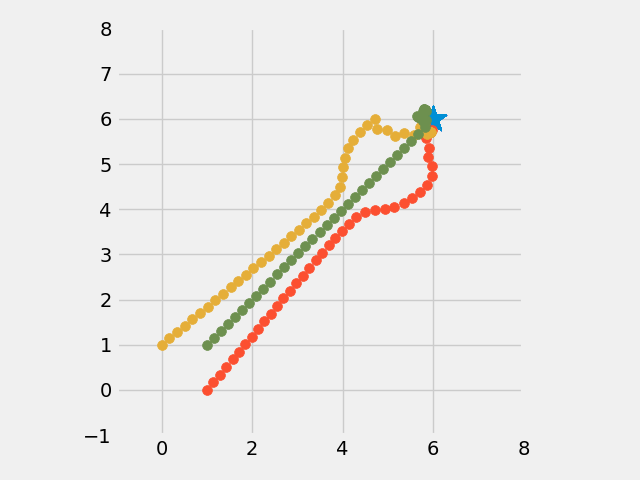}
		\caption{$P_0^{-1}=0.01 I$}
	\end{subfigure}
\begin{subfigure}[t]{0.3\linewidth}
		\centering
		\includegraphics[width=\linewidth]{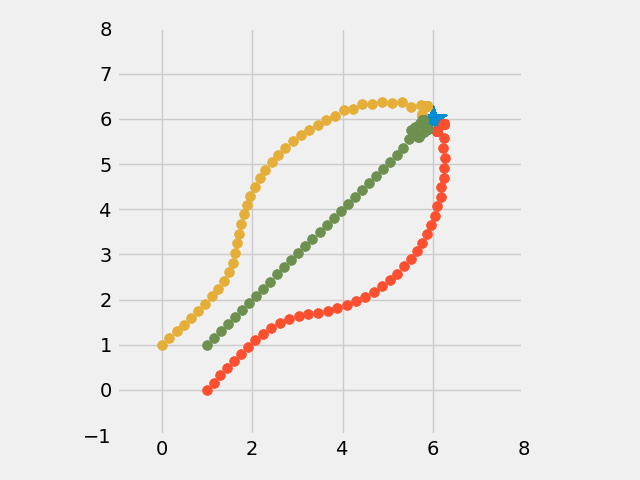}
		\caption{$P_0^{-1}=10^{-4} I$}
	\end{subfigure}
	\begin{subfigure}[t]{0.3\linewidth}
		\centering
		\includegraphics[width=\linewidth]{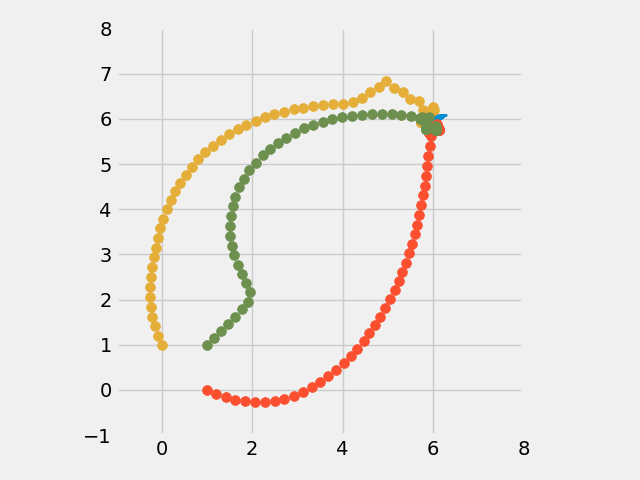}
		\caption{$P_0^{-1}=10^{-6} I$}\label{subfig;small-P0-inv}
	\end{subfigure}
	\caption{The effect of $P_0^{-1}$ on the size of the ``bump" of the gradient-descent trajectory of the covariance loss function. The matrix $R$ is fixed to be $1.0I$ for all figures above.}
	\label{fig:Covariance-bump}
\end{figure}

Fig. \ref{fig:Covariance-bump} provides further numerical evidence showing the dominance between $P_0^{-1}$ and $\nabla H \nabla H^\top$ terms influence the sensor movements. It shows that 
\begin{enumerate}
    \item When $P_0^{-1}$ is large, it dominates the covariance metric and gives rise to straight-line behavior; 
    
    \item On the other hand, when $P_0^{-1}$ becomes smaller, the $\nabla H \nabla H^\top$ term starts to dominate and the ``bump" in the trajectory becomes increasingly larger. 
    
\item When $P_0^{-1}$ becomes extremely small, the trajectory shows the behavior of encouraging separation among the sensors from the very start, which matches the behavior of the $\tr(FIM^{-1})$ metric, which is unsurprising since $FIM = \nabla H \nabla H^\top$.
\end{enumerate}

\bibliographystyle{IEEEtran}
\bibliography{citations}
 
\begin{IEEEbiography}[{\includegraphics[width=1in,height=1.25in,clip,keepaspectratio]{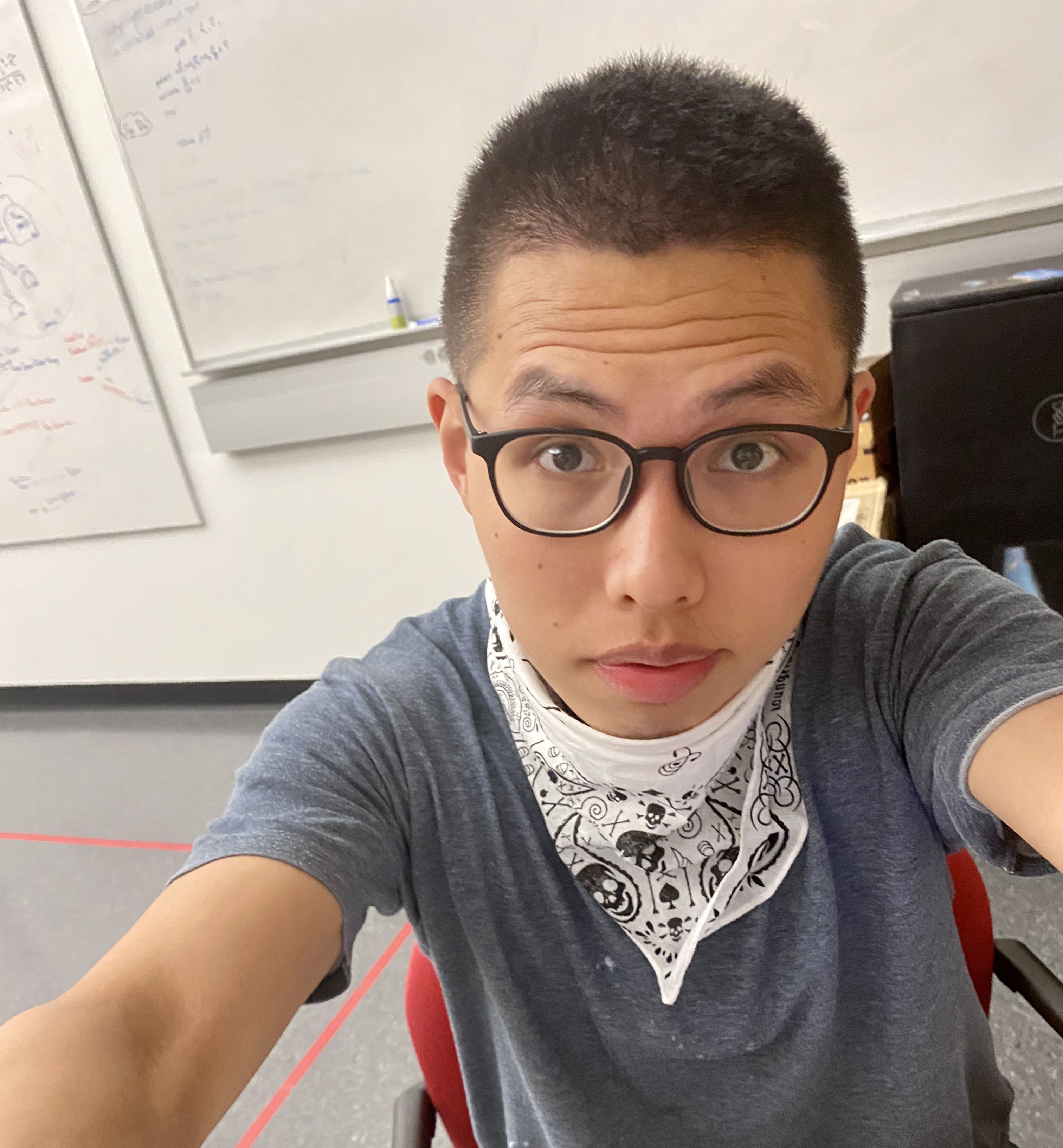}}]{Tianpeng Zhang}
received the B.S. degree in Computer Science and Mathematics from The Hong Kong University of Science and Technology in 2015. He is currently pursuing a Doctoral degree in Applied Mathematics at Harvard University, where his research focuses on multi-agent vehicle teams, multi-armed bandit learning, and robotic manipulators.
\end{IEEEbiography}

\begin{IEEEbiography}[{\includegraphics[width=1in,height=1.25in,clip,keepaspectratio]{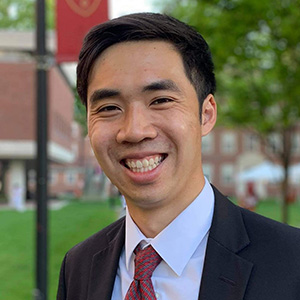}}]{Victor Qin} is a Ph.D. student in Aeronautics and Astronautics at the Massachusetts Institute of Technology. His interdisciplinary research focuses on the control, learning, and optimization of multi-agent networks, especially through incentive mechanisms and with an emphasis on unmanned air traffic management. The goal of his research is to highlight innovations in future semi-autonomous cyberphysical networks, and understand their interaction with economic and regulatory structures. Topics include cost-aware decentralized airspace allocation methods, cost-sharing frameworks for optimal network routing, and collision avoidance analysis of satellite constellations. He is a recipient of the NSF Graduate Research Fellowship and a Mathworks Fellow.
 \end{IEEEbiography}

\begin{IEEEbiography}[{\includegraphics[width=1in,height=1.25in,clip,keepaspectratio]{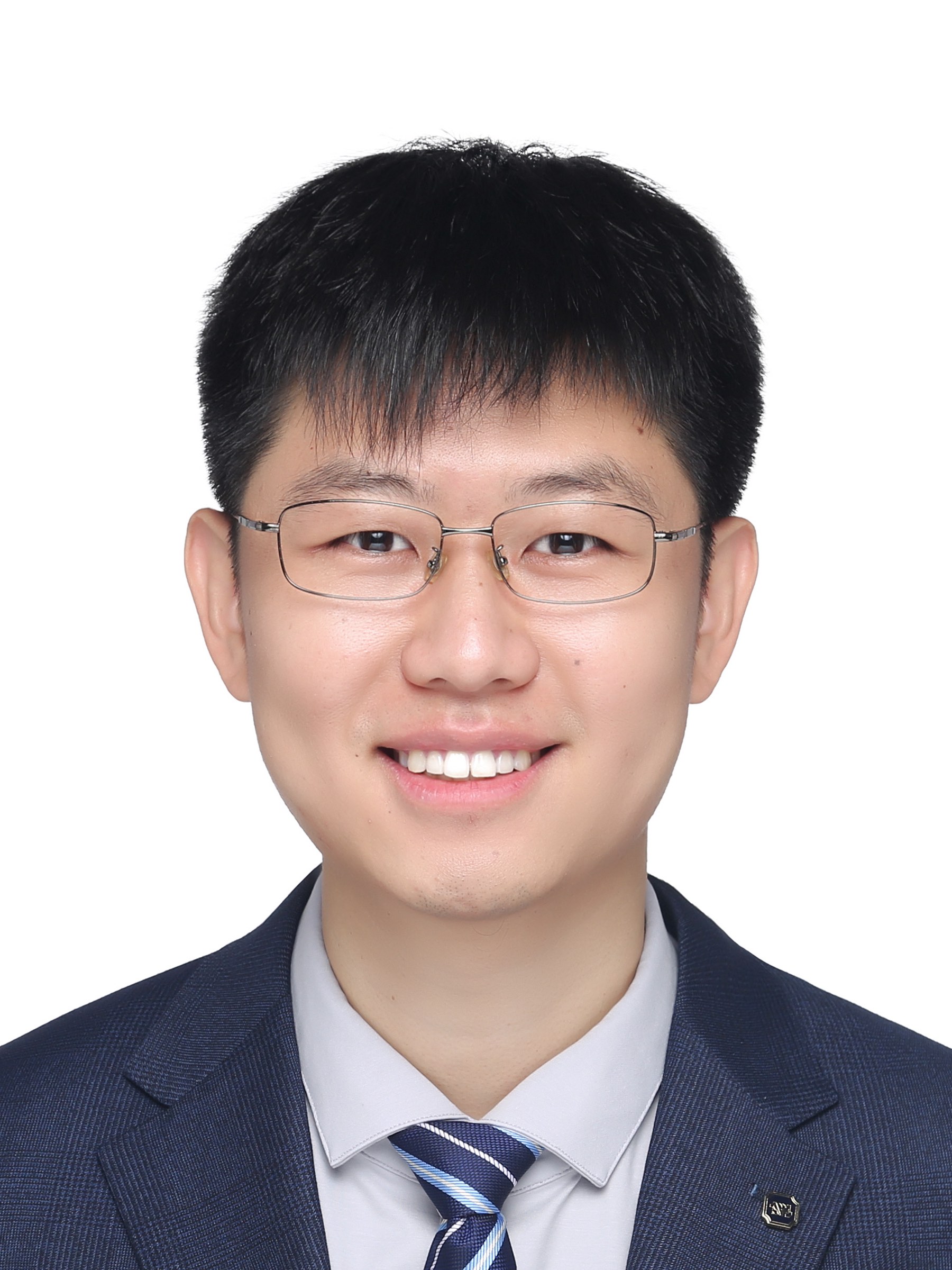}}]{Yujie Tang} received his bachelor’s degree in Electronic Engineering from Tsinghua University in 2013 and his Ph.D. degree in Electrical Engineering from the California Institute of Technology in 2019. He is currently an Assistant Professor in the Department of Industrial Engineering and Management at Peking University. Before joining Peking University, he was a postdoctoral fellow in the School of Engineering and Applied Sciences at Harvard University. His research interests lie in distributed optimization, control and learning and their applications in cyber-physical systems.
\end{IEEEbiography}

\begin{IEEEbiography}[{\includegraphics[width=1in,height=1.25in,clip,keepaspectratio]{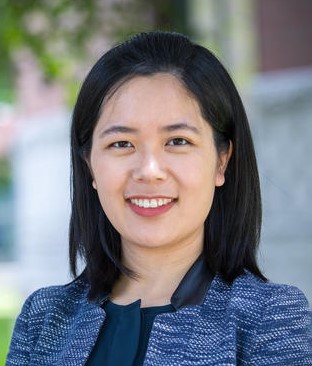}}]{Na Li}\ (Senior Member, IEEE) is the Winokur Family Professor of Electrical Engineering and Applied Mathematics at Harvard University. She received her Bachelor's degree in Mathematics from Zhejiang University in 2007 and her Ph.D. in Control and Dynamical systems from California Institute of Technology in 2013. She was a postdoctoral associate at the Massachusetts Institute of Technology from 2013-2014. She has held a variety of short-term visiting appointments, including the Simons Institute for the Theory of Computing, MIT, and Google Brain. Her research lies in the control, learning, and optimization of networked systems, including theory development, algorithm design, and applications to real-world cyber-physical societal system. She received the NSF career award (2016), AFSOR Young Investigator Award (2017), ONR Young Investigator Award(2019),  Donald P. Eckman Award (2019), McDonald Mentoring Award (2020), the IFAC Manfred Thoma Medal (2023), along with some other awards.
\end{IEEEbiography}

\end{document}